\theoremstyle{plain}
\newtheorem{theorem}{Theorem}[section]
\newtheorem{proposition}[theorem]{Proposition}
\newtheorem{lemma}[theorem]{Lemma}
\theoremstyle{definition}
\newtheorem{assumption}[theorem]{Assumption}
\theoremstyle{remark}
\newtheorem{remark}[theorem]{Remark}
\icmltitlerunning{{\red ATA}: {\red A}daptive {\red T}ask {\red A}llocation for Efficient Resource Management in Distributed Machine Learning}
\definecolor{bgcolor}{rgb}{0.76,0.88,0.50}
\definecolor{bgcolor0}{rgb}{0.93,0.99,1}
\definecolor{bgcolor1}{rgb}{0.8,1,1}
\definecolor{bgcolor2}{rgb}{0.8,1,0.8}
\definecolor{bgcolor3}{rgb}{0.50,0.90,0.50}
\definecolor{mydarkgreen}{RGB}{39,130,67}
\definecolor{mydarkorange}{RGB}{236,147,14}
\definecolor{mydarkred}{RGB}{192,47,25}
\definecolor{ruby}{RGB}{155,17,30}
\definecolor{chili}{RGB}{191,0,0}
\definecolor{sangria}{RGB}{146,0,10}
\definecolor{burgundy}{RGB}{128,0,32} 
\definecolor{darkred}{RGB}{132,0,0} 
\definecolor{cherry}{RGB}{192,0,0} 
\definecolor{blue}{RGB}{0,0,255}
\newcommand{\red}{\color{cherry}}
\newcommand{\algname}[1]{{\sf\footnotesize\red#1}\xspace}
\newcommand{\norm}[1]{\left\| #1 \right\|}
\newcommand{\infnorm}[1]{\left\| #1 \right\|_{\infty}}
\newcommand{\sqnorm}[1]{\left\| #1 \right\|^2}
\newcommand{\abs}[1]{\left| #1 \right|}
\newcommand{\R}{\mathbb{R}} 
\newcommand{\E}[1]{\mathbb{E}\left[#1\right]}
\newcommand{\ExpSub}[2]{{\mathbb{E}}_{#1}\left[#2\right]}
\newcommand{\hmu}{\hat{\mu}}
\newcommand{\conf}{\mathrm{conf}}
\newcommand{\cA}{\mathcal{A}}
\newcommand{\cD}{\mathcal{D}}
\newcommand{\cN}{\mathcal{N}}
\newcommand{\cO}{\mathcal{O}}
\newcommand{\mA}{\mathbf{A}}
\newcommand{\ba}{\bm{a}}
\newcommand{\be}{\bm{e}}
\newcommand{\bh}{\bm{h}}
\newcommand{\bs}{\bm{s}}
\newcommand{\bx}{\bm{x}}
\theoremstyle{plain}
\newcommand{\eqdef}{:=}
\newcommand{\vast}{\bBigg@{4}}
\DeclareMathOperator*{\argmin}{arg\,min}
\DeclareMathOperator*{\argmax}{arg\,max}
\def\<{\left\langle}
\def\>{\right\rangle}
\def\[{\left[}
\def\]{\right]}
\def\({\left(}
\def\){\right)}
\theoremstyle{theorem}
\newenvironment{restate-theorem}[1]
{\innercustomthm}
{\endinnercustomthm}
\newenvironment{restate-lemma}[1]
{\innercustomlemma}
{\endinnercustomlemma}
\newenvironment{restate-proposition}[1]
{\innercustomproposition}
{\endinnercustomproposition}
\theoremstyle{remark}
\newenvironment{restate-remark}[1]
{\innercustomremark}
{\endinnercustomremark}
\newcommand*{\sketchproofname}{Sketch of Proof}
\begin{document}

\twocolumn[
\icmltitle{
    {\red ATA}: {\red A}daptive {\red T}ask {\red A}llocation for Efficient Resource Management \\
    in Distributed Machine Learning
    }



\icmlsetsymbol{equal}{*}

\begin{icmlauthorlist}
\icmlauthor{Artavazd Maranjyan}{kaust}
\icmlauthor{El Mehdi Saad}{kaust}
\icmlauthor{Peter Richt\'{a}rik}{kaust}
\icmlauthor{Francesco Orabona}{kaust}
\end{icmlauthorlist}

\icmlaffiliation{kaust}{King Abdullah University of Science and Technology (KAUST), Thuwal, Saudi Arabia}

\icmlcorrespondingauthor{Artavazd Maranjyan}{arto.maranjyan@gmail.com}
\icmlcorrespondingauthor{El Mehdi Saad}{mehdi.saad@kaust.edu.sa}
\icmlcorrespondingauthor{Peter Richt\'{a}rik}{peter.richtarik@kaust.edu.sa}
\icmlcorrespondingauthor{Francesco Orabona}{francesco@orabona.com}

\icmlkeywords{Multi-Armed Bandit, UCB, adaptive task allocation, asynchronous methods, parallel methods, SGD}

\vskip 0.3in
]



\printAffiliationsAndNotice{}  

\begin{abstract}

    Asynchronous methods are fundamental for parallelizing computations in distributed machine learning. 
    They aim to accelerate training by fully utilizing all available resources.
    However, their greedy approach can lead to inefficiencies using more computation than required, especially when computation times vary across devices.
    If the computation times were known in advance, training could be fast and resource-efficient by assigning more tasks to faster workers.
    The challenge lies in achieving this optimal allocation without prior knowledge of the computation time distributions.
    In this paper, we propose \algname{ATA} ({\red A}daptive {\red T}ask {\red A}llocation), a method that adapts to heterogeneous and random distributions of worker computation times.
    Through rigorous theoretical analysis, we show that \algname{ATA} identifies the optimal task allocation and performs comparably to methods with prior knowledge of computation times.
    Experimental results further demonstrate that \algname{ATA} is resource-efficient, significantly reducing costs compared to the greedy approach, which can be arbitrarily expensive depending on the number of workers.
\end{abstract}

\section{Introduction}
\label{section:introduction}

In this work, we address a very general yet fundamental and important problem arising in various contexts and fields.
In particular, there are $n$ workers/nodes/devices collaborating to run some iterative algorithm which has the following structure:
\begin{itemize}
    \item In order to perform a single iteration of the algorithm, a certain number ($B$) of {\em tasks} needs to be performed.
    \item Each task can be computed by any worker, and the tasks are not temporally related.    
        That is, they can be computed in any order, in parallel, and so on.
    \item Whenever a worker is asked to perform a single task, the task will take a certain amount of time, modeled as a nonnegative random variable drawn from an unknown distribution specific to that worker.
    The stochastic assumption makes sense because in real systems computation times are not fixed and can vary with each iteration \citep{dean2013tail,chen2016revisiting, dutta2018slow, maranjyan2025mindflayer}.
    \item Each worker can only work on a single task at a time.
        That is, a worker processes all tasks it has to perform sequentially.
        Different workers work in parallel.
\end{itemize}



A natural goal in this setup is to make sure all tasks are completed as fast as possible (in expectation), which minimizes the (expected) time it takes for a single iteration of the algorithm to be performed provided that the task completion time is the dominant time factor of the iteration. Provided we are willing to waste resources, there is a simple solution to this problem, a Greedy Task Allocation (\algname{GTA}) strategy, which follows this principle:
{\em Make sure all workers are always busy working on some task, and stop once $B$ tasks have been completed.}
In \algname{GTA}, we initially ask all $n$ workers to start working on a task, and as soon as some worker is done with a task, we ask it to start completing another task. This process is repeated until $B$ tasks have been completed.


While \algname{GTA} minimizes the completion time, it can be immensely wasteful in terms of the total worker utilization time needed to collect all $B$ tasks. Indeed, consider the scenario with $n=1000$ workers and $B=10$ tasks. In this case, \algname{GTA} will lead to at least $n-B = 990$ unnecessary tasks being run in each iteration! This is highly undesirable in situations where the workers are utilized across multiple other jobs besides running the iterative algorithm mentioned above.


The goal of our work is to design new task allocation strategies, with rigorous theoretical support, that would attempt to minimize the expected completion time subject to the {\em constraint} that such wastefulness is completely eliminated. That is, we ensure that no more than $B$ tasks are completed in each round.

\subsection{A Motivating Example: Optimal Parallel SGD}
A key inspiration for our work, and the prime example of the general task collection problem described above, relates to  recent development in the area of parallel stochastic gradient descent (\algname{SGD}) methods.
Consider the problem of finding an approximate stationary point of the optimization problem
$$
    \min_{\bx \in \R^d} \left\{f(\bx) \eqdef \mathbb{E}_{\bm{\xi} \sim {\cal D}}\left[f_{\bm{\xi}}(\bx)\right]\right\},
$$
where $f_{\bm{\xi}}:\R^d\to \R$ are smooth nonconvex functions, and $f$ is assumed to be bounded from below.
We assume that 
$$
    \mathbb{E}_{\bm{\xi} \sim {\cal D}}\left[ \|f_{\bm{\xi}}(\bx) - \nabla f(\bx)\|^2\right] \leq \sigma^2
$$
for all $\bx\in \R^d$.

In a recent breakthrough, \citet{tyurin2024optimal} recently developed a parallel \algname{SGD} method, {\em optimal} in terms of a novel notion of complexity called {\em time complexity}, for solving the above problem with $n$ parallel workers, assuming that  it takes $\tau_i>0$ seconds to worker $i$ to compute a stochastic gradient of $f$ (this corresponds to a task). Their method, \algname{Rennala SGD}, corresponds to \algname{Minibatch SGD} of minibatch size $B$ (which depends on the target accuracy and $\sigma$ only), with the $B$ tasks (stochastic gradients) completed via \algname{GTA}. While minimax optimal in terms of time complexity, the \algname{GTA} task allocation strategy employed within \algname{Rennala SGD} can be wasteful, as explained above.

Recently, \citet{maranjyan2025ringmaster} proposed \algname{Ringmaster ASGD}, a fully asynchronous \algname{SGD} method, matching the optimal time complexity of \algname{Rennala SGD} and achieving optimality for arbitrary compute time patterns associated with the tasks (stochastic gradients), including random, as considered in our setup. However, \algname{Ringmaster ASGD} also employs a greedy task allocation strategy, leading to wastefulness.

Numerous other parallel/distributed methods involve the implementation of a task allocation strategy, including stochastic proximal point methods (task = evaluation of the stochastic prox operator), higher-order methods (task = evaluation of stochastic Hessian), and beyond. So, by addressing the general task allocation problem, we aim to tame the inherent resource wastefulness of all these methods.

%
%

%
%
%
%
%

\subsection{Contributions}

In this work, we formalize the task allocation problem as a \emph{combinatorial online learning problem with partial feedback and non-linear losses}. Then, we introduce \algname{ATA}, a lower-confidence bound-based algorithm designed to solve the proposed allocation problem. \algname{ATA} is agnostic to workers' computation times, and our theoretical analysis demonstrates that the total computation time achieved by our methods remains within a small multiplicative factor of the optimal computation time (i.e., the one attainable with full knowledge of the workers' arm distributions).
Additionally, we present \algname{ATA-Empirical}, a variant of \algname{ATA} that leverages a novel data-dependent concentration inequality and achieves better empirical results.
Finally, we validate our approach through numerical simulations.


\section{Related Work}
\label{section:related}

Most of the literature on asynchronous methods focuses on demonstrating advantages over their synchronous counterparts. For the simplest method, \algname{SGD}, this was only recently established by \citet{tyurin2024optimal}. With this result in place, the community can now shift its focus to reducing the overhead of asynchrony. Our work may be the first step in this direction.

In federated learning (FL) \citep{konevcny2016federated,mcmahan2016federated,kairouz2021advances}, several works account for system heterogeneity. The most well-known FL method, \algname{FedAvg} \citep{mcmahan2017communication}, operates by performing multiple local steps on workers, where each step can be viewed as a task. Some works adjust the number of local steps based on worker computation times \citep{li2020federated, maranjyan2022gradskip}, effectively adapting task assignments to worker speed. However, these methods rely on prior knowledge of these times rather than learning them adaptively, as we do.

We reformulate our problem as an online bandit problem. The literature on bandit algorithms is vast, and we refer the reader to \citet{LattimoreS18} for an introduction to this subject.
Our algorithm is based on the approach of using Lower Confidence Bounds (LCBs) on the true means of the arms. This idea, originally proposed by \citet{auer2002finite} for the classical Multi-Armed Bandit (MAB) setting, has since been widely adopted in the stochastic combinatorial bandits literature \citep{gai2012combinatorial, chen2013combinatorial, combes2015combinatorial, kveton2015tight}. Using LCBs instead of the empirical estimates of the means allows to trade-off optimally exploration and exploitation.

The ``greedy'' approach we employ, which involves selecting the action that minimizes the loss function based on lower confidence bounds instead of the unknown means, is a standard technique in the literature \citep{chen2013combinatorial, lin2015stochastic}.
However, note that our larger action space and the discontinuity of our loss function necessitates a more tailored analysis.
To the best of our knowledge, this is the first work addressing a non-continuous loss function in a stochastic combinatorial MAB-like framework.
To overcome this challenge, we exploit the specific structures of our loss function and action space to control the number of rounds where suboptimal actions are chosen.
Additionally, our procedure is computationally efficient.

\section{Problem Setup}
\label{sec:setup}

In this section, we formally describe the problem setup.

\subsection{Task Allocation Protocol}
We consider a system of $n$ workers, each responsible for computing gradients.
In each round, the allocation algorithm has a budget of $B$ units that must be allocated among the $n$ workers.
Each unit allocation will result in one gradient computation.
We denote by $K$ the total number of rounds, which is assumed to be unknown to the learner.
We denote by $X_{i,k}^{(u)}$ the computation time of the worker $i \in [n]:=\{1, 2, \dots, n\}$ for round $k \in [K]$ on its $u$-th gradient.
Consequently, the computation time required for worker $i$ to perform its task of computing $a_{i,k}$ gradients in round $k$ is given by 
$$
	\sum_{u=1}^{a_{i,k}} X_{i,k}^{(u)}
$$
if $a_{i,k} \geq 1$, and $0$ otherwise.

In each round $k$, the allocation algorithm must choose an allocation vector $\bm{a}_k \in \mathbb{N}^n$ such that $\norm{\bm{a}}_1 = B$, based on the information available prior to round $k$.
The feedback consists of $a_{i,k}$ observed times for all the chosen workers.
We will denote the action set by 
$$
	\mathcal{A} := \{ \bm{a} \in \mathbb{N}^n : \norm{\bm{a}}_1 = B \},
$$
where $\mathbb{N}$ is the set of natural numbers, including the $0$.

The objective of the allocation strategy in each round $k$ is to minimize the total computation time.
Hence, the objective is to minimize $C : \mathcal{A} \to \mathbb{R}_+$, the computation time that the optimizer waits to receive $B$ gradients using an allocation vector $\bm{a} \in \mathcal{A}$, defined as
\begin{equation}\label{eq:def_c}
	C(\bm{a}_k)
	:= \max_{i\in \text{supp}(\bm{a}_k)} \  \sum_{u=1}^{a_{i,k}} X_{i,k}^{(u)}~.
\end{equation}



\subsection{Modeling Assumptions}
We assume that the computation time of each worker $i \in [n]$ are i.i.d. drawn from a random variable $X_i$ following a probability distribution $\nu_i$.
We denote by $\bm{\mu} = (\mu_1, \dots, \mu_n)$ the vector of unknown means.
Hence, the random variables $(X_{i,k}^{(u)})$ with $u \in \{1, \dots, a_{i,k}\}$ are $a_{i,k}$ i.i.d. samples drawn from $\nu_i$.

We assume that the distribution $\nu_i$ of the computation times to be sub-exponential random variables. To quantify this assumption, we recall the definition of the sub-exponential norm, also known as the Orlicz norm, for a centered real-valued random variable $X$:
\begin{equation}\label{eq:def_se}
	\norm{X}_{\psi_1} := \inf\{C > 0: \mathbb{E}[\exp(\abs{X}/C)] \le 2\}~.
\end{equation}
Hence, formally we make the following assumption.
\begin{assumption}\label{a:sube}
	Let $\alpha \geq 0$.
	For all $i \in [n]$, $X_{i}$ is a positive random variable and $\norm{X_{i}-\mu_i}_{\psi_1} \le \alpha$.
\end{assumption}
In the remainder of this paper we denote $\alpha_i := \norm{X_i}_{\psi_1}$ for each $i \in [n]$, let $\alpha := \max_{i \in [n]} \alpha_i$. 

The considered class encompasses several other well-known classes of distributions in the literature, such as support-bounded and sub-Gaussian distributions.
Moreover, it includes exponential distributions, which are frequently used in the literature to model waiting or computation times in queuing theory and resource allocation in large distributed systems \cite{gelenbe2010analysis, gross2011fundamentals,hadjis2016omnivore, mitliagkas2016asynchrony, dutta2018slow, nguyen2022federated}.


\subsection{Objective of the Allocation Algorithm}
The main objective of this work is to develop an online allocation strategy with small expected total computation time, defined as 
$$
	\mathcal{C}_K := \sum_{k=1}^{K} \mathbb{E}[C(\bm{a}_k)].
$$
If the distributions of the arms were known in advance, the optimal allocation $\bm{a}^* \in \mathcal{A}$ would be selected to minimize the expected computation time per round, $\mathbb{E}[C(\cdot)]$, and this allocation would be used consistently over $K$ rounds, leading to the optimal total computation time 
$$
	\mathcal{C}^*_K = K \mathbb{E}[C(\bm{a}^*)].
$$

Our goal is to design a strategy that ensures the computation time $\mathcal{C}_K$ remains within a small multiplicative factor of the optimal time $\mathcal{C}^*_K$, plus an additional negligible term. Specifically, we aim to satisfy
\begin{equation}\label{eq:ata_obj}
	\mathcal{C}_K \leq \gamma \cdot \mathcal{C}^*_K + \mathcal{E}_K,	
\end{equation}
where $\gamma \geq 1$ is a constant close to $1$, and $\mathcal{E}_K$ is a negligible term compared to $\mathcal{C}^*_K$ when $K\to \infty$. This would assure us that in the limit we are a constant multiplicative factor away from the performance of the optimal allocation strategy that has full knowledge of the distributions of the computational times of the workers.

Finding a strategy solving the objective in \eqref{eq:ata_obj} presents several technical challenges. First, the action space $\mathcal{A}$ is discrete, and the nonlinearity of the computation time functions $C(\cdot)$ prevents reducing our objective to a convex problem. Second, the size of $\mathcal{A}$ is combinatorial, growing on the order of $\binom{n + B - 1}{B}$, which necessitates exploiting the inherent problem structure to develop efficient strategies. Third, because the workers' computation times are stochastic, any solution must account for uncertainty. Finally, the online setting forces the learner to balance exploration and exploitation under a limited allocation budget of $B$ units per round and partial feedback---only the computation times of workers who receive allocations are observed. This last point naturally suggests adopting a MAB approach.

In the next section, we show how to reduce this problem to a MAB problem and how to efficiently solve it.

\section{Adaptive Task Allocation}
\label{section:ata}

Here, we first show how to reduce the problem in \eqref{eq:ata_obj} to a \emph{non-linear} stochastic Multi-Armed Bandit (MAB) problem. Then, we propose an efficient algorithm for this formulation.

\subsection{Reduction to Multi-Armed Bandit and Proxy Loss}
\label{sec:reduction}
The stochastic MAB problem is a fundamental framework in sequential decision-making under uncertainty.
It involves a scenario where an agent must choose among a set of arms, each associated with an unknown reward distribution.
The agent aims to maximize cumulative reward (or equivalently minimize the cumulative loss) over time by balancing exploration (gathering information about the reward distributions) and exploitation (leveraging the best-known arm).
The challenge lies in the trade-off between exploring suboptimal arms to refine reward estimates and exploiting the arm with the highest observed reward, given the stochastic nature of the outcomes.
Using the terminology from bandit literature, here we will refer to each worker as an ``arm.''

However, differently from the standard MAB problem, we have a harder problem because $\E{C(\bm{a}_k)}$ depends on the joint distribution of all the arms in the support of $\bm{a}_k$, rather than on their expectations only.
This dependency potentially renders the task of relying on estimates of $\E{C(\bm{a})}$ for $\bm{a} \in \mathcal{A}$ computationally challenging due to the combinatorial nature of the set $\mathcal{A}$.

To solve this issue, our first idea is to introduce a \emph{proxy loss} $\ell : \mathcal{A}\times \mathbb{R}_{\ge 0}^n \to \mathbb{R}_{\ge 0}$, defined as
\begin{equation}\label{eq:def_l}
	\ell(\bm{a},\bm{\mu}) \eqdef  \max_{i \in [n]} \  a_{i} \mu_i~.
\end{equation}
Due to the convexity of $C(\cdot)$, the introduced proxy-loss underestimates the expected computation time.
However, in \Cref{sec:proof_2} we prove that this quantity also upper bounds the expected computation time up to a constant that depends on the distribution of the arms. In particular, for any $\bm{a} \in \mathcal{A}$, we show that
\begin{equation}\label{eq:enc}
	\ell(\bm{a}, \bm{\mu}) \le \E{C(\bm{a})} \le (1+4\eta \ln(B)) \ell(\bm{a}, \bm{\mu}),
\end{equation}
where $\eta$ is defined as
\begin{equation}\label{def:eta}
	\eta := \max_{i \in [n]} \frac{\alpha_i}{\mu_i}~.
\end{equation}
In words, $\eta$ provides an upper bound on the ratio between the standard deviation and the mean of the arms.
Note that in the literature, it is common to consider exponential, Erlang, or Gamma distributions, where the ratio $\eta$ is typically\footnote{For $\mathrm{Gamma}(\alpha, \lambda)$, $\sigma / \mu = 1 / \sqrt{\alpha}$, so the claim holds for $\alpha \geq 1$.} bounded by $1$.

The bound above will allow us to derive guarantees on the total computation time of an allocation strategy based on its guarantees for the proxy loss $ \ell(\cdot)$, up to a factor of the order $1 + 4\eta \ln(B)$.
We remark that in the special case where the arms' distributions are deterministic ($\eta = 0$) or the query budget is unitary ($B = 1$), the two targets $\E{C(\bm{a})}$ and $\ell$ exactly coincide.


\subsection{Comparison with the Combinatorial Bandits Setting}
Our setting is closely related to the Combinatorial Multi-Armed Bandits (CMAB) framework \citep{cesa2012combinatorial}, particularly due to the combinatorial nature of the action space and the semi-bandit feedback, where the learner observes outcomes from all chosen arms.
However, our formulation differs in two significant ways.
First, while CMAB typically involves selecting a subset of $n$ arms, resulting in an action space with a maximum size of $2^n$, our action space $\mathcal{A}$ has a cardinality of $\binom{n + B - 1}{B}$.
The ratio between these two can be extremely large, potentially growing exponentially with $n$.
Second, although most works in this domain assume a linear loss function in the arms' means, some notable exceptions address non-linear reward functions \citep{chen2013combinatorial, lin2015stochastic, chen2016combinatorial, wang2018thompson}.
However, these approaches generally rely on assumptions such as smoothness, Lipschitz continuity, or higher-order differentiability of the reward function.
In contrast, our loss function $\ell(\cdot, \bm{\mu})$ is not continuous with respect to the arms' means.
Finally, motivated by the practical requirements of our setting, we place a strong emphasis on computational efficiency that rules out most of the approaches based on CMAB.

\subsection{Adaptive Task Allocation Algorithm}
Now, we introduce our Adaptive Task Allocation algorithm (\algname{ATA}).
\algname{ATA} does not require prior knowledge of the horizon $K$ and only relies on an upper bound $\alpha$ satisfying $\alpha \ge \max_{i \in [n]} \norm{X_i - \mu_i}_{\psi_1}$ for the Orlicz norms of the arm distributions. Recall that $\norm{X_i - \mu_i}_{\psi_1} \le 2 \norm{X_i}_{\psi_1}$, so an upper bound on $\norm{X_i}_{\psi_1}$ also provides one for $\norm{X_i - \mu_i}_{\psi_1}$.
The core idea of the procedure is to allocate the workers based on \emph{lower confidence bound estimates} on the arm means $(\mu_i)_{i \in [n]}$, in order to balance exploration and exploitation.

For each arm $i \in [n]$ and round $k \in [K]$, let $K_{i,k}$ represent the number of samples collected from the distribution of arm $i$ up to round $k$.
At each round $k$, we compute an empirical mean, denoted by $\hat{\mu}_{i,k}$, using the $K_{i,k}$ samples obtained so far.
Based on these empirical means, we define the lower confidence bounds $s_{i,k}$ as
\begin{equation}
\label{eq:s}
	s_{i,k} = \left(\hat{\mu}_{i,k} - \text{conf}(i,k) \right)_{+},
\end{equation}
where $(x)_{+} = \max\{x, 0\}$ and $\text{conf}(\cdot, \cdot)$ is defined as
\begin{equation*}
	\conf(i, k) =
	\begin{cases} 
		2\alpha\(\sqrt{\frac{\ln(2k^2)}{K_{i,k}}}+\frac{\ln(2k^2)}{K_{i,k}}\), & K_{i,k} \geq 1, \\
		+\infty, &  K_{i,k} = 0~.
	\end{cases}
\end{equation*}

The term $\text{conf}(\cdot, \cdot)$ is derived from a known concentration inequality for sub-exponential variables with an Orlicz norm bounded by $\alpha$ (\Cref{prop:concentration} in the Appendix).

Given the confidence bounds $\bm{s}_k := (s_{1,k}, \dots, s_{n,k})$, the learner selects the action $\bm{a}_k \in \mathcal{A}$ at round $k$ that minimizes the loss $\ell(\cdot, \bm{s}_k)$, defined in \eqref{eq:def_l}.
While nonconvex, we show in \Cref{sec:RAS} that this optimization problem can be solved using a recursive routine, whose computational efficiency is $\mathcal{O}(n \ln(\min\{B, n\}) + \min\{B, n\}^2)$.

\begin{algorithm}[t]
	\caption{\algname{ATA} ({\red A}daptive {\red T}ask {\red A}llocation)}
    \label{alg:ata}
	\begin{algorithmic}[1]
        \STATE \textbf{Input}: allocation budget $B$, $\alpha>0$
        \STATE \textbf{Initialize}: empirical means $\hmu_{i,1} = 0$, usage counts $K_{i,1} = 0$, and usage times $T_{i,1} = 0$, for all $i \in [n]$
		\FOR{$k = 1,\ldots, K$}
        \STATE Compute LCBs $(s_{i,k})$ for all $i \in [n]$ using \eqref{eq:s}
        \STATE Find allocation:
        $
        \bm{a}_k \in  \argmin_{\ba \in \mathcal{A}} \ell(\ba, \bm{s}_k)
        $
		\STATE Allocate $a_{i,k}$ tasks to each worker $i \in [n]$
        \STATE {\color{orange} Update optimization parameters}
        \FOR{$i$ such that $a_{i,k} \neq 0$}
        \STATE $K_{i,k+1} = K_{i,k} + a_{i,k}$
        \STATE $T_{i,k+1} = T_{i,k} + \sum_{j=1}^{a_{i,k}} X_{i,k}^{(j)}$
        \STATE $\hmu_{i,k+1} = T_{i,k+1} / K_{i,k+1}$
        \ENDFOR
		\ENDFOR
	\end{algorithmic}
\end{algorithm}
\begin{remark}
	Line 7 of the algorithm acts as a placeholder for the optimization method, where the optimization parameters are updated using the quantities computed by the workers (e.g., gradients in the case of \algname{SGD}). In this view, the allocation algorithm is independent of the specifics of the chosen optimization algorithm. Refer to \Cref{section:other_methods} for further details.
\end{remark}

As last step, the feedback obtained after applying the allocation $\bm{a}_k$ is used to update the lower confidence bounds. The complete pseudocode for \algname{ATA} is provided in \Cref{alg:ata}.

\subsection{Upper-Bound on the Total Computation Time}
We provide guarantees for \algname{ATA} in the form of an upper bound on the expected total computation time required to perform $K$ iterations of the optimization procedure.
Recall that the proxy loss $\ell(\cdot, \bm{\mu})$ and the expected computation time are related through \eqref{eq:enc}.
This relationship and Theorem~\ref{thm:main} allow us to derive guarantees on the expected total computation time, denoted by 
$$
	\mathcal{C}_K := \sum_{k=1}^{K} \E{C(\bm{a}_k)}.
$$

We define the optimal allocation for minimizing the computation time as 
$$
	\bm{a}^* \in \argmin_{\bm{a} \in \mathcal{A}} \ \E{C(\bm{a})}.
$$ 
Consequently, the optimal expected total computation time in this framework is given by 
$$
	\mathcal{C}_K^{*} := K \E{C(\bm{a}^*)}.
$$
\begin{theorem}[Proof in \Cref{sec:proof_2}]
    \label{cor:main}
	Suppose Assumption~\ref{a:sube} holds and let $\eta := \max_{i \in [n]} \alpha_i / \mu_i$.
	Then, the total expected computation time after $K$ rounds, using the allocation prescribed by \algname{ATA} with inputs $(B, \alpha)$ satisfies
	$$
		\mathcal{C}_K \le \left(1+4\eta~\ln(B) \right)\mathcal{C}_K^* + \mathcal{O}(\ln K)~.
	$$
\end{theorem}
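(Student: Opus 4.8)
The plan is to prove the bound in two independent layers: a deterministic, per-action layer that converts between the true expected cost $\E{C(\bm{a})}$ and the proxy loss $\ell(\bm{a},\bm{\mu})$, and a statistical, cumulative layer that controls the proxy regret of the lower-confidence-bound rule. The crucial observation is that the multiplicative factor $(1+4\eta\ln B)$ comes entirely from the sandwich inequality \eqref{eq:enc}, while the additive $\mathcal{O}(\ln K)$ comes entirely from \Cref{thm:main}. So the whole argument reduces to (i) establishing \eqref{eq:enc} and (ii) gluing it to the proxy regret bound.

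First I would establish \eqref{eq:enc}. Writing $S_i := \sum_{u=1}^{a_i} X_{i,k}^{(u)}$, so that $C(\bm{a}) = \max_{i \in \text{supp}(\bm{a})} S_i$ and $\E{S_i} = a_i \mu_i$, the lower bound $\ell(\bm{a},\bm{\mu}) = \max_i a_i\mu_i = \max_i \E{S_i} \le \E{\max_i S_i} = \E{C(\bm{a})}$ is immediate, since $\E{S_i} \le \E{\max_j S_j}$ for every $i$. For the upper bound, set $M := \ell(\bm{a},\bm{\mu})$ and use $S_i \le M + (S_i - a_i\mu_i)_+$ to get $\E{C(\bm{a})} \le M + \E{\max_i (S_i - a_i\mu_i)_+}$. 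The second term is the expected maximum, over the at most $\min\{n,B\} \le B$ active coordinates, of centered sums of i.i.d.\ sub-exponential variables. Integrating the Bernstein tail of \Cref{prop:concentration}, $\Prob{S_i - a_i\mu_i > t} \le \exp(-c\min\{t^2/(a_i\alpha_i^2),\, t/\alpha_i\})$, against a union bound over the active coordinates produces a factor $\ln B$; using $\alpha_i \le \eta\mu_i$ (by definition of $\eta$) together with $a_i\mu_i \le M$ rescales the deviation so that $\E{\max_i (S_i - a_i\mu_i)_+} \le 4\eta\ln(B)\,M$, which is exactly \eqref{eq:enc}.

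With \eqref{eq:enc} in hand, the assembly is short. Because $\bm{a}_k$ is chosen from the feedback of rounds $1,\dots,k-1$, it is $\mathcal{F}_{k-1}$-measurable, whereas the fresh samples $X_{i,k}^{(u)}$ are independent of $\mathcal{F}_{k-1}$; applying the upper half of \eqref{eq:enc} to the (now fixed) action $\bm{a}_k$ conditionally gives $\ExpCond{C(\bm{a}_k)}{\mathcal{F}_{k-1}} \le (1+4\eta\ln B)\,\ell(\bm{a}_k,\bm{\mu})$, and taking total expectations and summing yields $\mathcal{C}_K \le (1+4\eta\ln B)\sum_{k=1}^K \E{\ell(\bm{a}_k,\bm{\mu})}$. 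Now \Cref{thm:main} bounds the cumulative proxy loss of \algname{ATA} by $\sum_{k=1}^K \E{\ell(\bm{a}_k,\bm{\mu})} \le K\min_{\bm{a}\in\mathcal{A}}\ell(\bm{a},\bm{\mu}) + \mathcal{O}(\ln K)$. Finally, letting $\bm{a}^*_\ell$ denote the proxy minimizer, the lower half of \eqref{eq:enc} together with optimality gives $\min_{\bm{a}}\ell(\bm{a},\bm{\mu}) = \ell(\bm{a}^*_\ell,\bm{\mu}) \le \ell(\bm{a}^*,\bm{\mu}) \le \E{C(\bm{a}^*)}$, so that $K\min_{\bm{a}}\ell(\bm{a},\bm{\mu}) \le \mathcal{C}_K^*$. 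Combining the three bounds gives $\mathcal{C}_K \le (1+4\eta\ln B)\mathcal{C}_K^* + (1+4\eta\ln B)\mathcal{O}(\ln K)$, and since $\eta$ and $B$ are fixed the trailing constant is absorbed into $\mathcal{O}(\ln K)$.

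The main obstacle is the upper half of \eqref{eq:enc}: controlling $\E{\max_i (S_i - a_i\mu_i)_+}$ so that the maximal-inequality cost appears as a multiplicative $4\eta\ln B$ rather than an uncontrolled additive term. This is where the non-standard features of the problem bite, since the maximum is over up to $B$ sums of differing lengths $a_i$, each a sub-exponential (not merely sub-Gaussian) deviation, so both regimes of the Bernstein tail must be integrated and then compared coordinatewise against $M = \max_i a_i\mu_i$. By contrast, once \eqref{eq:enc} and \Cref{thm:main} are granted, the remaining steps are routine, the only subtlety being the measurability argument that licenses applying the per-action inequality to the random allocation $\bm{a}_k$.
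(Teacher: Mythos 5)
Your proposal follows the same three-layer decomposition as the paper's proof: the sandwich inequality \eqref{eq:enc} applied conditionally (the paper phrases this via the tower rule, bounding $\mathbb{E}_{k-1}[C(\bm{a}_k)]$ for the $\mathcal{F}_{k-1}$-measurable action $\bm{a}_k$), the regret bound of \Cref{thm:main} for the cumulative proxy loss, and the Jensen step $\ell(\bar{\bm{a}},\bm{\mu}) \le \ell(\bm{a}^*,\bm{\mu}) \le \E{C(\bm{a}^*)}$ at the optimum, after which the factor $(1+4\eta\ln B)$ multiplying the $\mathcal{O}(\ln K)$ term is absorbed exactly as you say. The one genuine divergence is how the upper half of \eqref{eq:enc} is established. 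The paper does not integrate tail bounds: it uses a soft-max/MGF argument (\Cref{lem:exp_max}, built on \Cref{lem:mgf}), namely $\max_i Y_i \le \frac{1}{\lambda}\ln\bigl(\sum_i e^{\lambda Y_i}\bigr)$ with $\lambda = (1-1/m)/\alpha$, which yields $\E{\max_i Y_i} \le \max_i \E{Y_i} + 4\alpha\ln(m)$ in a few lines; substituting $\norm{Y_i - a_{i,k}\mu_i}_{\psi_1} \le a_{i,k}\alpha_i \le \eta\, a_{i,k}\mu_i$ then gives $(1+4\eta\ln B)\,\ell(\bm{a}_k,\bm{\mu})$ with the constant $4$ appearing automatically. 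Your Bernstein-tail-integration route (union bound over the at most $\min\{B,n\}$ active coordinates, then integrating both regimes of the tail) is a standard and structurally sound alternative, but the key claim $\E{\max_i (S_i - a_i\mu_i)_+} \le 4\eta\ln(B)\, M$ is asserted rather than derived, and that is precisely where the stated constant lives: a direct computation with threshold $t_0 = \ln B$ produces a constant noticeably larger than $4$ (roughly $6$ to $7$, since the $\sqrt{t\ln B}$ regime and the residual integral both contribute) unless the integration is optimized with care. Since \Cref{cor:main} asserts the explicit factor $1+4\eta\ln(B)$, you would either need to sharpen that step or replace it with the MGF argument; everything else in your proposal matches the paper's proof.
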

\begin{remark}
	The $\cO(\cdot)$ term hides an instance dependent factor. We will give its full specifics in the regret upper bound of \Cref{thm:main}.
\end{remark}

The bound in Theorem~\ref{cor:main} shows that the total expected computation time of \algname{ATA} remains within a multiplicative factor of $1 + 4\eta~\ln(B)$ of the optimal computation time $\mathcal{C}_K^*$, with an additional remainder term that scales logarithmically with $K$. Since $\mathcal{C}_K = \Omega(K)$, this additive term is negligible compared to $\mathcal{C}^*_K$. In practical scenarios, where computation time follows common distributions such as exponential or Gamma, the factor $\eta$ is typically of order $1$, and $\ln(B)$ remains relatively small for the batch sizes commonly used in optimization algorithms like \algname{SGD}.

The reader might wonder if the more ambitious goal of deriving bounds with a multiplicative factor of exactly $1$ is achievable. However, achieving this goal would require significantly more precise estimates of the expected computation time $\mathbb{E}[C(\bm{a})]$ for all $\bm{a} \in \mathcal{A}$. Since $\mathbb{E}[C(\bm{a})]$ depends on the joint distribution of all workers in the support $\bm{a}$, obtaining such precise estimates would come at the cost of computational efficiency in the allocation strategy.

We note that it is unsurprising that $\eta$ appears in the upper bound of \Cref{cor:main}, since having a heavier-tailed distribution increases the gap between $\ell(\bm{a}, \bm{\mu})$ and $\E{C(\bm{a})}$ through the convexity of $C(\cdot)$.
Instead, the factor $\ln(B)$ arises because $C(\cdot)$ is expressed as the maximum of up to $B$ random variables.
Moreover, in the edge cases where $\eta = 0$ (deterministic case) or $B=1$ (linear cost function), we guarantee that the expected computation time is at most an \emph{additive} factor away from the optimal one.

\section{Empirical Adaptive Task Allocation}
\label{sec:ata-em}


The \algname{ATA} procedure is based on a lower confidence bound approach that relies on concentration inequalities. These bounds play a key role in performance, as sharper concentration bounds lead to more accurate estimates and reduce exploration of suboptimal options. Since workers' computation times follow sub-exponential distributions, their concentration behavior is determined by the Orlicz norm of the corresponding variables.
In \algname{ATA}, the only prior knowledge available is an upper bound on the largest Orlicz norm among all arms. When the Orlicz norms of the arms' distributions vary significantly, this uniform bound may result in loose confidence intervals and inefficient exploration. 

To address this issue, we introduce \algname{ATA-Empirical}, which better adapts to the distribution of each arm, particularly its Orlicz norm. This adaptation is achieved through a novel data-dependent concentration inequality for sub-exponential variables.
Unlike \algname{ATA}, which depends on the maximum Orlicz norm, \algname{ATA-Empirical} accounts for the individual Orlicz norms of all arms, denoted by $(\alpha_i)_{i\in [n]}$. This improvement is reflected in the upper bounds on regret presented in \cref{section:theory}. In practice, this leads to improved performance at least some settings, as shown in our simulations in \cref{section:experiments}. However, this increased adaptivity comes with a trade-off since \algname{ATA-Empirical} requires an upper bound on the quantity $\eta = \max_i {\alpha_i/\mu_i}$, rather than a bound on the largest Orlicz norm. That said, for many distributions of interest, the ratios $\alpha_i/\mu_i$ across different arms tend to be of the same order, whereas their Orlicz norms can vary significantly.


The \algname{ATA-Empirical} procedure differs from \algname{ATA} only in the lower confidence bounds it uses. These bounds are derived from the novel concentration inequality in Lemma~\ref{lem:0} and are defined for arm $i \in [n]$ at round $k \in [K]$ as
\begin{equation}\label{def:s}
	\hat{s}_{i,k} =
	\hat{\mu}_{i,k} \left[ 1-2\,\eta\left(\sqrt{\frac{\ln(2k^2)}{K_{i,k}}}+ \frac{\ln(2k^2)}{K_{i,k}} \right) \right]_{+},
\end{equation}
where $\eta = \max_{i \in [n]} \alpha_i / \mu_i$.

The expected total computation time $\mathcal{C}_K$ of \algname{ATA-Empirical} satisfies the same guarantee presented in Theorem~\ref{thm:main}, but we obtain an improved multiplicative factor of the additive logarithmic term. The precise expressions of these factors are provided in the next section, and they show that the guarantees of \algname{ATA-Empirical} adapt to the Orlicz norms $\norm{X_i}_{\psi_1}$ of each arm, while the guarantees of \algname{ATA} depend on the maximum Orlicz norm $\max_i\norm{X_i}_{\psi_1}$.

\section{Theoretical Results}
\label{section:theory}

In this section, we sketch the derivation of \Cref{cor:main} for \algname{ATA} and \algname{ATA-Empirical}, through a regret analysis on the proxy losses.
We define the expected cumulative regret of the proxy loss $\ell(\cdot, \bm{\mu})$ after $K$ rounds
\begin{equation}\label{eq:proxy_loss}
	\mathcal{R}_K := \sum \limits_{k=1}^{K} \E{\ell(\bm{a}_k, \bm{\mu})} - K \cdot \ell(\bar{\bm{a}}, \bm{\mu}),
\end{equation}
where $\bar{\bm{a}} \in \argmin_{\bm{a} \in \mathcal{A}} \ell(\bm{a}, \bm{\mu})$ represents the optimal allocation over the workers. If multiple optimal actions exist, we consider the one returned by the optimization sub-routine used in \algname{ATA} (line 5 of Algorithm \ref{alg:sgd-ata}).

We derive upper bounds on the expected cumulative regret $\mathcal{R}_K$. Based on these bounds, we provide the guarantees on the expected total computation time required to complete $K$ iterations of the optimization process.

\subsection{Guarantees for \algname{ATA}}

For each worker $i \in [n]$, recall that $\bar{a}_i$ denote the prescribed allocation of the optimal action $\bar{\ba}$. Define $k_i$ as the smallest integer satisfying
\begin{equation}\label{def:ki}
	(\bar{a}_i + k_i) \mu_i > \ell(\bar{\bm{a}}, \bm{\mu})~.
\end{equation}
From the definition above, it follows that if the learner plays an action $\bm{a}_k$ at round $k$ such that $a_{k,i} \ge \bar{a}_i + k_i$, then $\ell(\bm{a}_k, \bm{\mu}) \ge \ell(\bar{\bm{a}}, \bm{\mu})$. Thus, $k_i$ can be interpreted as the smallest number of additional units allocated to worker $i$ that result in a suboptimal loss. Moreover, for every worker $i \in [n]$, we have $k_{i} \in \{1, 2\}$ (see Lemma~\ref{lem:1} in the Appendix).

The next result provides an upper bound on the expected regret of \algname{ATA}.
\begin{theorem}[Proof in \Cref{proof:thm:main}]
	\label{thm:main}
	Suppose that Assumption~\ref{a:sube} holds.
	Then, the expected regret of \algname{ATA} with inputs $(B, \alpha)$ satisfies
	\begin{align*}
		\mathcal{R}_K &\le 2n\max_{i \in [n]} \{B\mu_i -\ell(\bar{\bm{a}}, \bm{\mu})\} \\
		& \qquad +c\cdot\sum \limits_{i=1}^{n} \frac{\alpha^2(\bar{a}_i+k_i)(B \mu_i - \ell(\bar{\bm{a}}, \bm{\mu})) }{\left((\bar{a}_i+k_i)\mu_i - \ell(\bar{\bm{a}}, \bm{\mu})\right)^2}\cdot \ln K,
	\end{align*}
	where $\alpha \ge \max_{i \in [n]} \norm{X_i-\mu_i}_{\psi_1}$, and $c$ is a numerical constant.
\end{theorem}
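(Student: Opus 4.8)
The plan is to follow the standard lower-confidence-bound regret template, but adapted to the nonlinear, discontinuous proxy loss $\ell(\bm{a}, \bm{\mu}) = \max_i a_i\mu_i$ through a ``bottleneck arm'' argument. Throughout, write $\Delta_i^{\max} := B\mu_i - \ell(\bar{\bm{a}}, \bm{\mu})$ for the largest per-round regret attributable to arm $i$ (recall $a_{i,k}\le B$), and $\Delta_i^{\min} := (\bar{a}_i + k_i)\mu_i - \ell(\bar{\bm{a}}, \bm{\mu}) > 0$ for the smallest gap created by over-allocating arm $i$, which is strictly positive by the definition of $k_i$ in \eqref{def:ki}.

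First I would split the regret over the ``clean'' event $\mathcal{E}_k := \{\forall i \in [n]: |\hat{\mu}_{i,k} - \mu_i| \le \conf(i,k)\}$ and its complement. By the sub-exponential concentration inequality (\Cref{prop:concentration}) together with a union bound over the $n$ arms, the choice of the $\ln(2k^2)$ term yields $\Pr(\mathcal{E}_k^c) = \mathcal{O}(n/k^2)$. Since any single round incurs regret at most $\max_i \Delta_i^{\max}$, the total contribution of the failure events is $\sum_k \Pr(\mathcal{E}_k^c)\max_i\Delta_i^{\max} = \mathcal{O}(n\max_i\Delta_i^{\max})$, producing the first term of the bound; the summability of $\sum_k 1/k^2$ is exactly what removes any $\ln K$ dependence here.

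Second, on $\mathcal{E}_k$ the LCBs are valid, so $s_{i,k}\le \mu_i$, and since $\ell(\bm{a},\cdot)$ is coordinatewise nondecreasing, the minimality of $\bm{a}_k$ gives the key chain $\ell(\bm{a}_k,\bm{s}_k)\le \ell(\bar{\bm{a}},\bm{s}_k)\le \ell(\bar{\bm{a}},\bm{\mu})$. Consider a round with positive regret and let $j$ be the maximizing (bottleneck) arm, so $a_{j,k}\mu_j = \ell(\bm{a}_k,\bm{\mu}) > \ell(\bar{\bm{a}},\bm{\mu})$; by the definition of $k_j$ this forces $a_{j,k}\ge \bar{a}_j + k_j$. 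Combining $a_{j,k}s_{j,k}\le \ell(\bm{a}_k,\bm{s}_k)\le \ell(\bar{\bm{a}},\bm{\mu})$ with the lower deviation bound $s_{j,k}\ge \mu_j - 2\conf(j,k)$ gives $2\conf(j,k)\ge \mu_j - \ell(\bar{\bm{a}},\bm{\mu})/a_{j,k}\ge \Delta_j^{\min}/(\bar{a}_j + k_j)$. Inverting the decreasing Bernstein-type expression for $\conf$ and using $\ln(2k^2)\le \ln(2K^2)$ then shows such a round can occur only while $K_{j,k}\le m_j := \mathcal{O}\!\big(\alpha^2(\bar{a}_j+k_j)^2\ln K/(\Delta_j^{\min})^2\big)$.

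Third comes the counting step, which is where the combinatorial structure must be exploited and which I expect to be the main obstacle: because $\ell$ is discontinuous in $\bm{\mu}$, I cannot use a standard gap/Lipschitz argument and must instead rely on the threshold created by $k_j$. Each bad round with bottleneck $j$ allocates at least $\bar{a}_j + k_j$ tasks to arm $j$, hence increments $K_{j,\cdot}$ by at least $\bar{a}_j + k_j$; since such rounds require $K_{j,k}\le m_j$ and $K_{j,\cdot}$ is nondecreasing, the number $N_j$ of them is at most $m_j/(\bar{a}_j+k_j) + 1$. This ``batched sampling'' is precisely what divides one factor of $(\bar{a}_j+k_j)^2$ in $m_j$ down to the single $(\bar{a}_j+k_j)$ appearing in the theorem. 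Attributing each bad round's regret (at most $\Delta_j^{\max}$) to its unique bottleneck arm and summing, the good-event regret is at most $\sum_j N_j\Delta_j^{\max} = \mathcal{O}\!\big(\sum_j \alpha^2(\bar{a}_j+k_j)\Delta_j^{\max}\ln K/(\Delta_j^{\min})^2\big)$, with the residual $\sum_j\Delta_j^{\max}\le n\max_i\Delta_i^{\max}$ absorbed into the first term. Adding the two contributions yields the claimed bound; the only remaining care is to confirm the per-round attribution does not double-count (each round has a single bottleneck) and that the Bernstein-inversion constant is absorbed into $c$.
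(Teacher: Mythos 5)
Your proof is correct, and it shares the paper's overall architecture: the clean event from \Cref{prop:concentration} with a union bound over arms and the summable $1/k^2$ failure probability, bad rounds attributed to a unique bottleneck arm $j$ with $a_{j,k}\ge \bar{a}_j+k_j$ forced by \eqref{def:ki}, the batched-sampling count of at least $\bar{a}_j+k_j$ new samples per such round, and the per-round charge $B\mu_j-\ell(\bar{\bm{a}},\bm{\mu})$. However, your central step is genuinely different from, and simpler than, the paper's. The paper rules out bad rounds with large $K_{j,k}$ by contradiction: since $\norm{\bm{a}_k}_1=B$, over-allocating arm $j$ forces some arm $j'$ to satisfy $a_{j',k}\le\bar{a}_{j'}-1$, and combining $\ell(\bar{\bm{a}},\bm{\mu})\ge\bar{a}_{j'}\mu_{j'}$ with the confidence bounds yields $a_{j,k}s_{j,k}>(a_{j',k}+1)s_{j',k}$, which contradicts the structural balance property of the \algname{RAS} output (\Cref{lem:1}) --- a lemma that itself relies on the minimal-cardinality tie-breaking rule (\Cref{thm:minimal_cardinality}). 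You avoid this machinery entirely: you use only the global optimality of $\bm{a}_k$ for $\ell(\cdot,\bm{s}_k)$ and coordinatewise monotonicity, giving $a_{j,k}s_{j,k}\le\ell(\bm{a}_k,\bm{s}_k)\le\ell(\bar{\bm{a}},\bm{s}_k)\le\ell(\bar{\bm{a}},\bm{\mu})$ on the clean event, combined with $s_{j,k}\ge\mu_j-2\conf(j,k)$. What this buys: your argument proves the regret bound for \emph{any} minimizer chosen in line 5 of \Cref{alg:ata}, whereas the paper's proof is tied to the specific \algname{RAS} selection; the \algname{RAS} structure remains necessary in the paper for other purposes (computational efficiency, and the remark that $k_i\in\{1,2\}$), but, as your route shows, not for the regret bound itself. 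One caveat, which is shared with the paper rather than a gap relative to it: your inversion step, $K_{j,k}\le m_j=\mathcal{O}\big(\alpha^2(\bar{a}_j+k_j)^2\ln K/(\Delta_j^{\min})^2\big)$ with $\Delta_j^{\min}:=(\bar{a}_j+k_j)\mu_j-\ell(\bar{\bm{a}},\bm{\mu})$, silently assumes that the linear term $\alpha\ln(2k^2)/K_{j,k}$ in $\conf(j,k)$ is dominated by the square-root term; this is guaranteed only when $\Delta_j^{\min}/(\bar{a}_j+k_j)\lesssim\alpha$, and otherwise both your $m_j$ and the paper's display \eqref{eq:conf} would need an extra additive $\mathcal{O}\big(\alpha(\bar{a}_j+k_j)\ln K/\Delta_j^{\min}\big)$ term, so this point does not distinguish your attempt from the paper's own proof.
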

The first term in the regret upper bound is independent on the number of rounds $K$.
The second term, however, grows logarithmically with $K$, which aligns with the behavior observed in stochastic bandit problems in the literature.

In the case where $B=1$, our setting reduces to the problem of regret minimization for the standard multi-armed bandits.
Observe that in this case $\ell(\bar{\bm{a}}, \bm{\mu}) = \min_{i\in[n]}\mu_i$, $k_i=1$ for all $i \in [n]$.
Therefore, the guarantees of \Cref{thm:main} recover the known optimal bound 
$$
\cO\(\sum_i \ln(K)/\Delta_i\)
$$
of the standard MAB setting, where $\Delta_i := \mu_i-\min_j \mu_j$.

\paragraph{Proof sketch.}
In standard and combinatorial MAB problems, regret bounds are typically derived by controlling the number of rounds in which the learner selects suboptimal arms. These bounds are often of the order $\ln(K)/\Delta^2$, where $\Delta$ denotes the suboptimality gap and quantifies the exploration cost required to distinguish optimal actions from suboptimal ones.

In our setting, the problem is more complex since the learner must not only choose which arms to pull but also determine the allocation of resources across selected arms. With this in mind, we develop the following key arguments leading to the bound in Theorem~\ref{thm:main}.  

We define \textit{over-allocation} for worker $i$ at round $k$ as the event where $a_{i,k} \geq \bar{a}_i + k_i$. By definition of $k_i$ (see \eqref{def:ki}), this implies that $\ell(\bm{a}_k, \bm{\mu}) > \ell(\bar{\bm{a}}, \bm{\mu})$. We define a \textit{bad round} as a round where $\ell(\bm{a}_k, \bm{\mu}) > \ell(\bar{\bm{a}}, \bm{\mu})$, and we say that a bad round is \textit{triggered by arm $i$} when $a_{i,k} \mu_i = \ell(\bm{a}_k, \bm{\mu}) > \ell(\bar{\bm{a}}, \bm{\mu})$. Then, the proof revolves around establishing an upper bound on the total number of bad rounds.

To derive this bound, we consider the number of samples required to verify that the mean computation time of worker $i$ under over-allocation exceeds the optimal waiting time $\ell(\bar{\bm{a}}, \bm{\mu})$. Specifically, we need to test whether the mean of the corresponding distribution, at least $(\bar{a}_i + k_i)\mu_i$, surpasses the threshold $\ell(\bar{\bm{a}}, \bm{\mu})$. This is equivalent to testing whether 
$$
	\left\{ \mu_i > \frac{\ell(\bar{\bm{a}}, \bm{\mu})}{\bar{a}_i + k_i} \right\}.
$$
Using the concentration inequality applied in our analysis, the number of samples required for this test is of the order:  
\begin{equation}\label{eq:n_rounds}
	\textstyle
	\alpha_i^2 \left(\mu_i - \frac{\ell(\bar{\bm{a}}, \bm{\mu})}{\bar{a}_i + k_i}\right)^{-2} = \frac{\alpha_i^2 (\bar{a}_i + k_i)^2}{\left((\bar{a}_i + k_i)\mu_i - \ell(\bar{\bm{a}}, \bm{\mu})\right)^2}~.
\end{equation}
During rounds where worker $i$ is over-allocated, the learner collects at least $\bar{a}_i + k_i$ samples from the corresponding distribution. Therefore, the total number of rounds required to accumulate enough samples to stop over-allocating worker $i$ can be upper-bounded by 
$$
\frac{\alpha_i^2 (\bar{a}_i + k_i)}{\left((\bar{a}_i + k_i)\mu_i - \ell(\bar{\bm{a}}, \bm{\mu})\right)^2}.
$$

In the regret bound of \Cref{thm:main}, the term $\alpha^2$ appears instead of $\alpha_i^2$ because the learner's prior knowledge is limited to an upper bound $\alpha \ge \max_i \norm{X_i-\mu_i}_{\psi_1}$ on the maximal Orlicz norm of the arm distributions.
Finally, considering that the worst-case excess loss incurred when over-allocating worker $i$ is $B\mu_i - \ell(\bar{\bm{a}}, \bm{\mu})$, we obtain the stated bound.

\subsection{Guarantees for \algname{ATA-Empirical}}

We present theoretical guarantees for \algname{ATA-Empirical} by providing an upper bound on the expected cumulative regret \eqref{eq:proxy_loss}. As discussed in \Cref{section:ata}, \algname{ATA-Empirical} leverages lower confidence bounds derived from a novel data-dependent concentration inequality introduced below. The proof of this result is detailed in \Cref{sec:technical}.
\begin{lemma}\label{lem:0}
	Let $X_1, \dots, X_n$ be i.i.d.\ positive random variables with mean $\mu$, such that $\alpha = \norm{X_1-\mu}_ {\psi_1}<+\infty$. Let $\hat{X}_n$ denote the empirical mean. For $\delta \in (0,1)$, let
	$$
		C_{n,\delta} := 2 \sqrt{\frac{\log(2/\delta)}{n}}+2 \frac{\log(2/\delta)}{n},
	$$
	where $\eta = \alpha / \mu$.
	Then, with probability at least $1-\delta$, we have
	$$
		\mu \ge \hat{X}_n \left(1- \eta C_{n,\delta}\right)_{+}~.
	$$
	Moreover, if $\eta C_{n,\delta} \le \frac{1}{4}$, then, we have with probability at least $1-\delta$, we have
	$$
		\hat{X}_n \left(1- \eta C_{n,\delta}\right)_{+} \le \mu \le \hat{X}_n \left(1+\frac{4}{3}\eta C_{n,\delta}\right)~.
	$$
\end{lemma}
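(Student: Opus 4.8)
The plan is to bootstrap the two data-dependent bounds from the standard distribution-dependent sub-exponential concentration inequality, \Cref{prop:concentration}, which I take to provide the two-sided control guaranteeing that, with probability at least $1-\delta$,
$$\abs{\hat{X}_n - \mu} \le \alpha\, C_{n,\delta}.$$
Working on this high-probability event throughout, the first move is to use the defining relation $\alpha = \eta\mu$ to recast the bound as
$$\mu\(1 - \eta C_{n,\delta}\) \le \hat{X}_n \le \mu\(1 + \eta C_{n,\delta}\).$$
The only genuine difficulty is that this localizes the observable $\hat{X}_n$ around the \emph{unknown} mean $\mu$, whereas the lemma demands the reverse: a band for $\mu$ whose half-width is expressed through $\hat{X}_n$. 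The entire proof therefore reduces to inverting these two inequalities and absorbing the resulting rational functions of $\eta C_{n,\delta}$ into the clean linear envelopes claimed in the statement.

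For the first claim I would take the right-hand inequality $\hat{X}_n \le \mu\(1+\eta C_{n,\delta}\)$ and solve for $\mu$ to get $\mu \ge \hat{X}_n/\(1+\eta C_{n,\delta}\)$. Since $\eta C_{n,\delta}\ge 0$, the elementary estimate $1/(1+x)\ge 1-x$ then yields $\mu \ge \hat{X}_n\(1-\eta C_{n,\delta}\)$. Because $X_1$ is positive we have $\mu\ge 0$ and $\hat{X}_n\ge 0$, so replacing the factor $1-\eta C_{n,\delta}$ by its positive part only weakens the right-hand side; hence $\mu \ge \hat{X}_n\(1-\eta C_{n,\delta}\)_+$ survives even when the factor would have been negative. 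This settles the lower bound, which is also precisely the lower bound appearing in the second claim.

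For the upper bound in the second claim I would instead start from the left-hand inequality $\mu\(1-\eta C_{n,\delta}\) \le \hat{X}_n$. Here the hypothesis $\eta C_{n,\delta}\le \tfrac14$ is essential: it guarantees $1-\eta C_{n,\delta}>0$, so I may divide to obtain $\mu \le \hat{X}_n/\(1-\eta C_{n,\delta}\)$. The closing step is the scalar inequality $1/(1-x)\le 1+\tfrac43 x$, valid for $x\in[0,\tfrac14]$, which after clearing the positive denominator reduces to $0\le \tfrac{x}{3}(1-4x)$, manifestly nonnegative on that range and tight at $x=\tfrac14$; this tightness is exactly what simultaneously pins down the threshold $\tfrac14$ and the constant $\tfrac43$. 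Applying it with $x=\eta C_{n,\delta}$ gives $\mu \le \hat{X}_n\(1+\tfrac43 \eta C_{n,\delta}\)$, completing the sandwich.

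The main obstacle is conceptual rather than computational: recognizing that the $\mu$-dependent confidence width $\alpha C_{n,\delta}$ can be converted into a purely data-dependent one by exploiting the proportionality $\alpha=\eta\mu$ and then inverting, with the two rational-to-linear inequalities doing all of the remaining work. Beyond verifying these two elementary scalar estimates and keeping track of the single event of probability at least $1-\delta$ on which all three inequalities hold, I expect no technical snags.
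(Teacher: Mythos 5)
Your proposal is correct and follows essentially the same route as the paper's own proof (Lemma~\ref{lem:conc2} in the appendix): both work on the event $\abs{\hat{X}_n-\mu}\le \alpha C_{n,\delta}$ from \Cref{prop:concentration}, rewrite $\alpha=\eta\mu$, and invert via the same two scalar estimates $1/(1+x)\ge 1-x$ and $1/(1-x)\le 1+\tfrac{4}{3}x$ on $[0,\tfrac14]$, handling the negative-factor case of the positive part through $\mu\ge 0$.
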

 
Using the concentration inequality above, we construct the lower confidence bounds $\hat{s}_{i,k}$ as defined in \eqref{def:s}. The following theorem provides an upper bound on the regret of \algname{ATA-Empirical}.

\begin{theorem}[Proof in \Cref{proof:thm:main2}]
	\label{thm:main2}
	Suppose that Assumption~\ref{a:sube} holds.
	Then, the expected regret of \algname{ATA-Empirical} with inputs $(B, \eta)$, satisfies
	\begin{align*}
		\mathcal{R}_K &\le 2n\max_{i \in [n]} \{B\mu_i -\ell(\bar{\bm{a}}, \bm{\mu})\}\\
		& +c \eta^2\cdot\sum \limits_{i=1}^{n} \frac{\mu_i^2(\bar{a}_i+k_i)(B \mu_i - \ell(\bar{\bm{a}}, \bm{\mu})) }{\left((\bar{a}_i+k_i)\mu_i - \ell(\bar{\bm{a}}, \bm{\mu})\right)^2}\cdot \ln K,
	\end{align*}
	where $\eta \ge \max_{i \in [n]} \alpha_i / \mu_i$ and $c$ is a numerical constant.
\end{theorem}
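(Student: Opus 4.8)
The plan is to mirror the two-phase argument behind Theorem~\ref{thm:main}: first control the number of ``bad rounds'' (rounds with $\ell(\bm{a}_k, \bm{\mu}) > \ell(\bar{\bm{a}}, \bm{\mu})$) on a high-probability good event, then pay separately for the low-probability event on which concentration fails. The only structural change is that the additive confidence width used by \algname{ATA} is replaced by the multiplicative bound of Lemma~\ref{lem:0}, and this single substitution is exactly what turns the $\alpha^2=(\max_i\alpha_i)^2$ factor of Theorem~\ref{thm:main} into the arm-adaptive factor $\eta^2\mu_i^2$. Throughout I would use the optimality of $\bm{a}_k$ for $\ell(\cdot,\hat{\bm{s}}_k)$ together with monotonicity of $\ell(\bm{a},\cdot)=\max_i a_i(\cdot)_i$ in each coordinate, and the interpretation of $k_i$ from \eqref{def:ki}.

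First I would fix, for each round $k$, the good event $\cE_k$ on which the two-sided conclusion of Lemma~\ref{lem:0} holds simultaneously for all arms with $\delta_k = 1/k^2$, matching the $\ln(2k^2)$ appearing in \eqref{def:s}. A union bound over the $n$ arms gives $\Prob{\cE_k^c} = \cO(n/k^2)$. Since the per-round excess loss never exceeds $\max_{i}\{B\mu_i - \ell(\bar{\bm{a}}, \bm{\mu})\}$, summing the contributions of the $\cE_k^c$ over $k$ yields $\cO\!\left(n\max_i\{B\mu_i - \ell(\bar{\bm{a}}, \bm{\mu})\}\right)$ because $\sum_k 1/k^2 = \cO(1)$; this accounts for the first, $K$-independent term of the bound.

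The core is the over-allocation count on $\cE_k$. On this event the lower bound of Lemma~\ref{lem:0} gives $\hat{s}_{i,k}\le\mu_i$, so by optimality and monotonicity $\ell(\bm{a}_k,\hat{\bm{s}}_k)\le\ell(\bar{\bm{a}},\hat{\bm{s}}_k)\le\ell(\bar{\bm{a}},\bm{\mu})$. Hence if arm $i$ is over-allocated, $a_{i,k}\ge\bar{a}_i+k_i$, then $\hat{s}_{i,k}\le\ell(\bm{a}_k,\hat{\bm{s}}_k)/a_{i,k}\le\ell(\bar{\bm{a}},\bm{\mu})/(\bar{a}_i+k_i)=:\theta_i$, and \eqref{def:ki} guarantees $\theta_i<\mu_i$. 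Feeding in the upper bound $\hat{\mu}_{i,k}\ge\mu_i/(1+\tfrac{4}{3}\eta C_{K_{i,k},\delta_k})$ from Lemma~\ref{lem:0} and the definition $\hat{s}_{i,k}=\hat{\mu}_{i,k}(1-\eta C_{K_{i,k},\delta_k})_+\le\theta_i$, a short rearrangement yields $\eta C_{K_{i,k},\delta_k}\gtrsim(\mu_i-\theta_i)/\mu_i$. Because $C_{n,\delta}$ is dominated by its $\sqrt{\cdot}$ term once $n$ is moderately large, this forces $K_{i,k}\lesssim \eta^2\mu_i^2(\bar{a}_i+k_i)^2\ln(2k^2)\big/\big((\bar{a}_i+k_i)\mu_i-\ell(\bar{\bm{a}},\bm{\mu})\big)^2$, the step that replaces $\alpha^2$ by $\eta^2\mu_i^2$.

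Finally I would convert this sample cap into a round count: each over-allocation of arm $i$ adds at least $\bar{a}_i+k_i$ fresh samples to $K_{i,\cdot}$, so arm $i$ is over-allocated on at most $K_{i,k}^{\max}/(\bar{a}_i+k_i)+\cO(1)$ rounds. Since every bad round is triggered by some arm $i$ with $a_{i,k}\mu_i=\ell(\bm{a}_k,\bm{\mu})>\ell(\bar{\bm{a}},\bm{\mu})$ — which forces $a_{i,k}\ge\bar{a}_i+k_i$, i.e.\ over-allocation — and the excess of such a round is $a_{i,k}\mu_i-\ell(\bar{\bm{a}},\bm{\mu})\le B\mu_i-\ell(\bar{\bm{a}},\bm{\mu})$, attributing each bad round to a triggering arm and summing over $i$ reproduces the logarithmic term $c\,\eta^2\sum_i \mu_i^2(\bar{a}_i+k_i)(B\mu_i-\ell(\bar{\bm{a}},\bm{\mu}))\ln K\big/\big((\bar{a}_i+k_i)\mu_i-\ell(\bar{\bm{a}},\bm{\mu})\big)^2$, with the $\cO(1)$ rounding remainders folded into $2n\max_i\{B\mu_i-\ell(\bar{\bm{a}},\bm{\mu})\}$. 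I expect the main obstacle to be the multiplicative-to-additive conversion in the $K_{i,k}$ bound: Lemma~\ref{lem:0} only gives a clean two-sided guarantee when $\eta C_{n,\delta}\le\tfrac14$, so one must separately verify that rounds in the complementary regime correspond to $K_{i,k}=\cO(\eta^2\ln(2k^2))$ and are dominated by the main bound via $\mu_i^2/(\mu_i-\theta_i)^2\ge 1$, and must carry the numerical constants through the rearrangement carefully enough to recover the exact denominator $\big((\bar{a}_i+k_i)\mu_i-\ell(\bar{\bm{a}},\bm{\mu})\big)^2$; these edge cases are handled precisely as in the proof of Theorem~\ref{thm:main}.
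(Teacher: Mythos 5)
Your proposal is correct and reproduces the paper's overall architecture: the same per-round good event built from the two-sided concentration inequality of Lemma~\ref{lem:0} with $\delta_k = 1/k^2$, the same attribution of bad rounds to a triggering arm $i$ (which, by the definition of $k_i$ in \eqref{def:ki}, forces $a_{i,k}\ge\bar{a}_i+k_i$), the same conversion from a sample cap on $K_{i,k}$ to a round count via the fact that each over-allocation adds at least $\bar{a}_i+k_i$ samples, and the same final decomposition with per-round cost $B\mu_i-\ell(\bar{\bm{a}},\bm{\mu})$. The one genuine difference is in the core step that extracts the sample cap. The paper argues by contradiction: assuming the cap is violated, it shows $a_{i,k}\hat{s}_{i,k} > (a_{j,k}+1)\hat{s}_{j,k}$ for some under-allocated arm $j$, contradicting Lemma~\ref{lem:1}, a structural property of the \algname{RAS} output that relies on its minimal-cardinality tie-breaking (Lemma~\ref{thm:minimal_cardinality}). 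You instead argue directly: on the good event, $\hat{s}_{j,k}\le\mu_j$ for all $j$, so optimality of $\bm{a}_k$ for $\ell(\cdot,\hat{\bm{s}}_k)$ and coordinate-wise monotonicity give $a_{i,k}\hat{s}_{i,k}\le\ell(\bm{a}_k,\hat{\bm{s}}_k)\le\ell(\bar{\bm{a}},\hat{\bm{s}}_k)\le\ell(\bar{\bm{a}},\bm{\mu})$, hence $\hat{s}_{i,k}\le\theta_i<\mu_i$ under over-allocation, and the two-sided bound then forces $\eta C_{K_{i,k},\delta_k}\gtrsim(\mu_i-\theta_i)/\mu_i$. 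This is cleaner: it needs only that $\bm{a}_k$ is \emph{some} minimizer of $\ell(\cdot,\hat{\bm{s}}_k)$, not the specific tie-breaking that Lemma~\ref{lem:1} requires, so the counting step becomes independent of the \algname{RAS} cardinality machinery (which the paper still needs elsewhere, e.g.\ for $k_i\in\{1,2\}$, but not for the theorem's statement). The two caveats you flag are real and are present in the paper's own proof as well: the regime $\eta C_{K_{i,k},\delta_k}>\tfrac14$ where the two-sided bound is unavailable must be disposed of separately (there $K_{i,k}=\mathcal{O}(\eta^2\ln(2k^2))$, dominated by the main cap since $\mu_i/(\mu_i-\theta_i)\ge1$, at least when $\eta\gtrsim1$), and the constant-tracking through the multiplicative-to-additive conversion is exactly where the paper's factor $185$ and its intermediate bound \eqref{eq:Ci} come from.
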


Comparing the bounds for \algname{ATA-Empirical} and \algname{ATA}, we observe a key differences. Unlike the bound in Theorem~\ref{thm:main}, which incurs a squared maximal Orlicz norm penalty of $\alpha^2$ for all terms in the upper bound, \algname{ATA-Empirical} benefits from its adaptive nature, leading to a term-specific factor of $\eta^2 \mu_i^2$. In the case where the arm distributions have a ratio $\alpha_i / \mu_i$ of the same order (such as the exponential distributions), the bound of Theorem~\ref{thm:main2} shows that \algname{ATA-Empirical} adapts to the quantities $\alpha_i$ as we have, in the last case, $\eta \mu_i = \alpha_i$.

\section{Experiments}
\label{section:experiments}

\begin{figure*}[t]
    \centering
    \begin{tabular}{cccc}
        \includegraphics[width=0.234\textwidth]{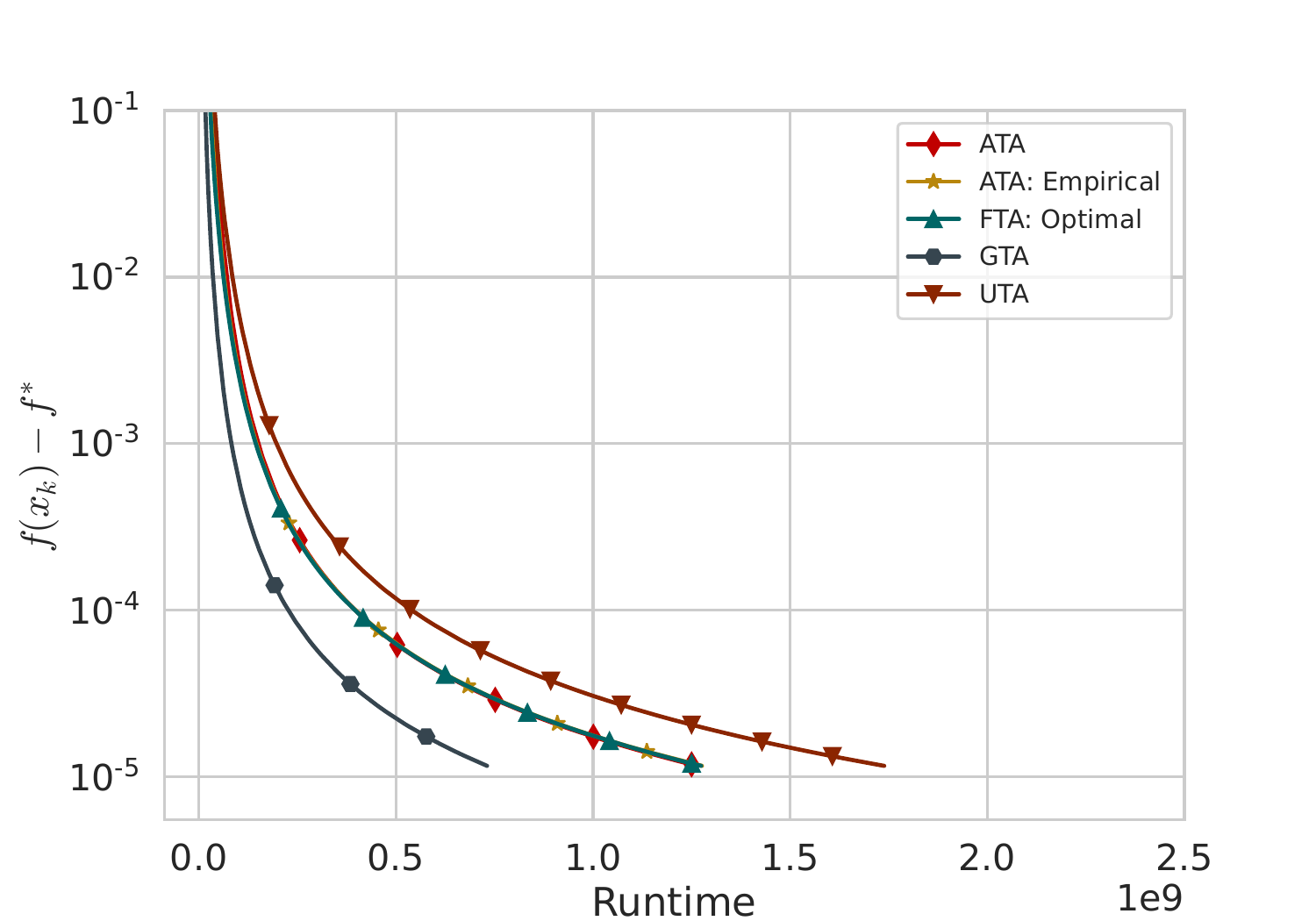} &
        \includegraphics[width=0.234\textwidth]{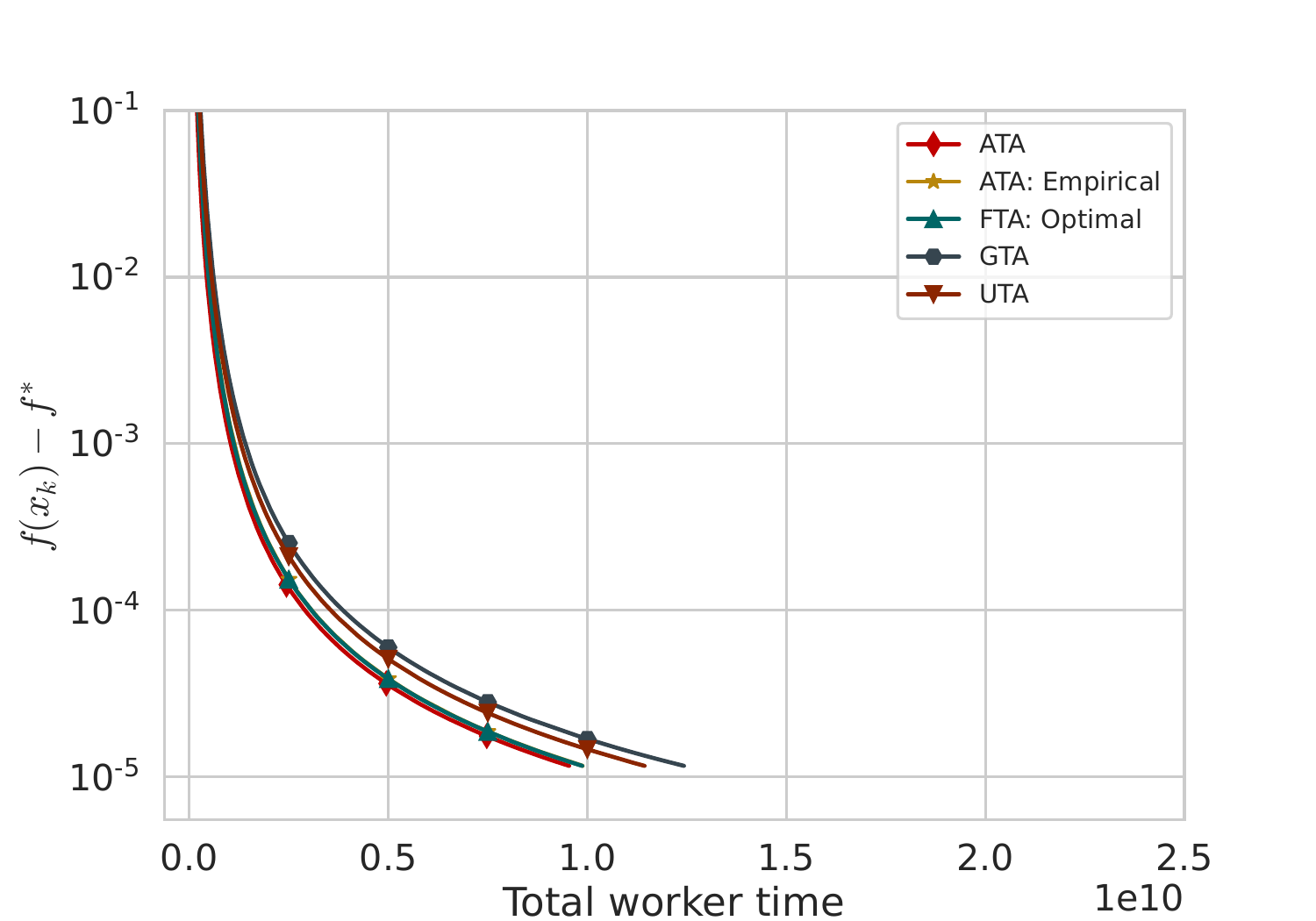} &
        \includegraphics[width=0.234\textwidth]{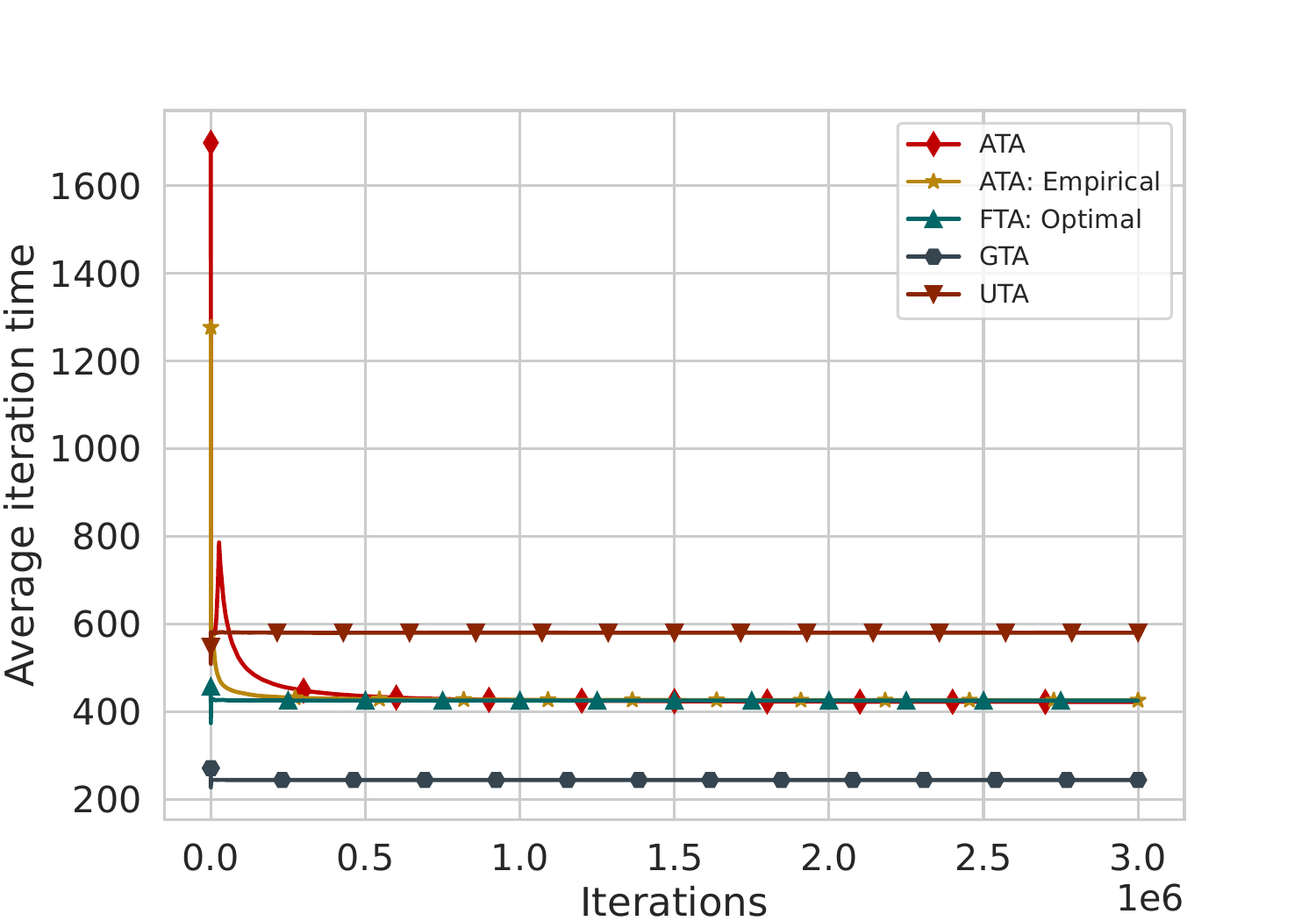} &
        \includegraphics[width=0.234\textwidth]{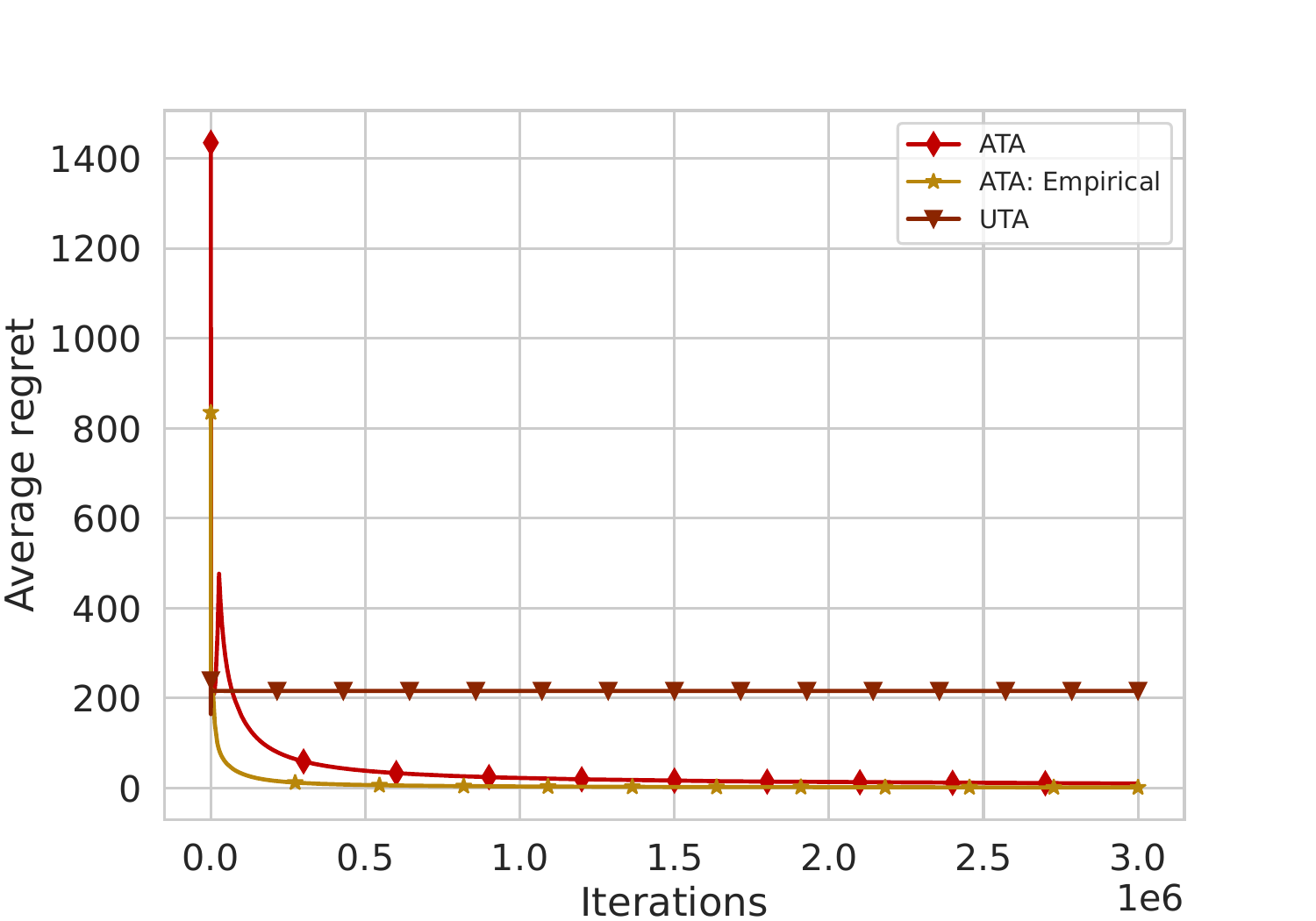} \\
        \includegraphics[width=0.234\textwidth]{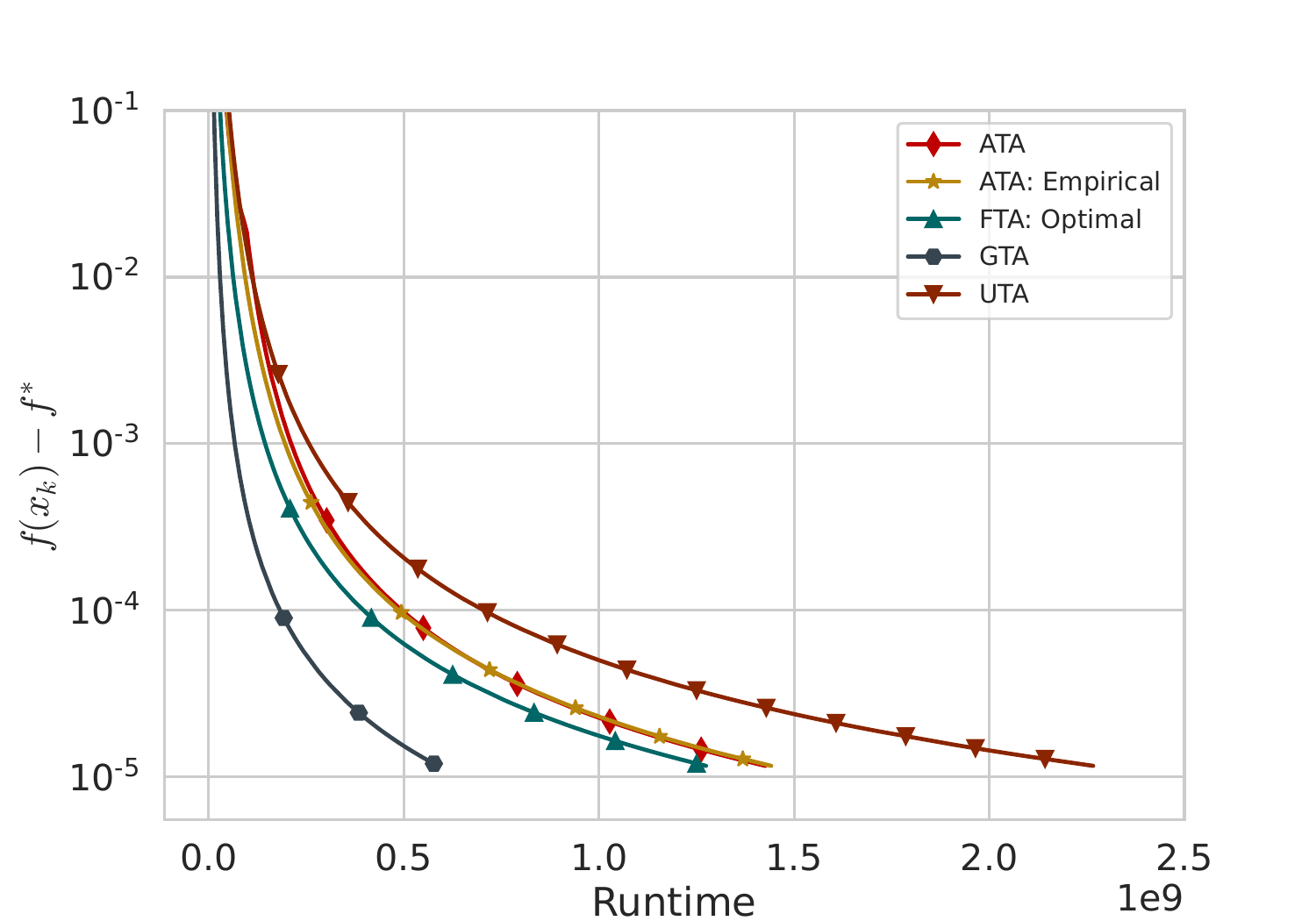} &
        \includegraphics[width=0.234\textwidth]{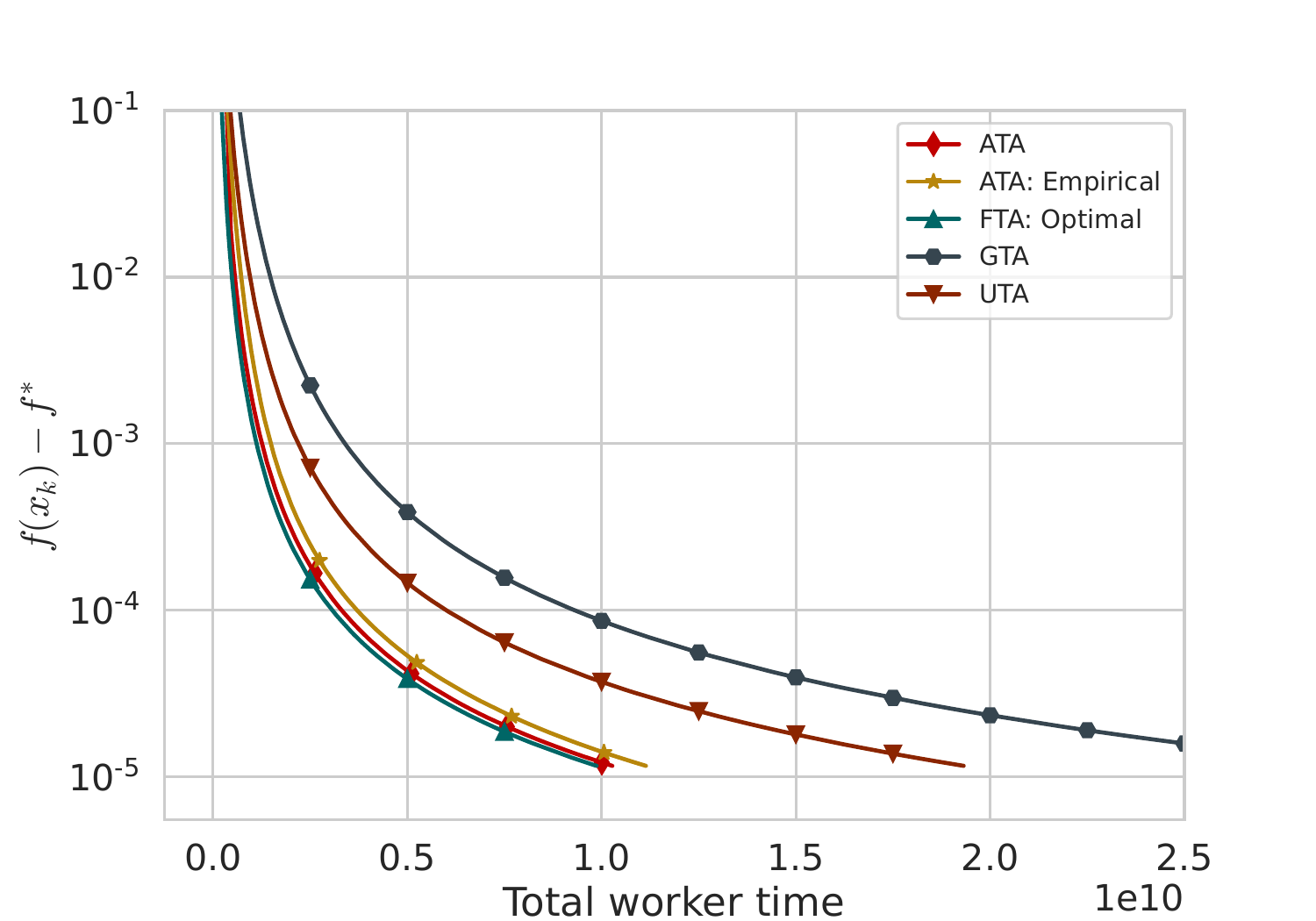} &
        \includegraphics[width=0.234\textwidth]{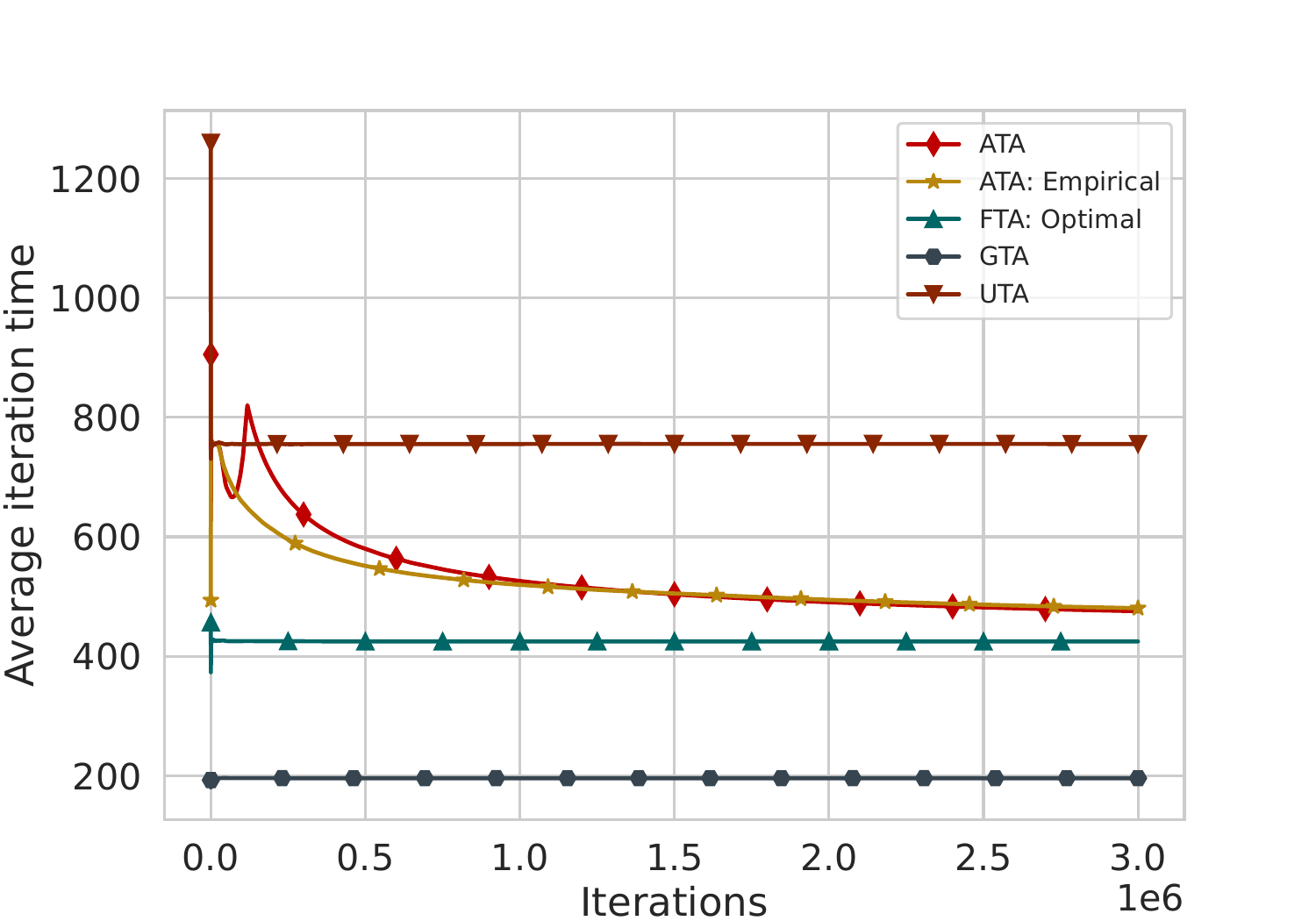} &
        \includegraphics[width=0.234\textwidth]{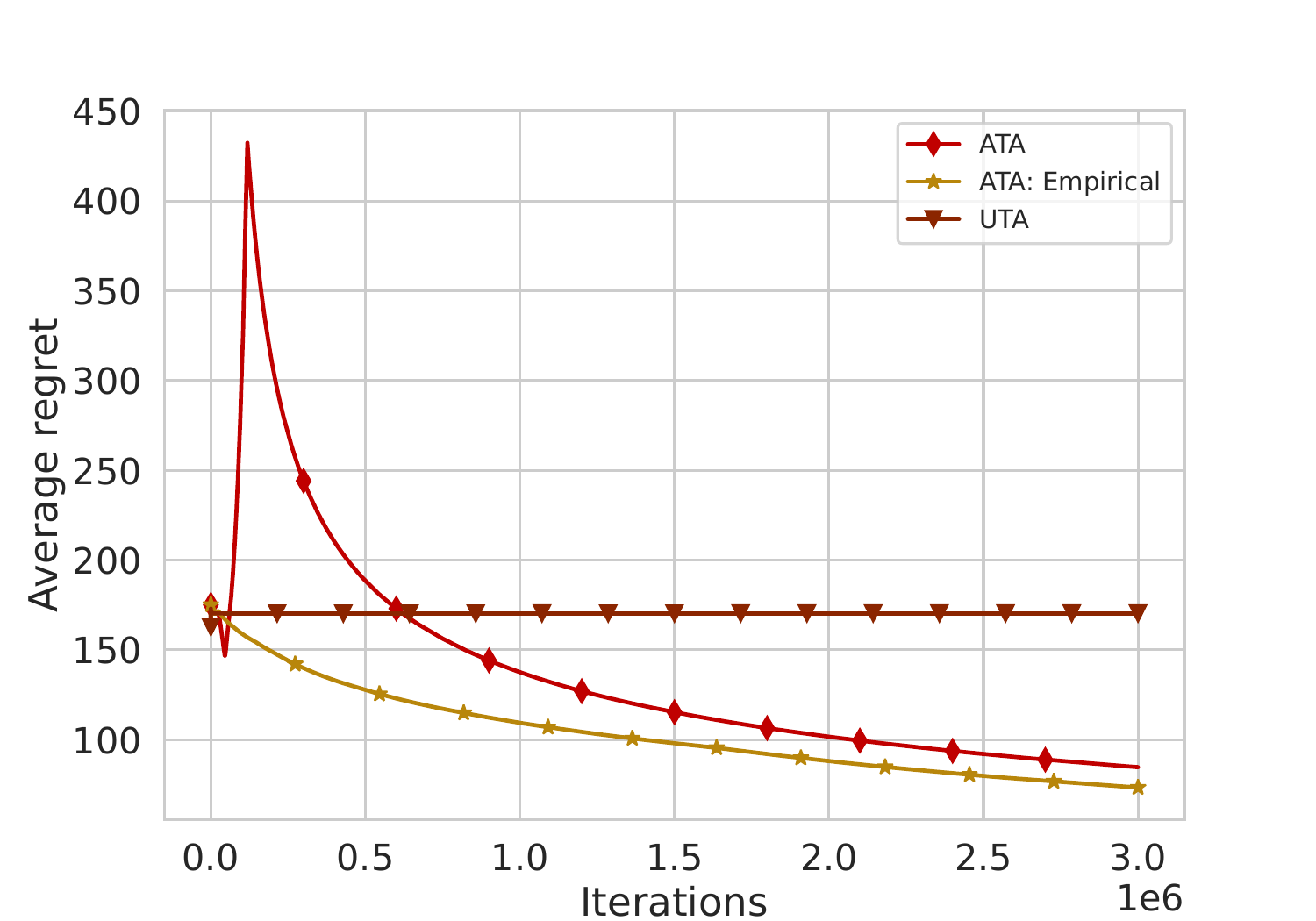} \\
        \includegraphics[width=0.234\textwidth]{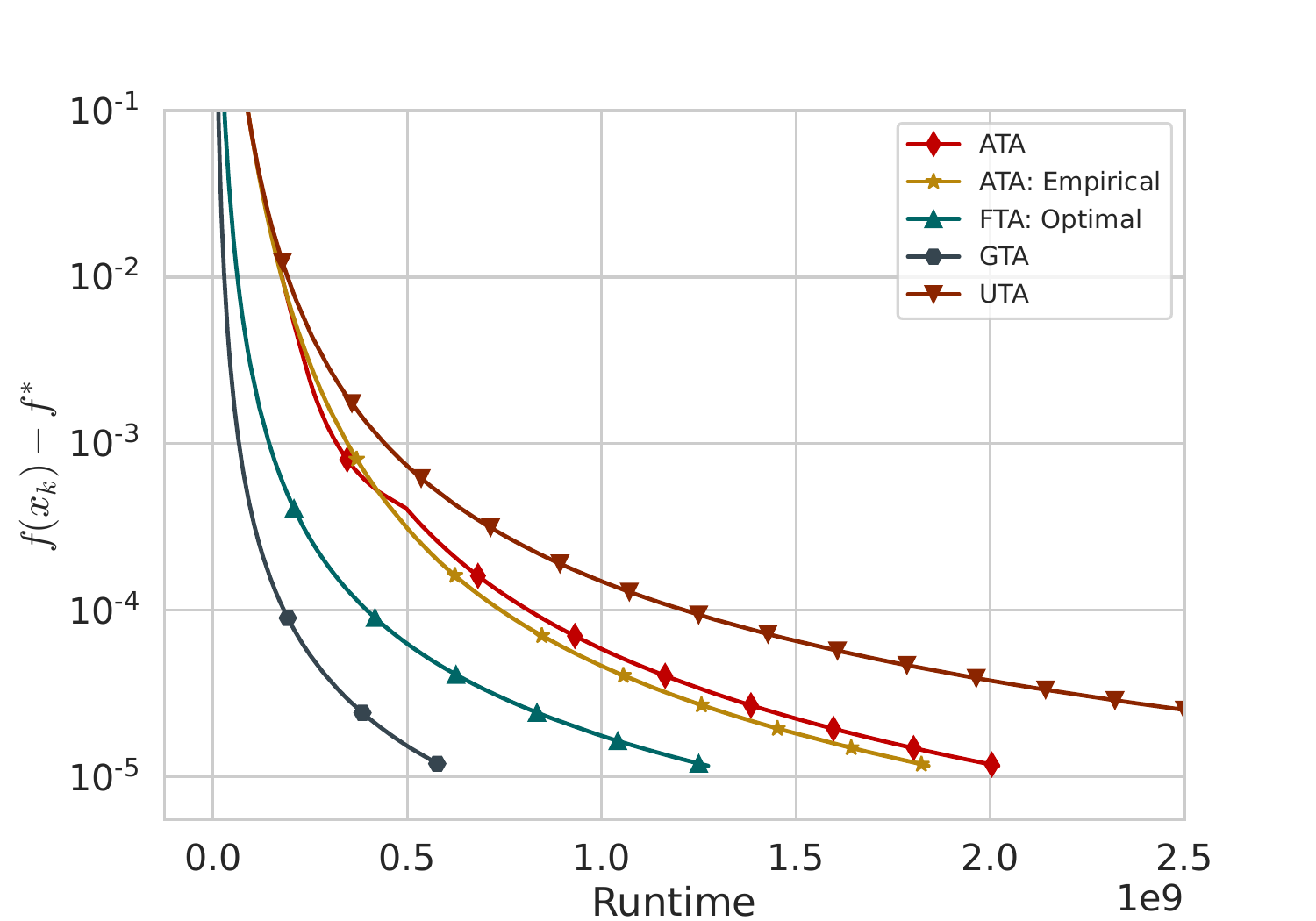} &
        \includegraphics[width=0.234\textwidth]{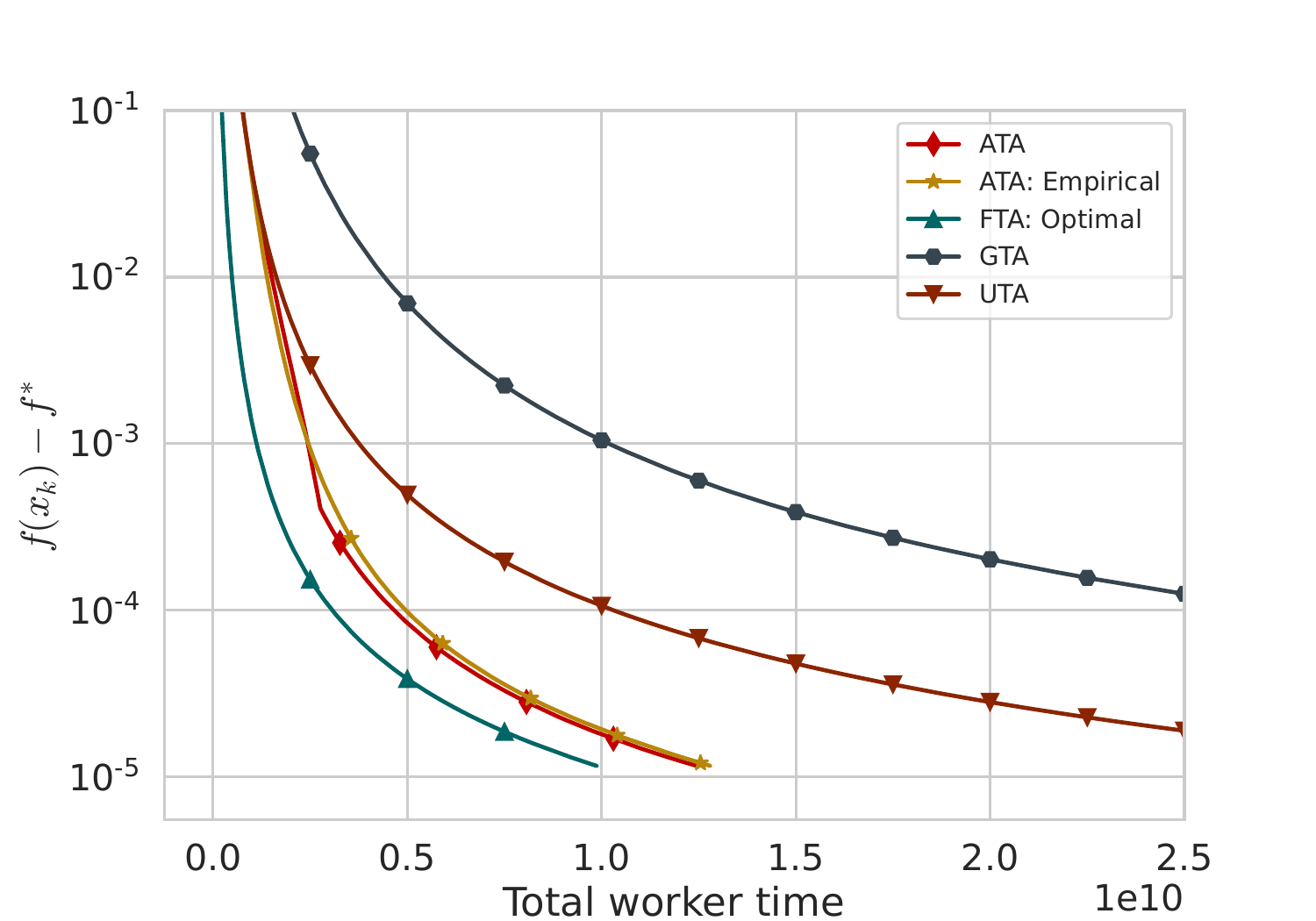} &
        \includegraphics[width=0.234\textwidth]{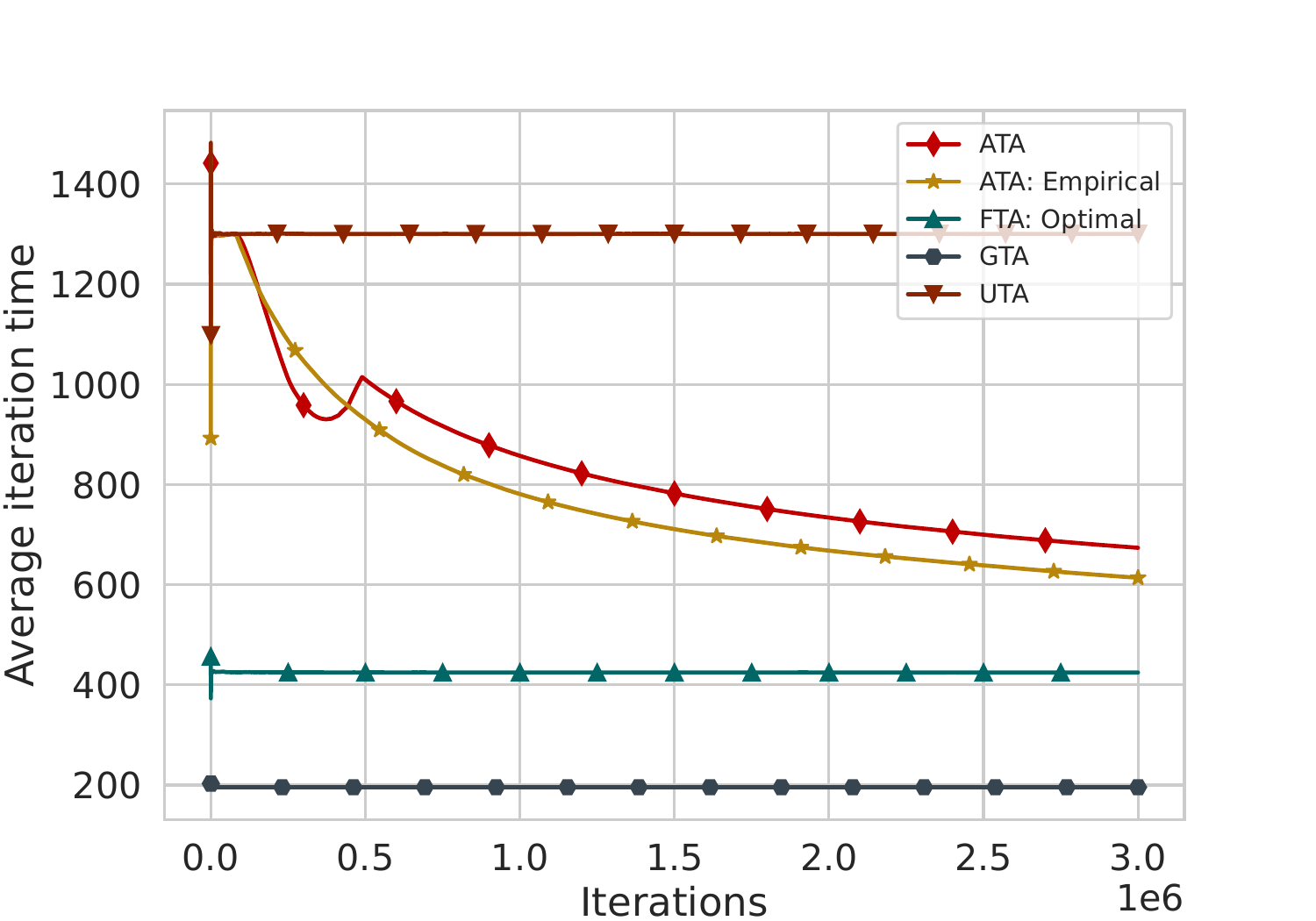} &
        \includegraphics[width=0.234\textwidth]{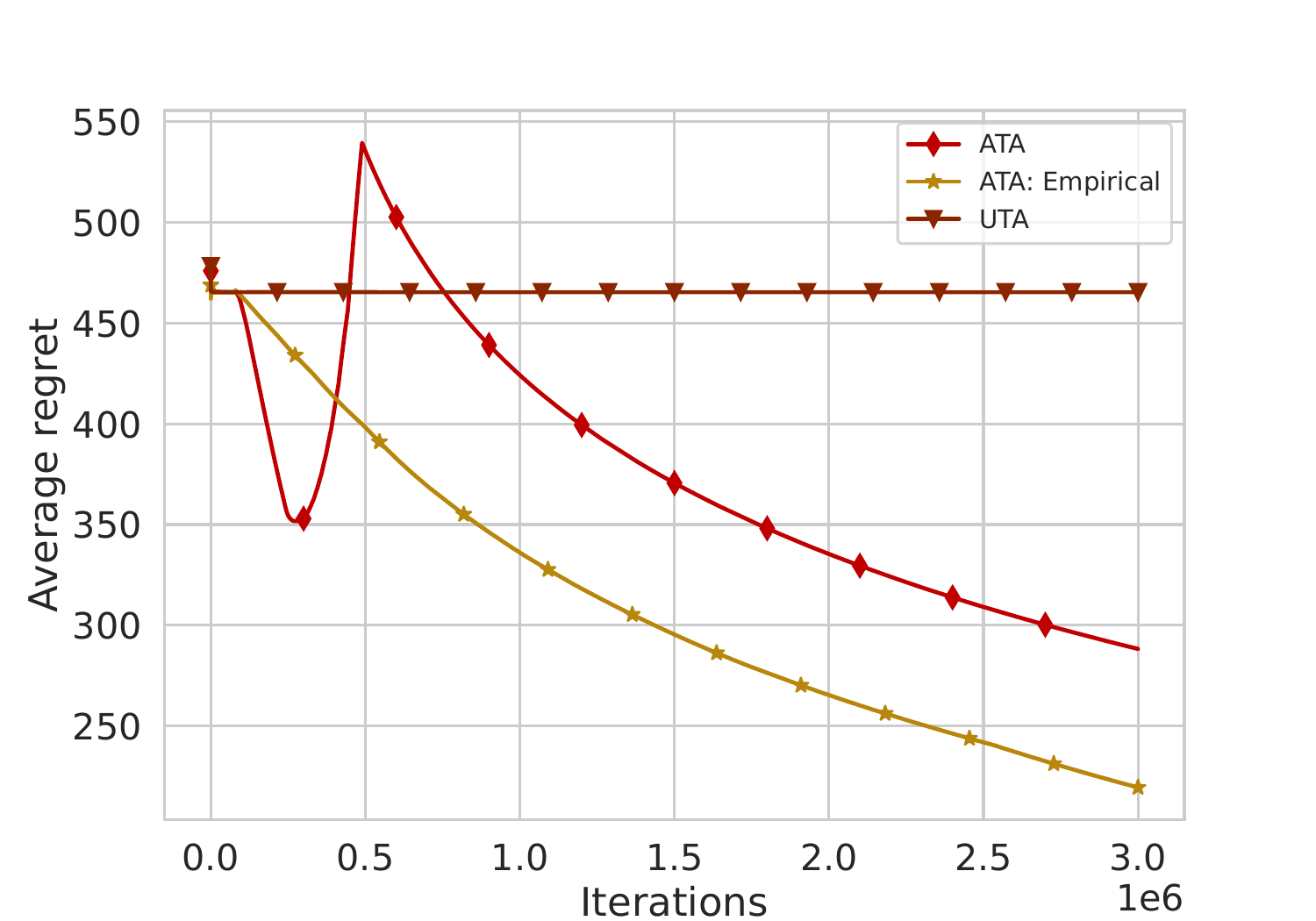} \\
        \includegraphics[width=0.234\textwidth]{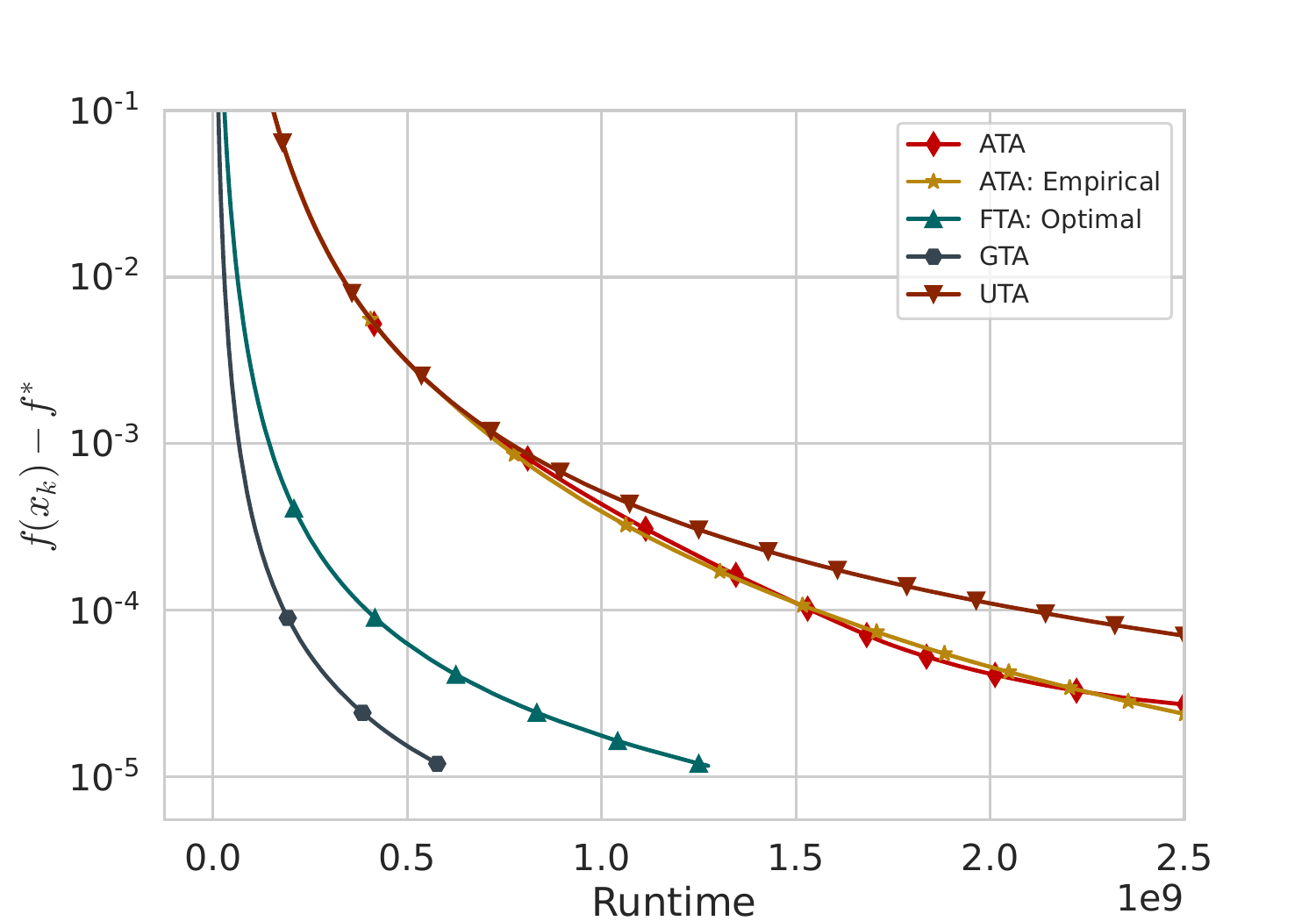} &
        \includegraphics[width=0.234\textwidth]{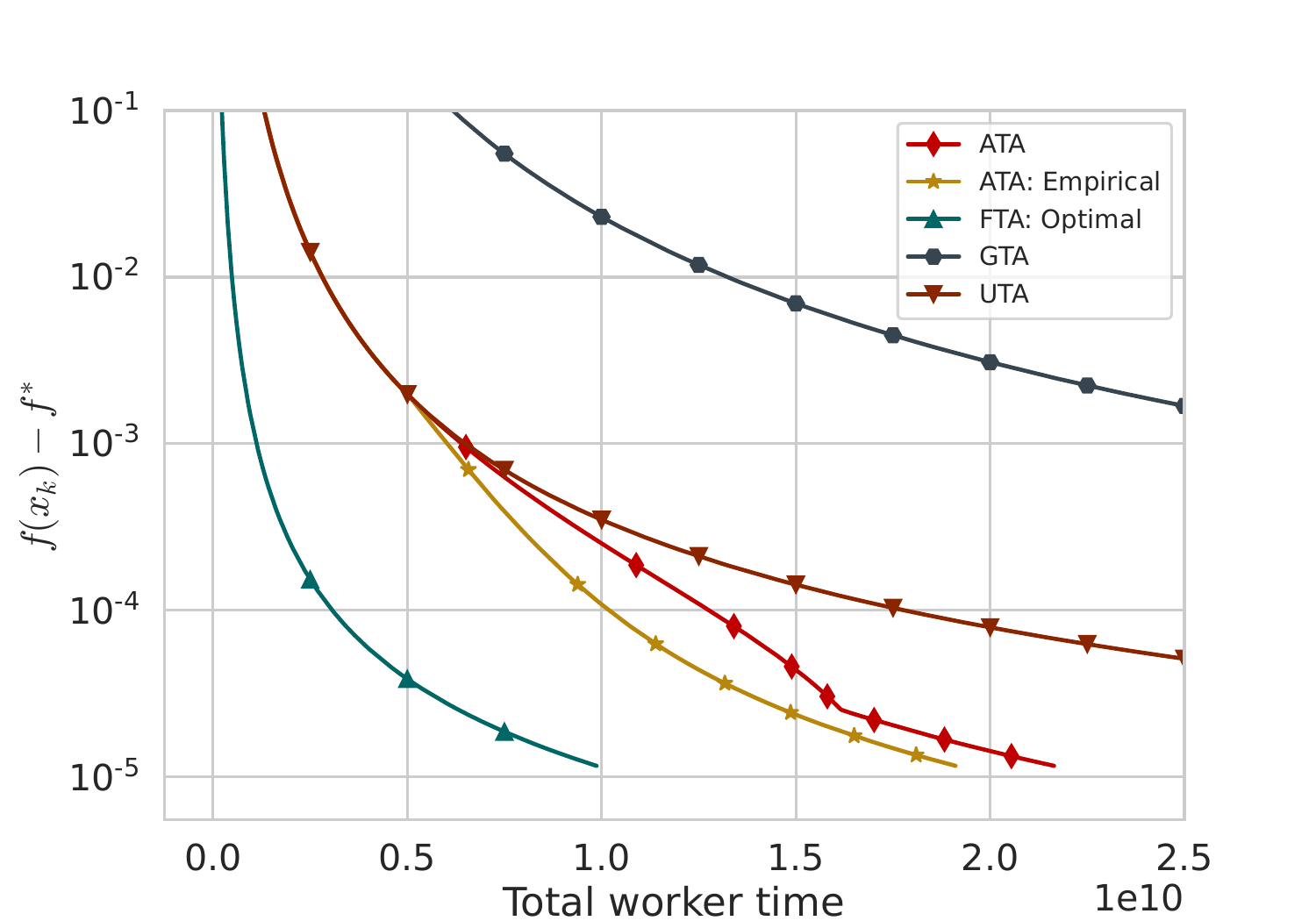} &
        \includegraphics[width=0.234\textwidth]{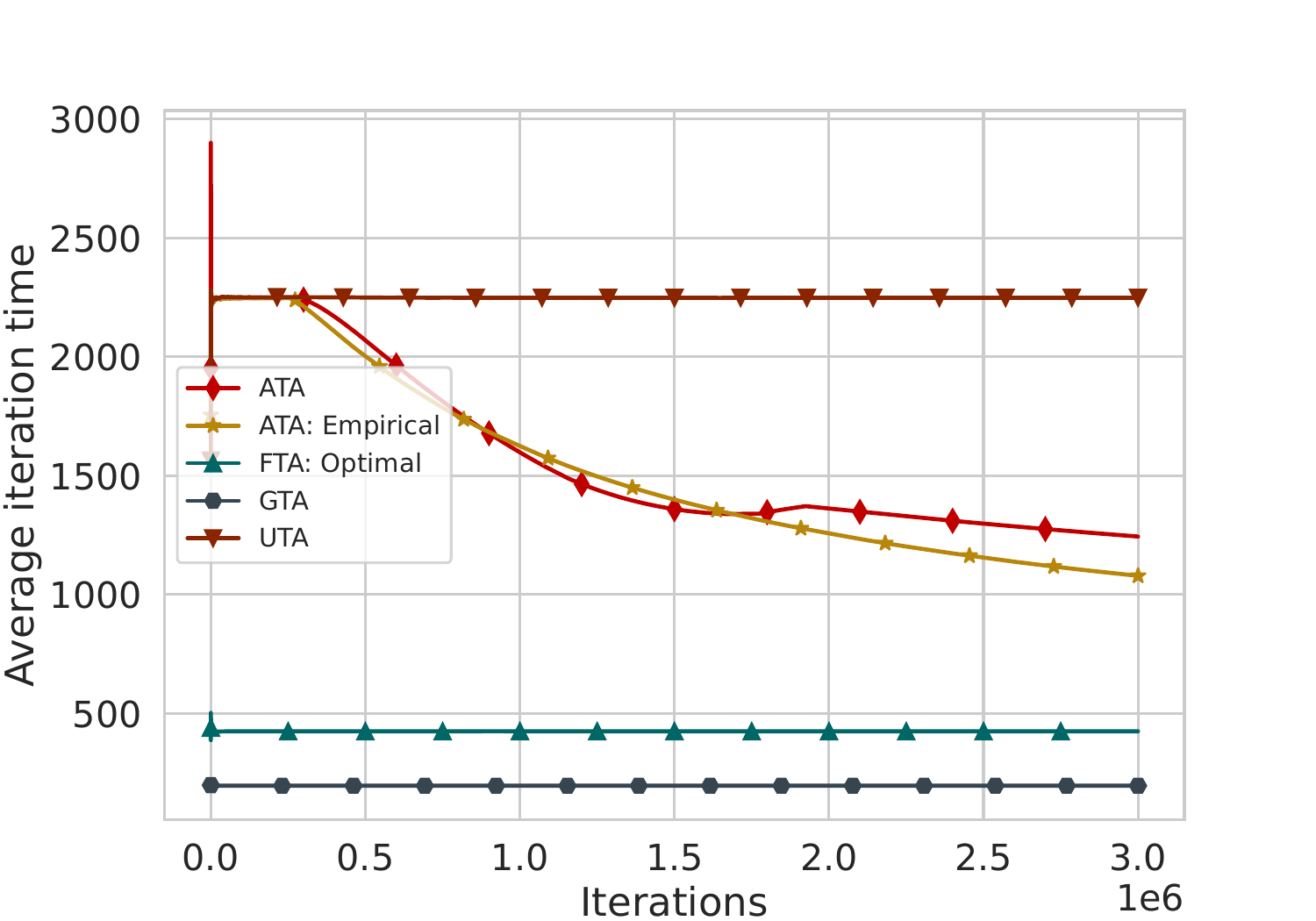} &
        \includegraphics[width=0.234\textwidth]{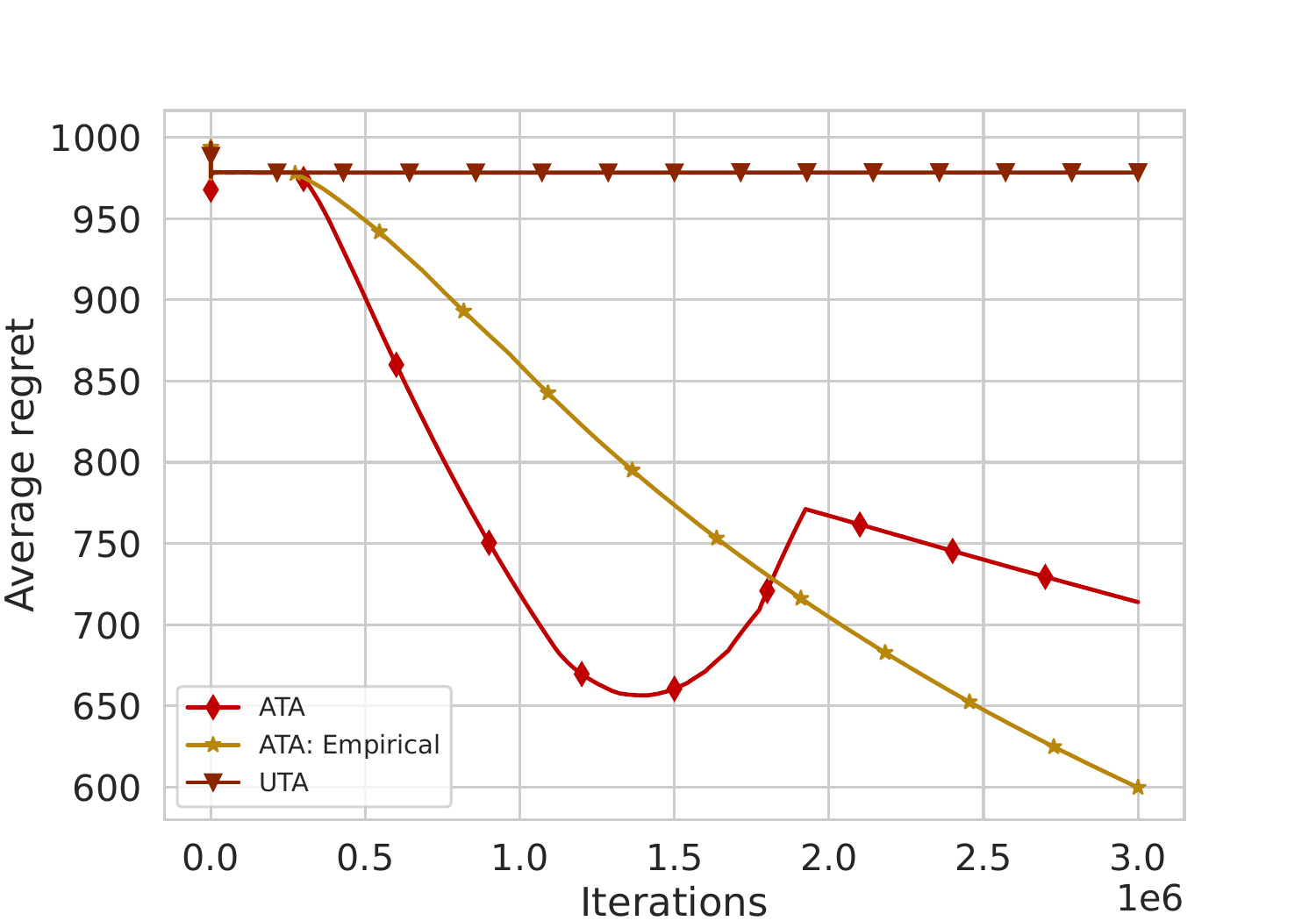} \\
    \end{tabular}
    \caption{
        Each row increases the number of workers by a factor of 3, starting from $17$, that is, $n = 17, 51, 153, 459$ from top to bottom.
        The first column shows runtime vs. suboptimality.  
        The second column also plots suboptimality, but against total worker time, i.e., $\sum_{i=1}^n T_{i,k}$ in \Cref{alg:ata}.
        The third column presents the average iteration time, given by $C_k / k$ over all iterations $k$.  
        The last column displays the averaged cumulative regret, as defined in \eqref{eq:proxy_loss}.
    }
    \label{fig:sqrt}
\end{figure*}

In this section, we validate our algorithms by simulating a scenario with $n$ workers, where we solve a simple problem using \algname{SGD}.
In each iteration, we collect $B=23$ gradients from the workers and perform a gradient descent step.

The objective function $f \,:\, \R^d \to \R$ is a convex quadratic defined as
$$
    f(x) = \frac{1}{2} \bm{x}^\top \mA \bm{x} - \bm{b}^\top \bm{x},
$$
where
\begin{align*}
    \mA &= \frac{1}{4}
    \begin{bmatrix}
    2 & -1 &  & 0 \\
    -1 & \ddots & \ddots &  \\
    & \ddots & \ddots & -1 \\
    0 & & -1 & 2 \\
    \end{bmatrix}
    \in \mathbb{R}^{d \times d} ~, \\
    \bm{b} &= \frac{1}{4}
    \begin{bmatrix}
    -1 \\
    0 \\
    \vdots \\
    0 \\
    \end{bmatrix}
    \in \mathbb{R}^d~.
\end{align*}
We denote $f^*$ as the minimum value of the function $f$.
Each of the $n$ workers is able to calculate unbiased stochastic gradients $\bm{g}(\bm{x})$ that satisfy
$$
    \E{\sqnorm{\bm{g}(\bm{x}) - \nabla f(\bm{x})}} \le 0.01^2~.
$$
This is achieved by adding Gaussian noise to the gradients of $f$.

The computation time for worker $i$ is modeled by the distribution
$$
    \nu_i = 29\sqrt{i} + \mathrm{Exp}\(29\sqrt{i}\),
$$
for all $i \in [n]$, where $\mathrm{Exp}(\beta)$ denotes the exponential distribution with scale parameter $\beta$.
The expected value of this distribution is $\mu_i = 2 \cdot 29 \sqrt{i}$.
Furthermore, the Orlicz norm satisfies the bound $\alpha_i \le 2\mu_i$.

We consider three benchmark algorithms.
\algname{GTA-SGD}, originally introduced as \algname{Rennala SGD} by \citet{tyurin2024optimal}.
Additionally, we include \algname{OFTA} ({\red O}ptimal {\red F}ixed {\red T}ask {\red A}llocation), which assumes the oracle knowledge of the mean computation times and uses the optimal allocation $\bar{\bm{a}}$ in \eqref{eq:proxy_loss} in each iteration, and \algname{UTA} ({\red U}niform {\red T}ask {\red A}llocation), which distributes $B$ tasks uniformly among the $n$ workers. If $n>B$, then in \algname{UTA} we select $B$ workers at random, each one tasked to calculate one stochastic gradient.
Our algorithms aim to achieve a performance close to the one of \algname{OFTA}, without any prior knowledge of the true means.

For \algname{ATA} we set $\alpha = \alpha_n = 4 \cdot 29 \sqrt{n}$, while for \algname{ATA-Empirical} we use $\eta = 1$.
The results of our experiments are shown in \Cref{fig:sqrt}. As expected, \algname{GTA} is the fastest in terms of runtime (first column), but it performs poorly in terms of total worker time (second column).
This is because it uses all devices, most of which perform useless computations that are never used, leading to worse performance as the number of workers increases.
In fact, its performance can become arbitrarily worse. On the other hand, \algname{OFTA} performs best in terms of total worker time.
Although it is slower in terms of runtime, the difference is by a constant factor that does not increase as $n$ grows.
This is because additional workers are less efficient and do not provide significant benefits for \algname{GTA}.

Turning our attention to our algorithms, both \algname{ATA} and \algname{ATA-Empirical} initially behave like \algname{UTA}, as it is expected by the need to perform an initial exploration phase with uniform allocations.
However, after this phase, they begin to converge to the performance of \algname{OFTA}.


The last two columns contain plots that confirm our theoretical derivations.
The third plot validates \Cref{cor:main}, showing that \algname{ATA} and \algname{ATA-Empirical} converge to \algname{OFTA} up to a constant.
The final column shows the averaged cumulative regret, vanishing over time as predicted by Theorems~\ref{thm:main} and \ref{thm:main2}.



\begin{table}[h!]
	\caption{
        Ratios of total worker times and runtimes required to achieve $f(\bm{x}) - f^{*} < 10^{-5}$.
        For total worker time, we divide the total worker time of \algname{GTA} by the corresponding total worker times of the other algorithms listed.
        For runtime, we do the opposite, dividing the runtime of the other algorithms by the runtime of \algname{GTA}, since \algname{GTA} is the fastest.
        To simplify the naming, we refer to \algname{ATA-Empirical} as \algname{ATA-E}.
        }
	\label{table:sqrt}

	\begin{center}
	\begin{small}
	\begin{sc}
	\begin{tabular}{l|ccc|ccc}
	\toprule
	\multirow{2}{*}{$n$} & \multicolumn{3}{c|}{Tot. worker time ratio} & \multicolumn{3}{c}{Runtime ratio} \\
	\cmidrule(lr){2-4} \cmidrule(lr){5-7}
	& \algname{ATA} & \algname{ATA-E} & \algname{OFTA} & \algname{ATA} & \algname{ATA-E} & \algname{OFTA} \\
	\midrule
	$17$ & $1.3$ & $1.26$ & $1.26$ & $1.73$ & $1.75$ & $1.74$ \\
	$51$ & $2.91$ & $2.69$ & $3.03$ & $2.43$ & $2.45$ & $2.17$ \\
	$153$ & $7.22$ & $7.02$ & $9.1$ & $3.44$ & $3.14$ & $2.17$ \\
	$459$ & $12.45$ & $14.1$ & $27.3$ & $6.36$ & $5.51$ & $2.17$ \\
	\bottomrule
	\end{tabular}
	\end{sc}
	\end{small}
	\end{center}
\end{table}

In \Cref{table:sqrt}, we compare the results numerically.
Both the total worker time ratio and runtime ratio increase as $n$ grows.
The total worker time ratio increases because \algname{GTA} becomes less efficient, using more resources than necessary.
The runtime ratio grows for \algname{ATA} and \algname{ATA-Empirical} since a larger number of workers requires more exploration.
However, for \algname{OFTA} this ratio remains unchanged, as discussed earlier.

We remark that in these experiments we started all runs for \algname{ATA} and \algname{ATA-Empirical} without prior knowledge of the computation time distribution.
However, in real systems, where these algorithms are used multiple times, prior estimates of computation times from previous runs could be available.
With this information, \algname{ATA} and \algname{ATA-Empirical} would be much faster, as they would spend less time on exploration, approaching the performance of \algname{OFTA} in a faster way.
We validate this through experiments presented in \Cref{sec:prior_knowledge}.

In \Cref{sec:linear_noise}, we conducted similar experiments with a different time distribution, where the mean times vary linearly across the arms.
In \Cref{sec:heterogeneous_distributions}, we examine scenarios with varying client time distributions.
Additionally, in \Cref{sec:regret}, we analyze regret performance, confirming its logarithmic behavior as predicted by Theorems~\ref{thm:main} and \ref{thm:main2}.
Finally, in \Cref{sec:real_dataset}, we trained a simple CNN on the CIFAR-100 dataset \cite{krizhevsky2009learning} using Adam \cite{kingma2014adam}.





\section*{Acknowledgments}
The research reported in this publication was supported by funding from King Abdullah University of Science and Technology (KAUST): i) KAUST Baseline Research Scheme, ii) Center of Excellence for Generative AI, under award number 5940, iii) SDAIA-KAUST Center of Excellence in Artificial Intelligence and Data Science.

\section*{Impact Statement}
This paper presents work whose goal is to advance the field of 
Machine Learning. There are many potential societal consequences 
of our work, none which we feel must be specifically highlighted here.


\bibliography{bib}
\bibliographystyle{icml2025}

\newpage
\appendix
\onecolumn

\section{Additional Experiments}
\label{section:additional_experiments}

The objective function is a convex quadratic function $f \,:\, \R^d \to \R$ defined as  
$$
    f(\bm{x}) = \frac{1}{2} x^\top \mA \bm{x} - \bm{b}^\top \bm{x},
$$
where
\begin{align*}
    \mA = \frac{1}{4}
    \begin{bmatrix}
    2 & -1 &  & 0 \\
    -1 & \ddots & \ddots &  \\
    & \ddots & \ddots & -1 \\
    0 & & -1 & 2 \\
    \end{bmatrix}
    \in \mathbb{R}^{d \times d} ~,
    \quad \text{and} \quad 
    \bm{b} = \frac{1}{4}
    \begin{bmatrix}
    -1 \\
    0 \\
    \vdots \\
    0 \\
    \end{bmatrix}
    \in \mathbb{R}^d~.
\end{align*}
We denote $f^*$ as the minimum value of the function $f$.
Each of the $n$ workers is able to calculate unbiased stochastic gradients $\bm{g}(x)$ that satisfy
$$
    \E{\sqnorm{\bm{g}(x) - \nabla f(\bm{x})}} \le 0.01^2~.
$$
This is achieved by adding Gaussian noise to the gradients of $f$.

The experiments were implemented in Python.
The distributed environment was emulated on machines with Intel(R) Xeon(R) Gold 6248 CPU @ 2.50GHz.

\subsection{Linear Noise}
\label{sec:linear_noise}

In this section we model the computation time for worker $i$ by the distribution
$$
    \nu_i = 29i + \mathrm{Exp}(29i), \quad \text{for all} \quad i \in [n]~.
$$
The expected value of this distribution is $\mu_i = 2 \cdot 29 i$~.
Furthermore, the Orlicz norm satisfies the bound $\alpha_i \le 2\mu_i$.

We again set $B = 23$ and run simulations similar to those in \Cref{section:experiments}.
The results are shown in \Cref{fig:linear}.
\begin{figure*}[t]
    \centering
    \begin{tabular}{cccc}
        \includegraphics[width=0.234\textwidth]{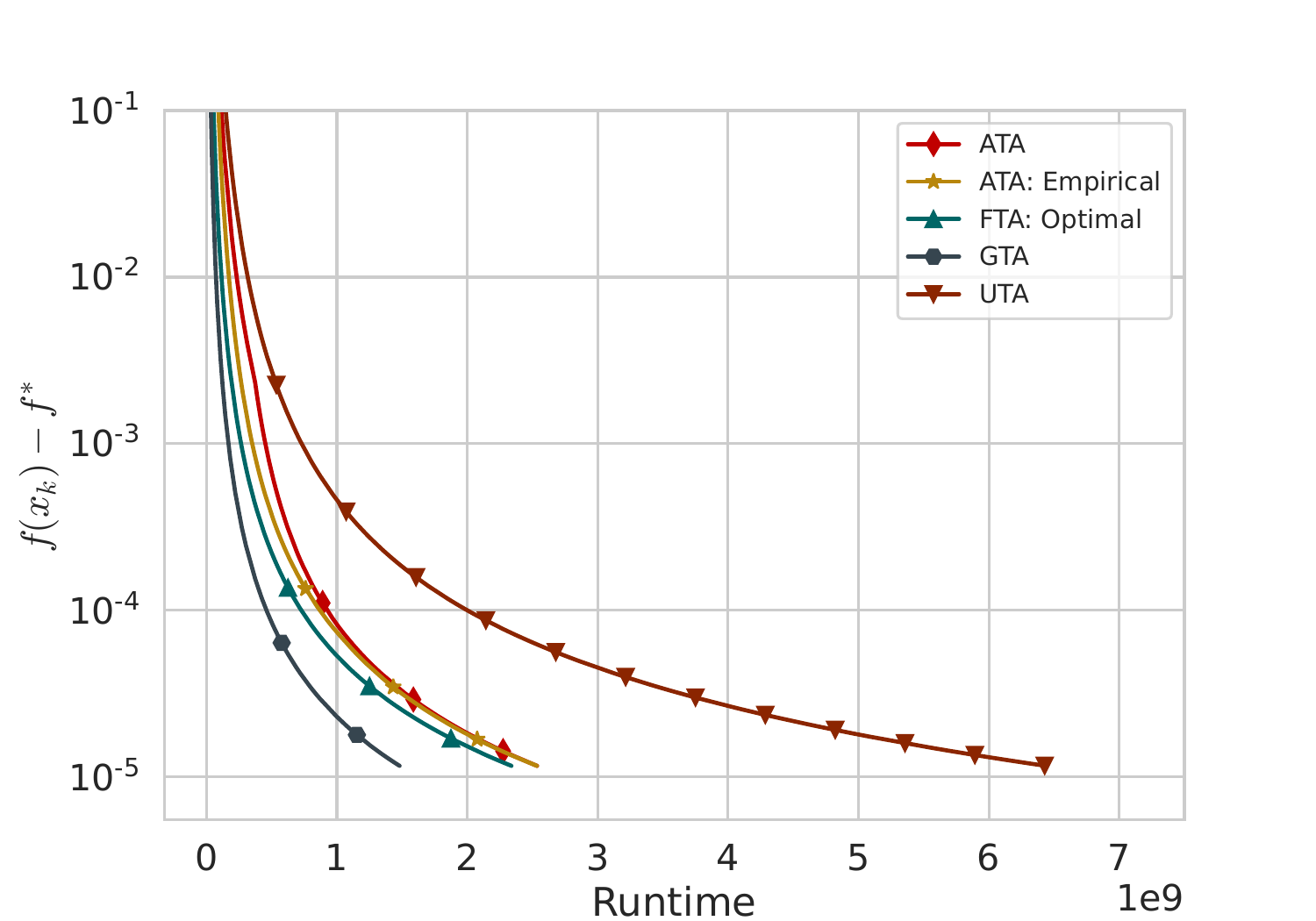} &
        \includegraphics[width=0.234\textwidth]{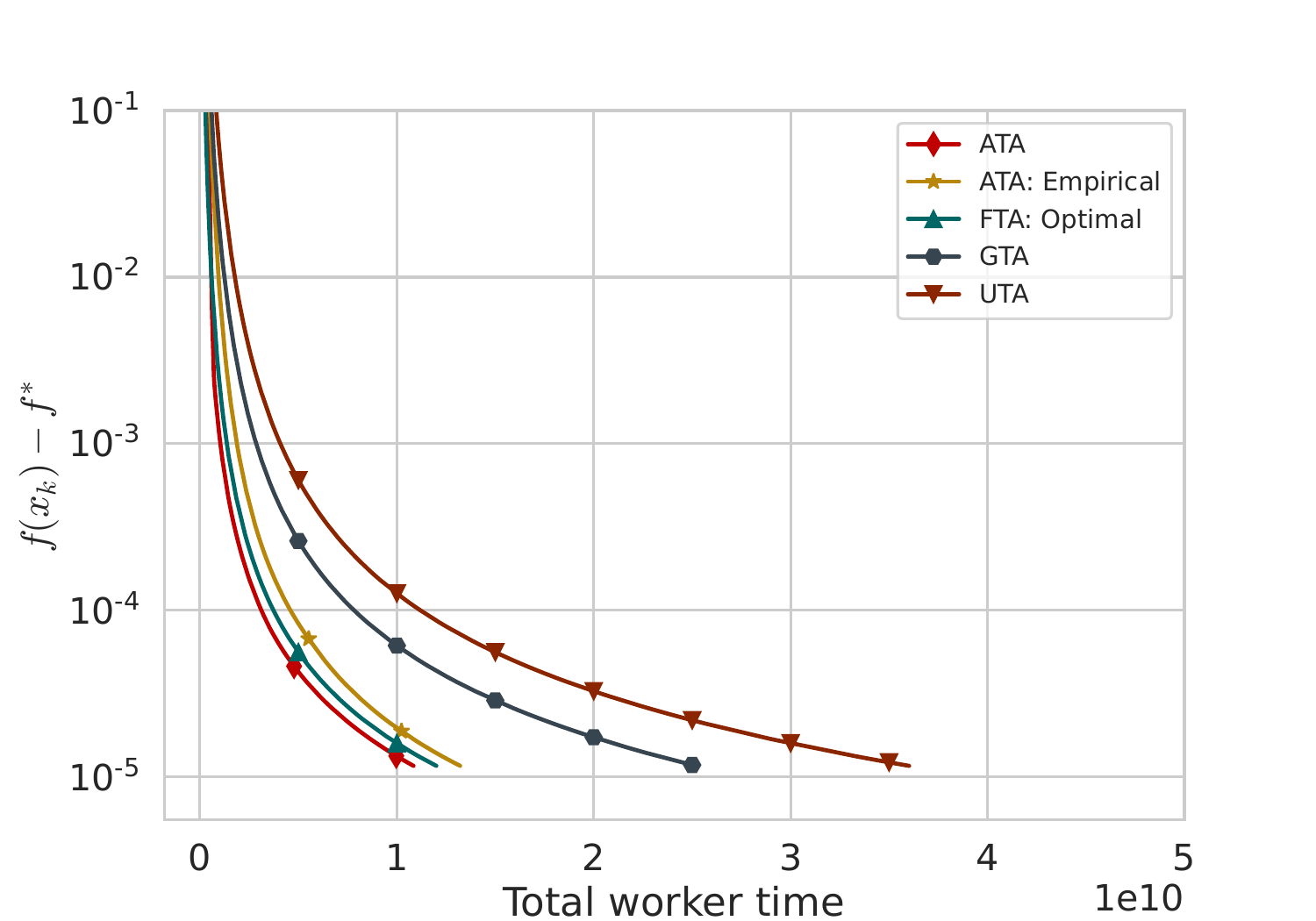} &
        \includegraphics[width=0.234\textwidth]{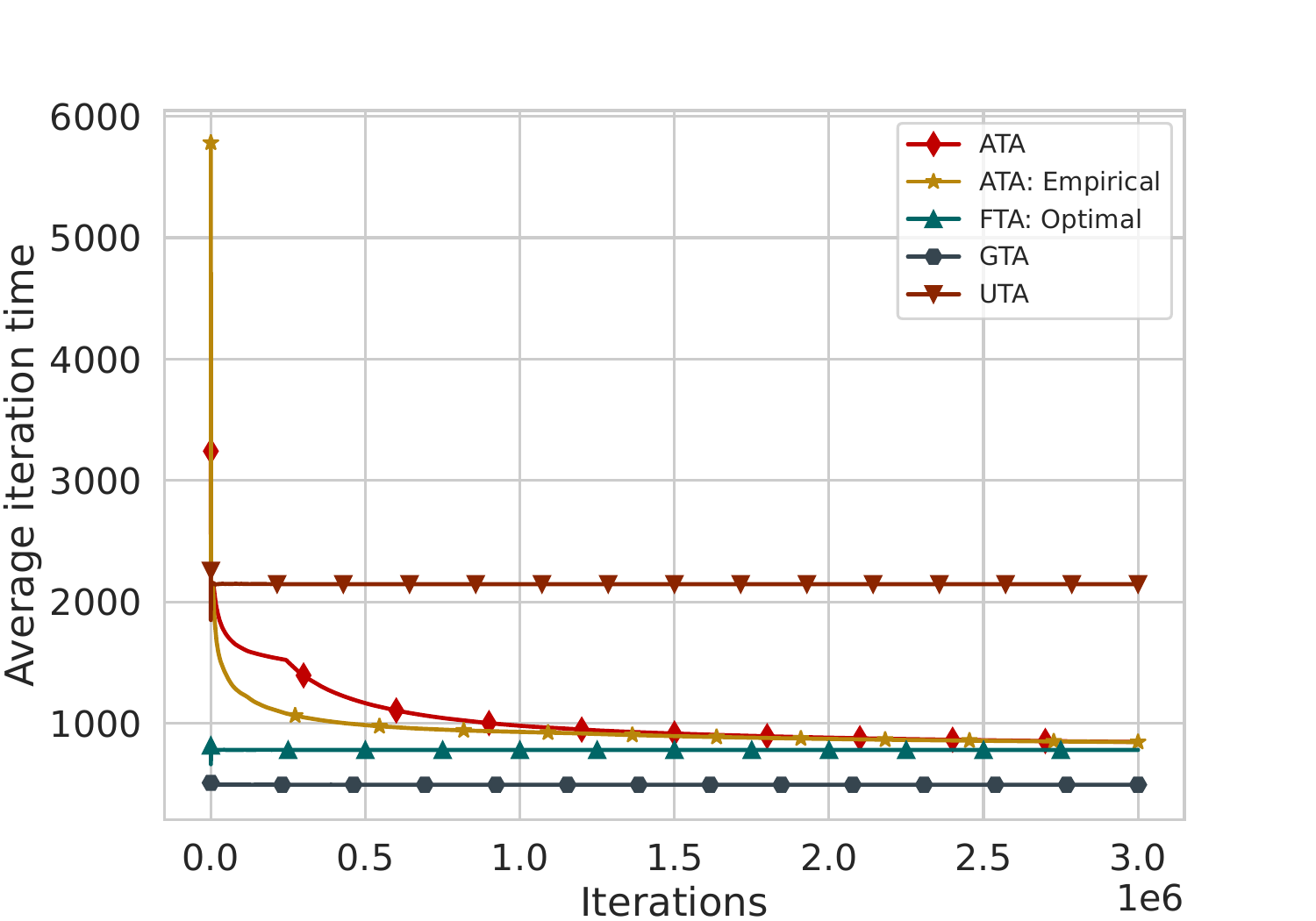} &
        \includegraphics[width=0.234\textwidth]{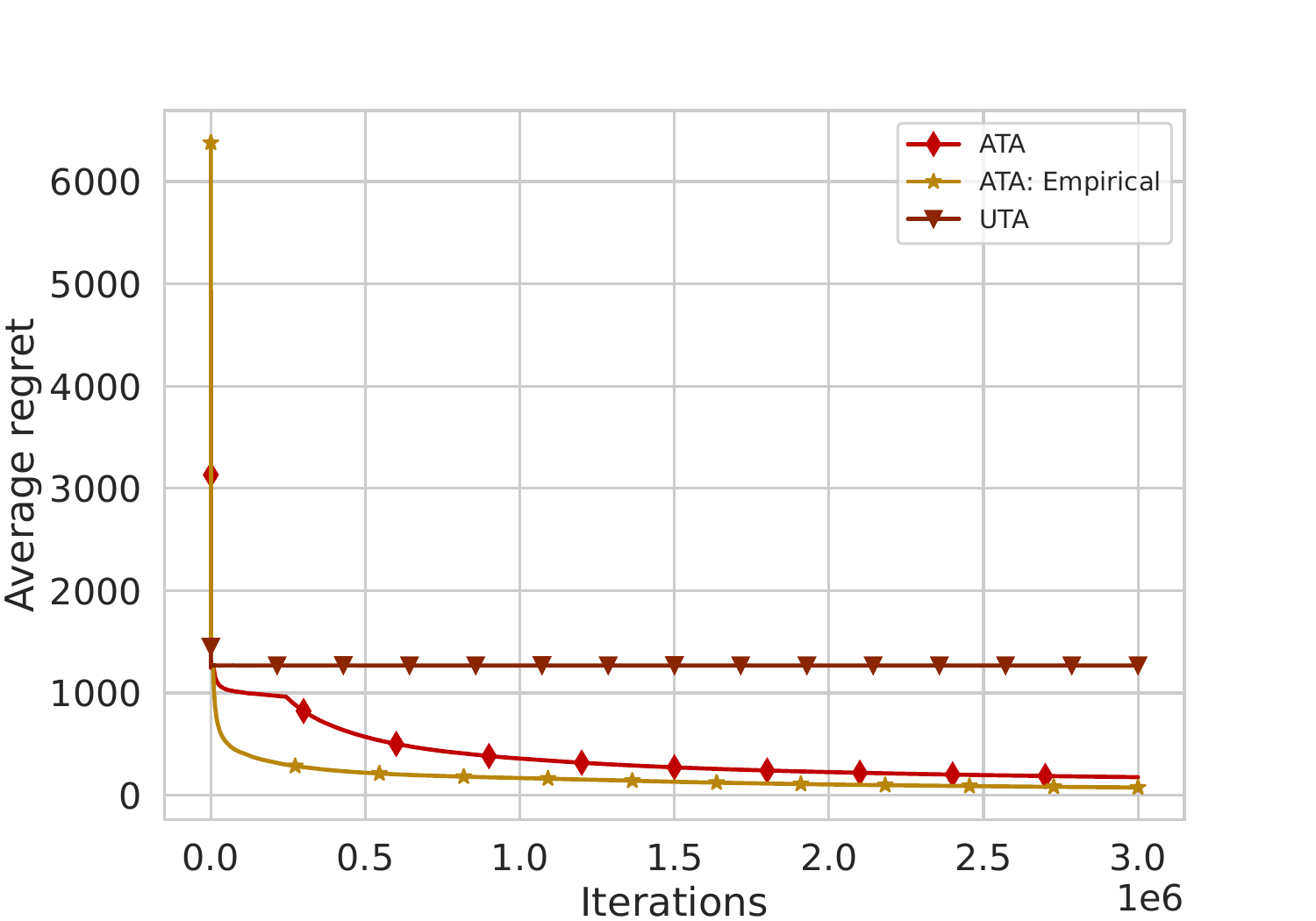} \\
        \includegraphics[width=0.234\textwidth]{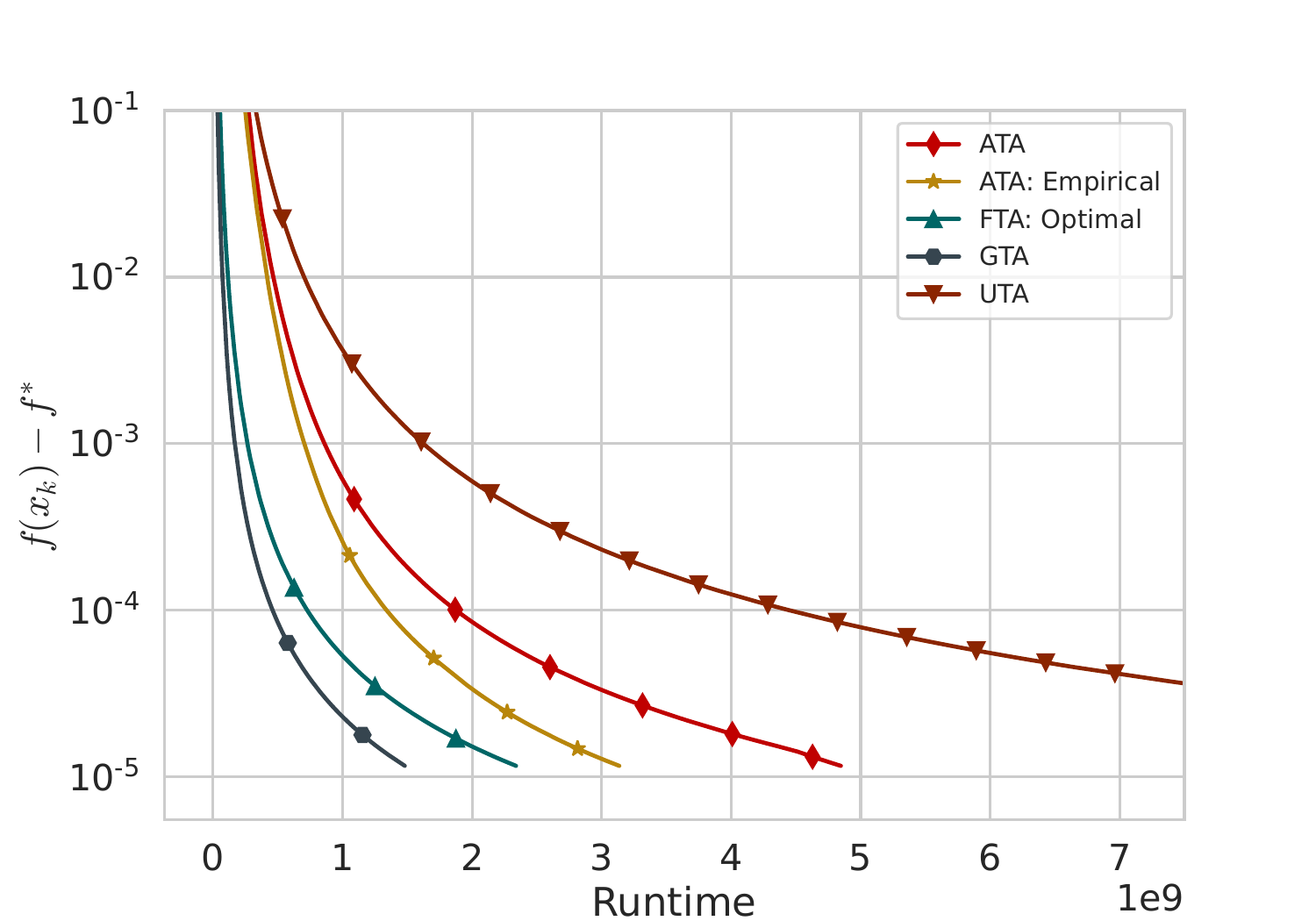} &
        \includegraphics[width=0.234\textwidth]{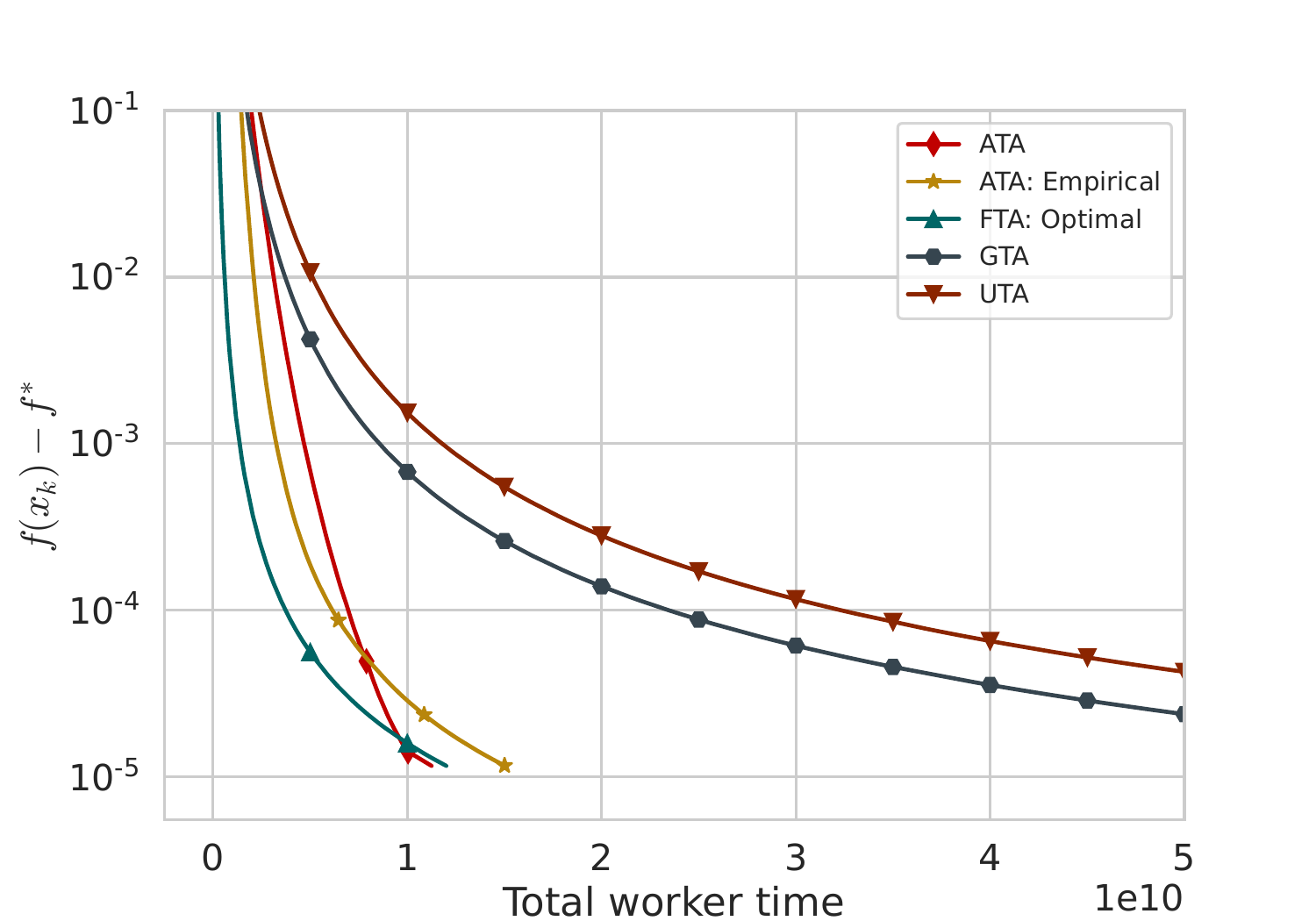} &
        \includegraphics[width=0.234\textwidth]{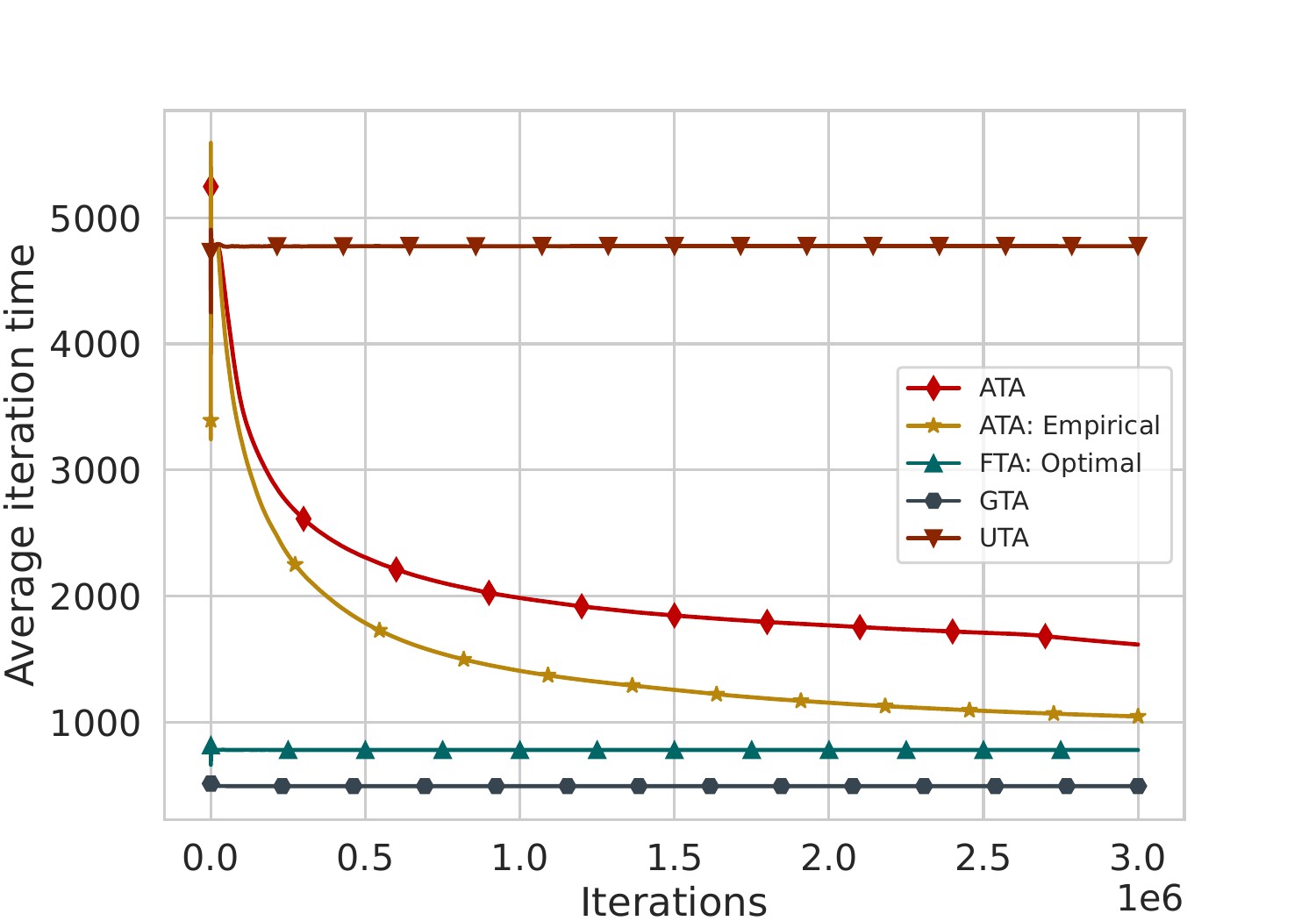} &
        \includegraphics[width=0.234\textwidth]{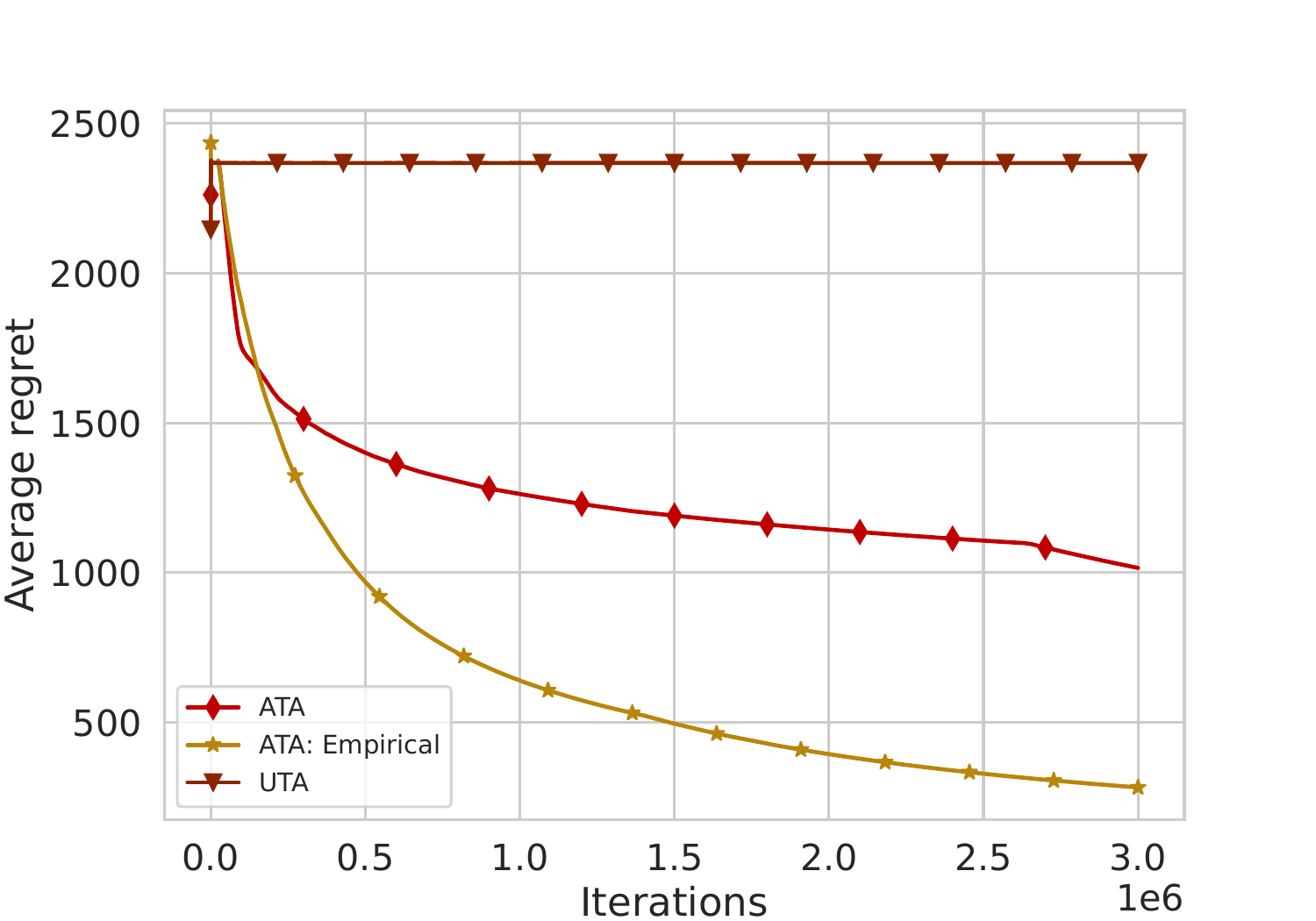} \\
        \includegraphics[width=0.234\textwidth]{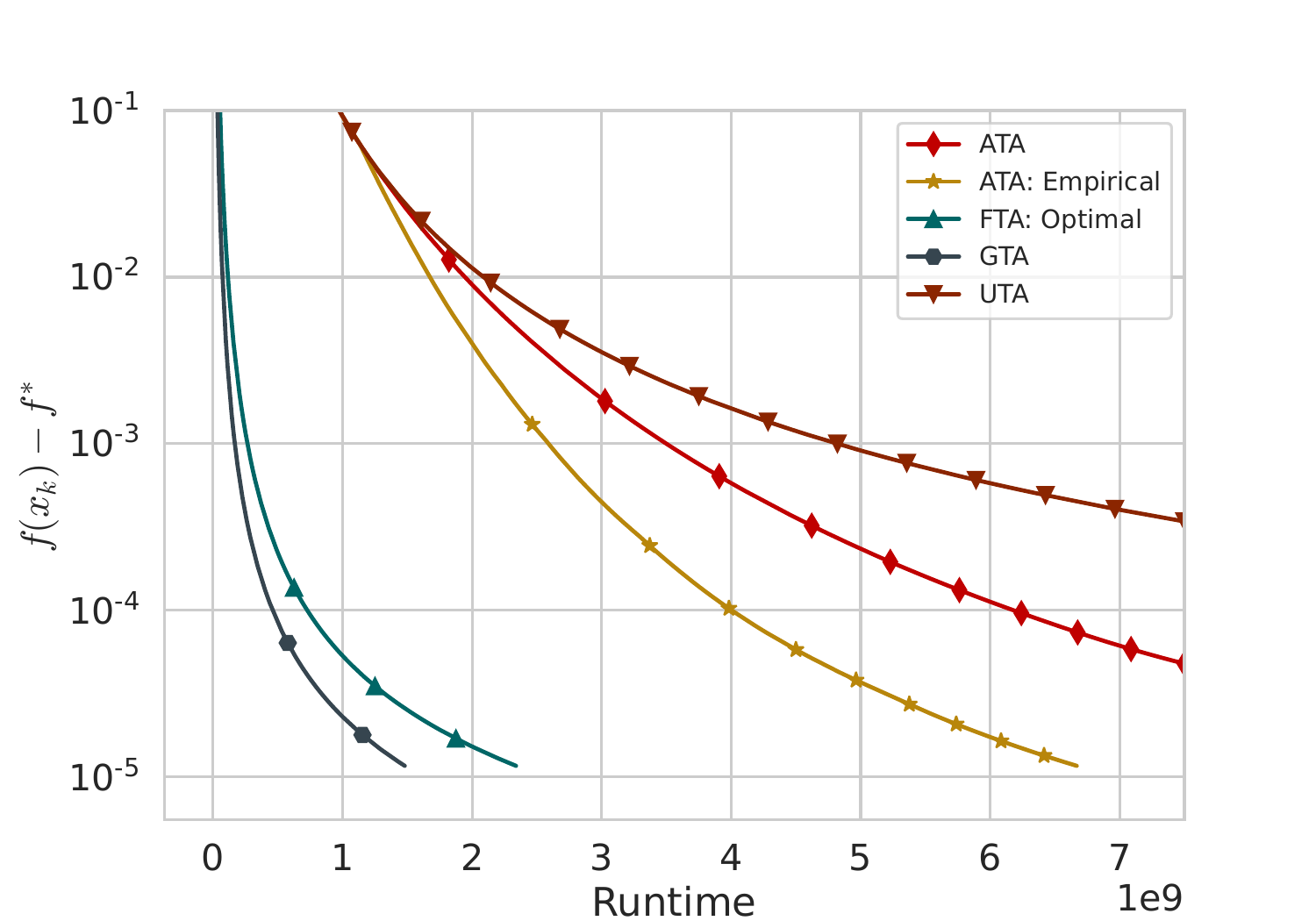} &
        \includegraphics[width=0.234\textwidth]{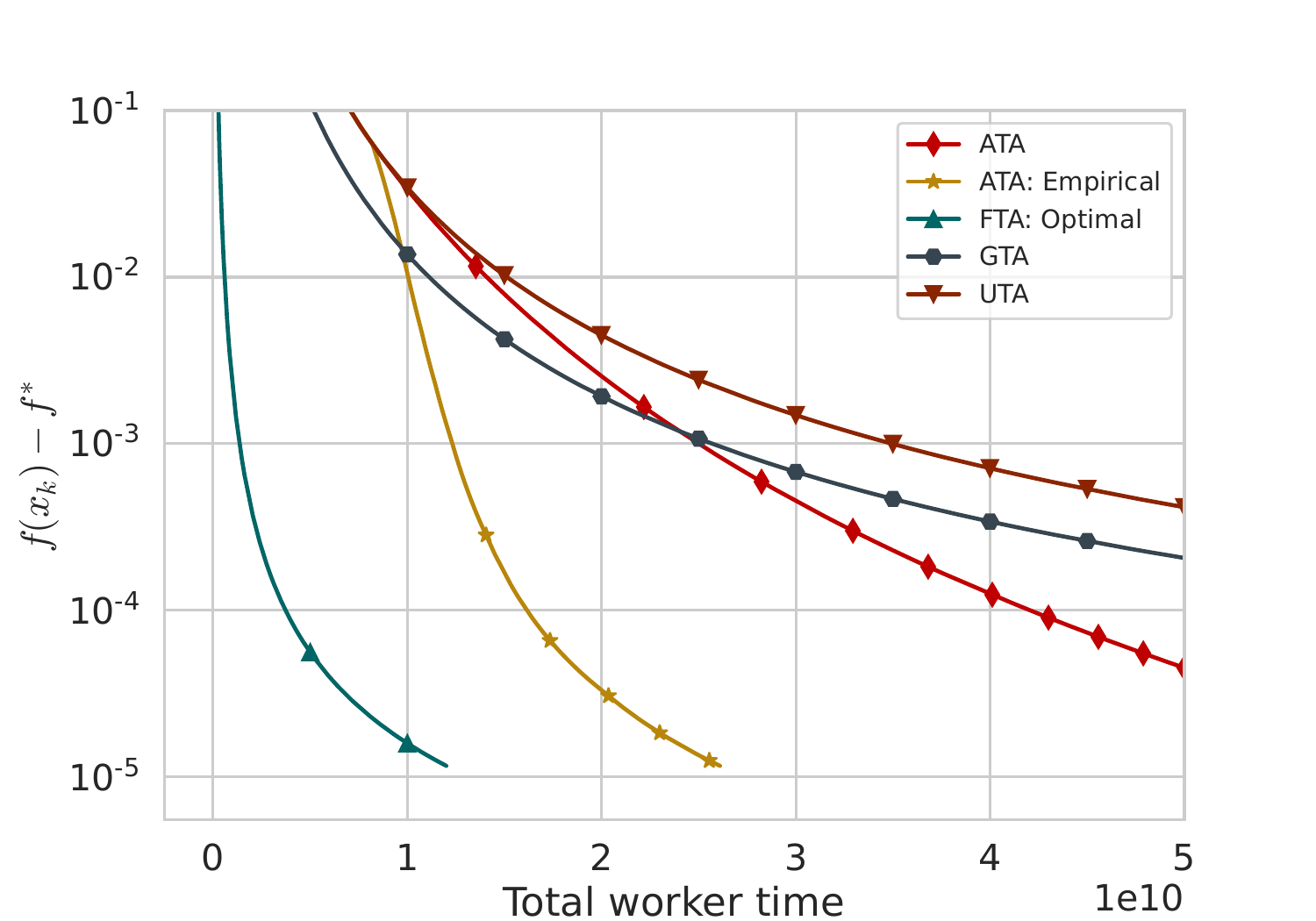} &
        \includegraphics[width=0.234\textwidth]{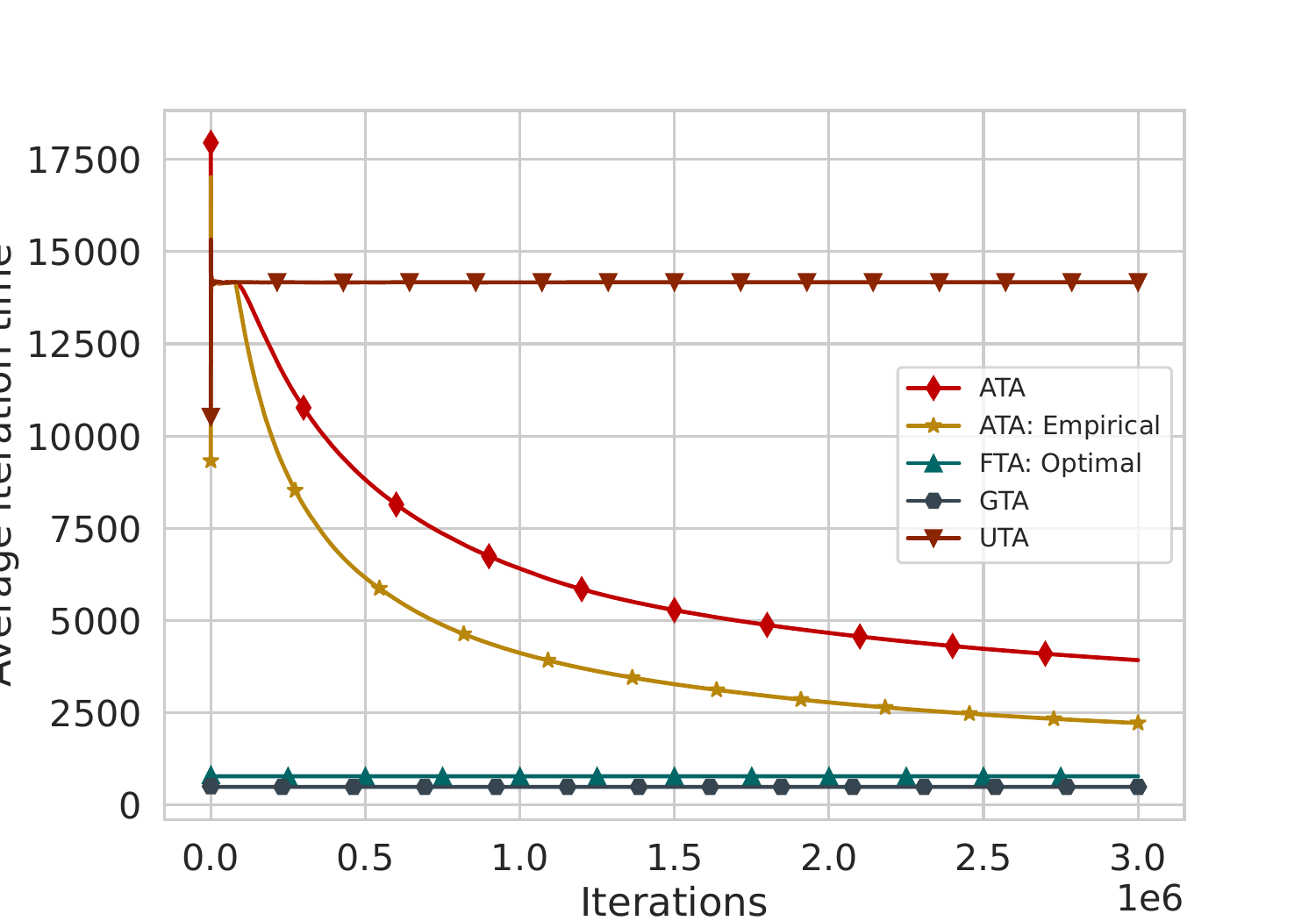} &
        \includegraphics[width=0.234\textwidth]{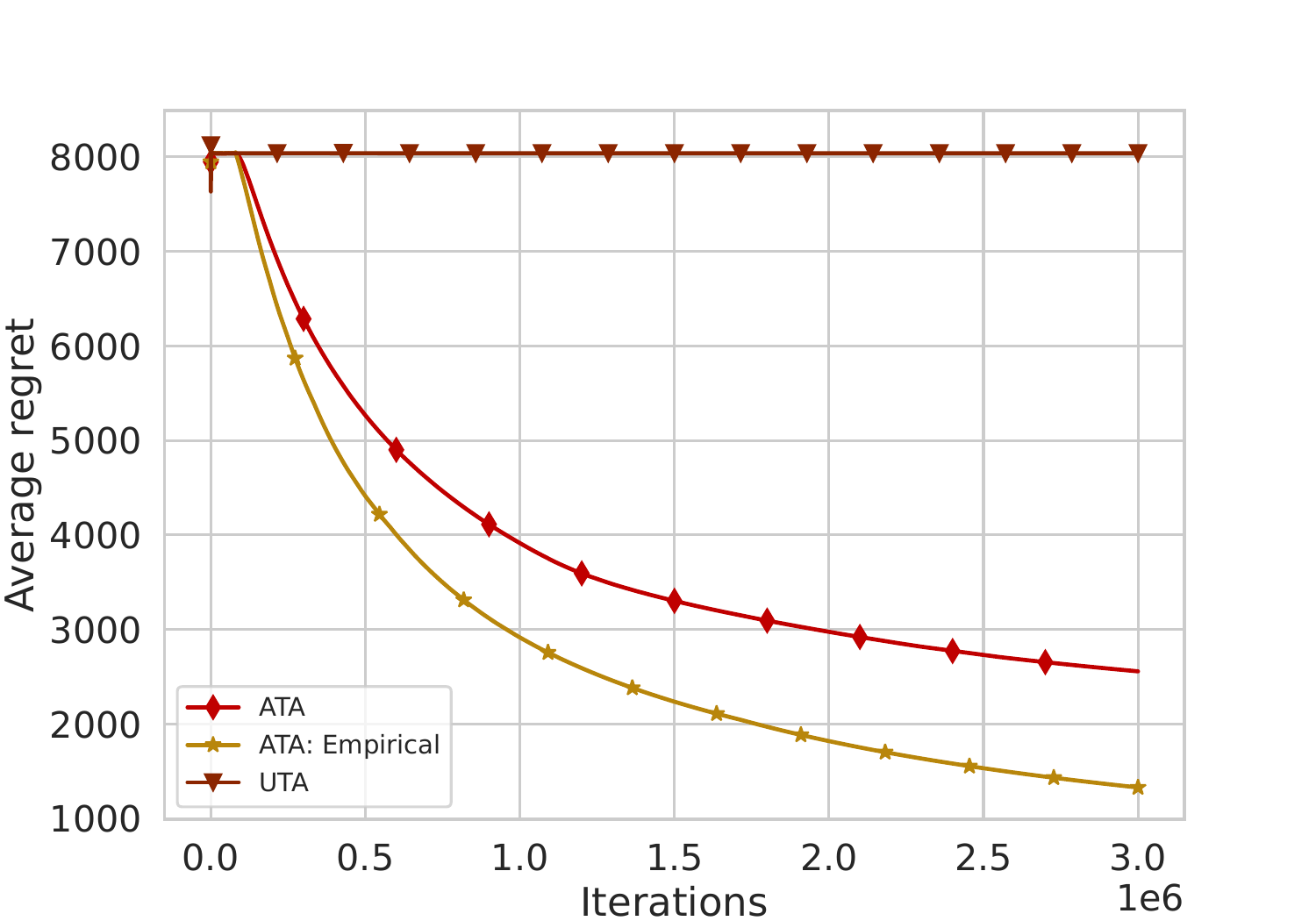}
    \end{tabular}
    \caption{
        We use the same setup as in \Cref{fig:sqrt}, with each row tripling the number of workers, starting from $n=17$.
    }
    \label{fig:linear}
\end{figure*}

The important difference to the previous \Cref{fig:sqrt} is that here \algname{ATA-Empirical} performs better than \algname{ATA}.
This is because the Orlicz norm $\alpha = 4\cdot29n$ is much larger.

Similarly, we provide a numerical comparison in \Cref{table:linear}.
\begin{table}[H]
	\caption{
        This table presents ratios similar to those in \Cref{table:sqrt}.
        }
	\label{table:linear}

	\begin{center}
	\begin{small}
	\begin{sc}
	\begin{tabular}{l|ccc|ccc}
	\toprule
	\multirow{2}{*}{$n$} & \multicolumn{3}{c|}{Total worker time ratio} & \multicolumn{3}{c}{Runtime ratio} \\
	\cmidrule(lr){2-4} \cmidrule(lr){5-7}
	& \algname{ATA} & \algname{ATA-Empirical} & \algname{OFTA} & \algname{ATA} & \algname{ATA-Empirical} & \algname{OFTA} \\
	\midrule
	$17$ & $2.32$ & $1.91$ & $2.1$ & $1.71$ & $1.71$ & $1.58$ \\
	$51$ & $6.71$ & $5.02$ & $6.29$ & $3.27$ & $2.12$ & $1.58$ \\
	$153$ & $3.41$ & $8.68$ & $18.87$ & $7.96$ & $4.5$ & $1.58$ \\
	\bottomrule
	\end{tabular}
	\end{sc}
	\end{small}
	\end{center}
\end{table}

\subsection{Heterogeneous Time Distributions}
\label{sec:heterogeneous_distributions}

So far, we have only considered cases where clients follow the same distributions but with different means.
In this section, we extend our experiments to cases where the distributions themselves differ. 
We consider five distributions: Exponential, Uniform, Half-Normal, Lognormal, and Gamma.
We group five workers with these five distributions so that each group has the same mean, then vary the mean across different groups.
More concretely, we use:
\begin{itemize}
    \item $\mathrm{Exp}(c(5g+1))$,
    \item $\mathrm{Uniform}\left(\frac{c(5g+1)}{2}, 3\frac{c(5g+1)}{2}\right)$,
    \item $\left|\cN\left(0, c(5g+1) \sqrt{\frac{\pi}{2}}\right)\right|$,
    \item $\mathrm{Lognormal}\left(\log(c(5g+1))/2, \sqrt{\log(c(5g+1))}\right)$,
    \item $\mathrm{Gamma}\left((c(5g+1))^2, \frac{1}{c(5g+1)}\right)$ with shape and scale parameters.
\end{itemize}
Next, we add a constant $c(5g+1)$ to all the distributions, where $c = 29$, and $g$ represents the group number, starting from 0. 
The clients are divided into $n/5$ groups.

The results of the experiments are shown in \Cref{fig:hetero}.
The plots demonstrate that the algorithms are robust across different distributions.

\begin{figure*}[t]
    \centering
    \begin{tabular}{cccc}
        \includegraphics[width=0.234\textwidth]{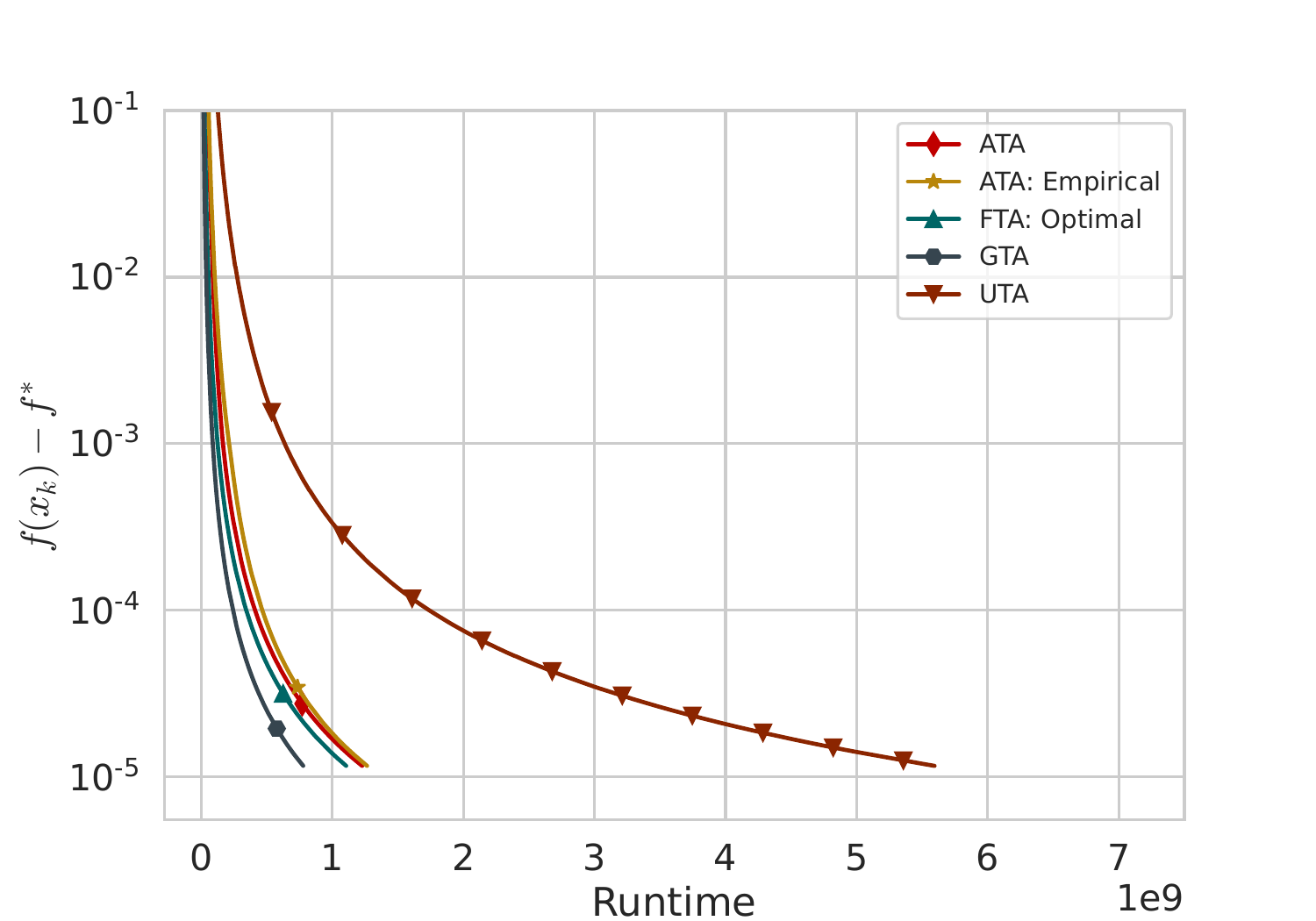} &
        \includegraphics[width=0.234\textwidth]{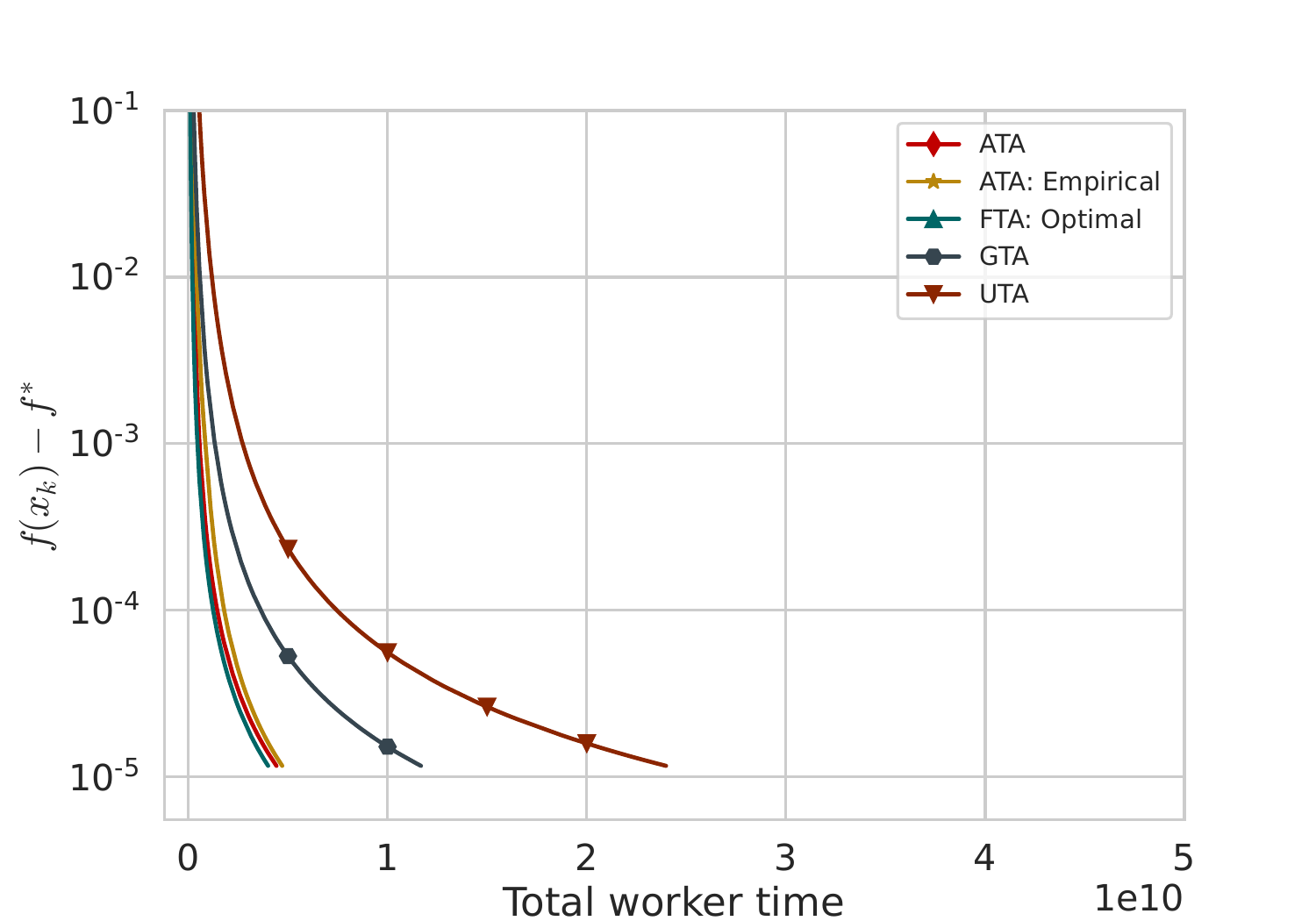} &
        \includegraphics[width=0.234\textwidth]{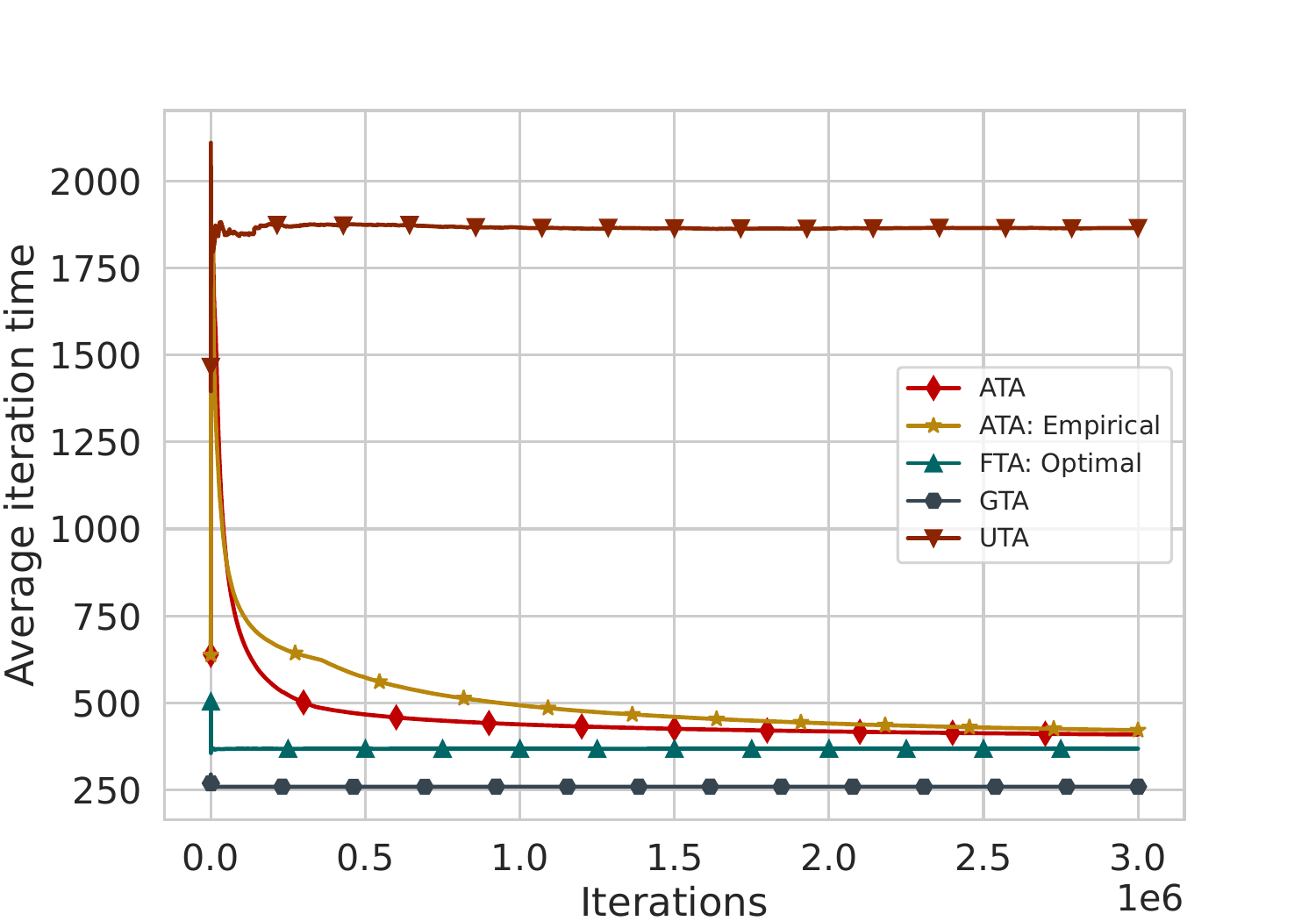} &
        \includegraphics[width=0.234\textwidth]{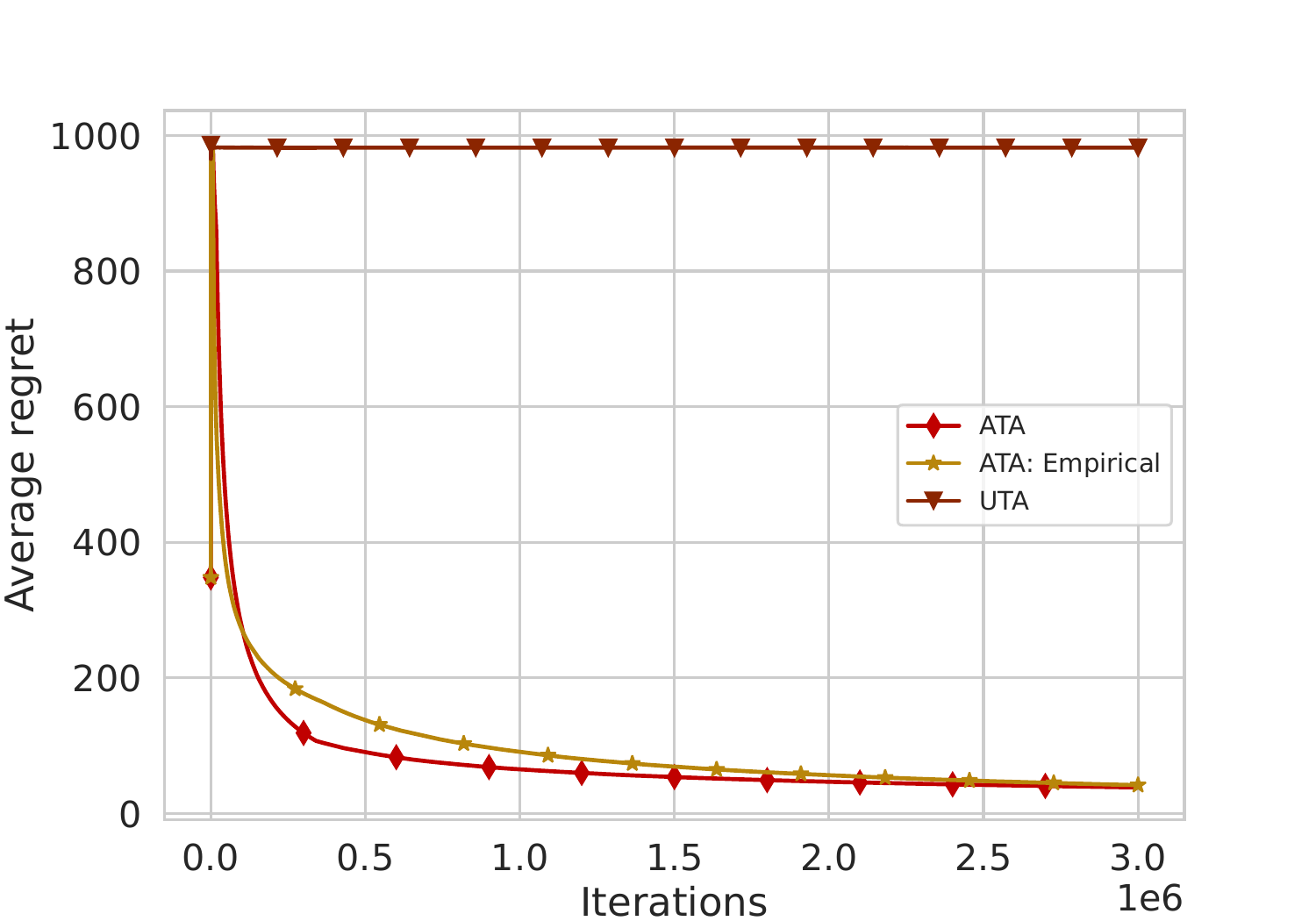} \\
        \includegraphics[width=0.234\textwidth]{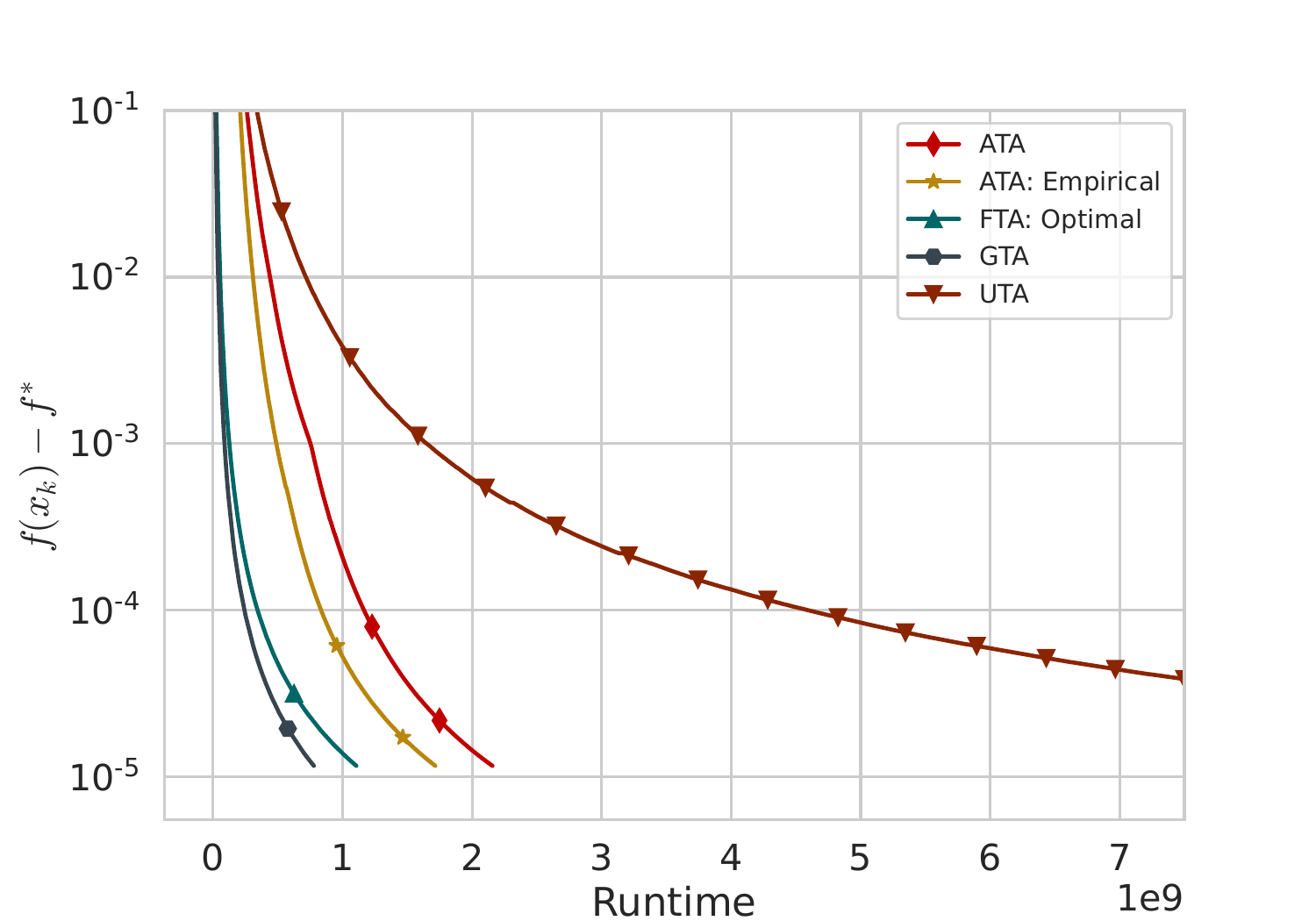} &
        \includegraphics[width=0.234\textwidth]{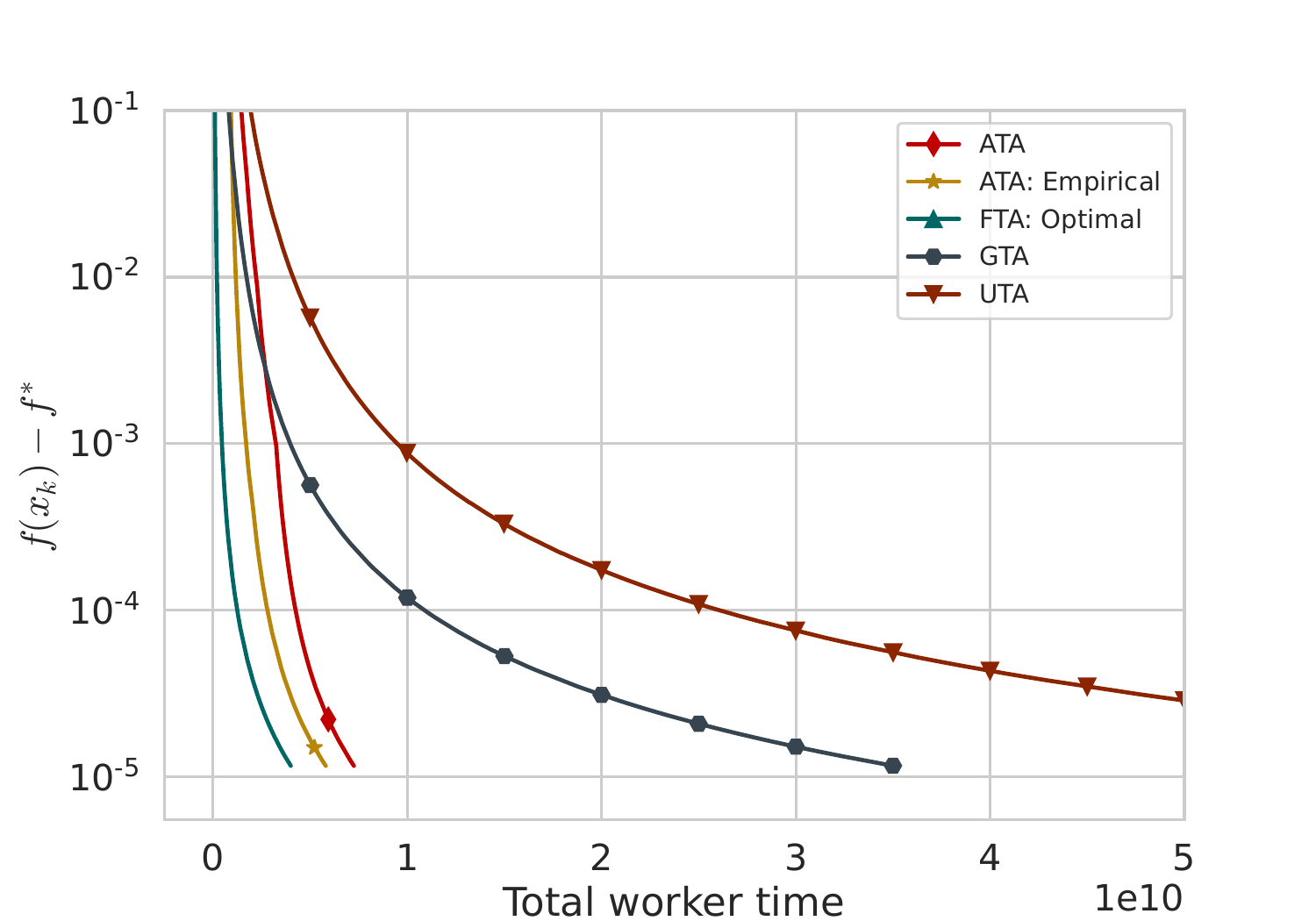} &
        \includegraphics[width=0.234\textwidth]{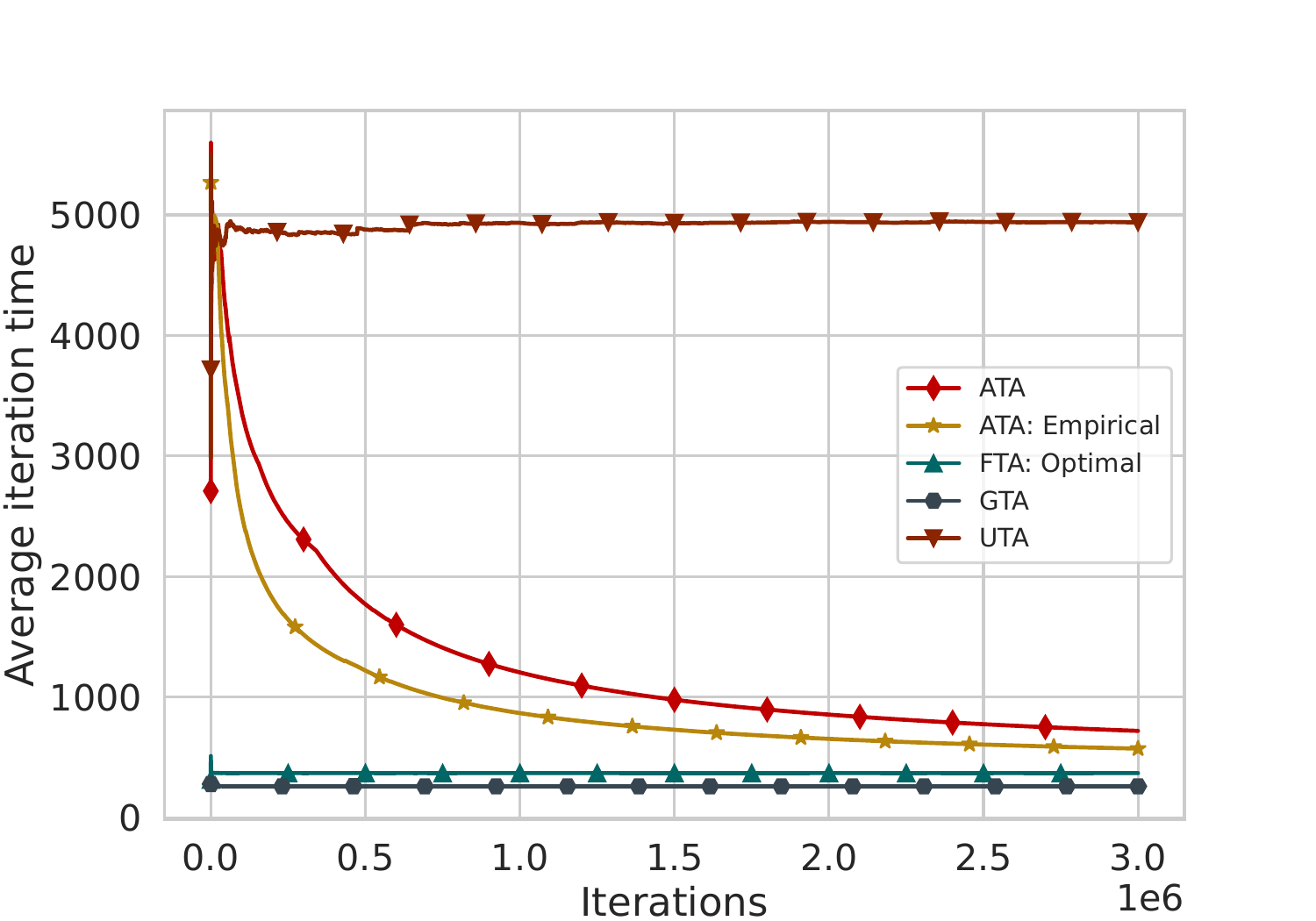} &
        \includegraphics[width=0.234\textwidth]{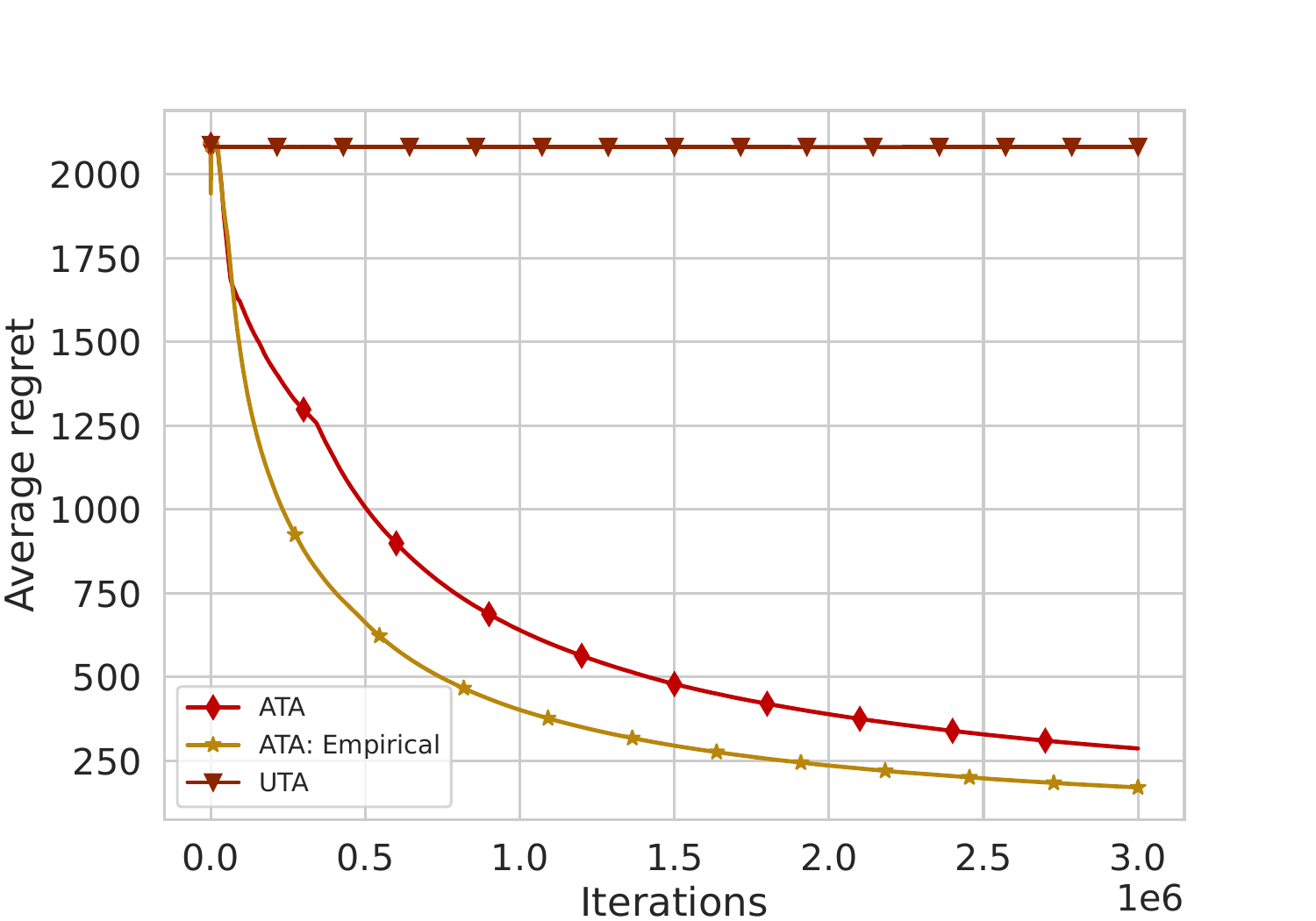} \\
        \includegraphics[width=0.234\textwidth]{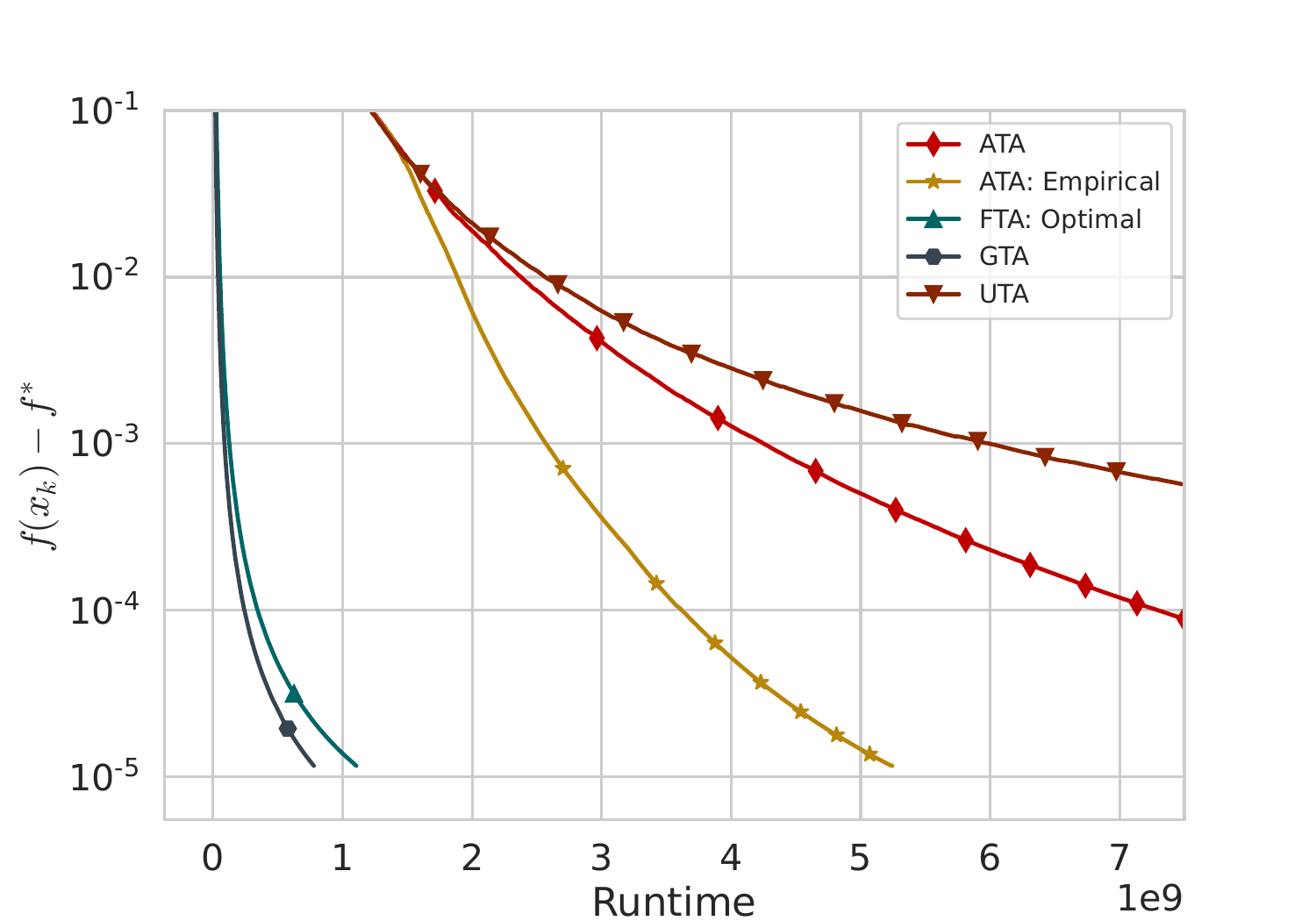} &
        \includegraphics[width=0.234\textwidth]{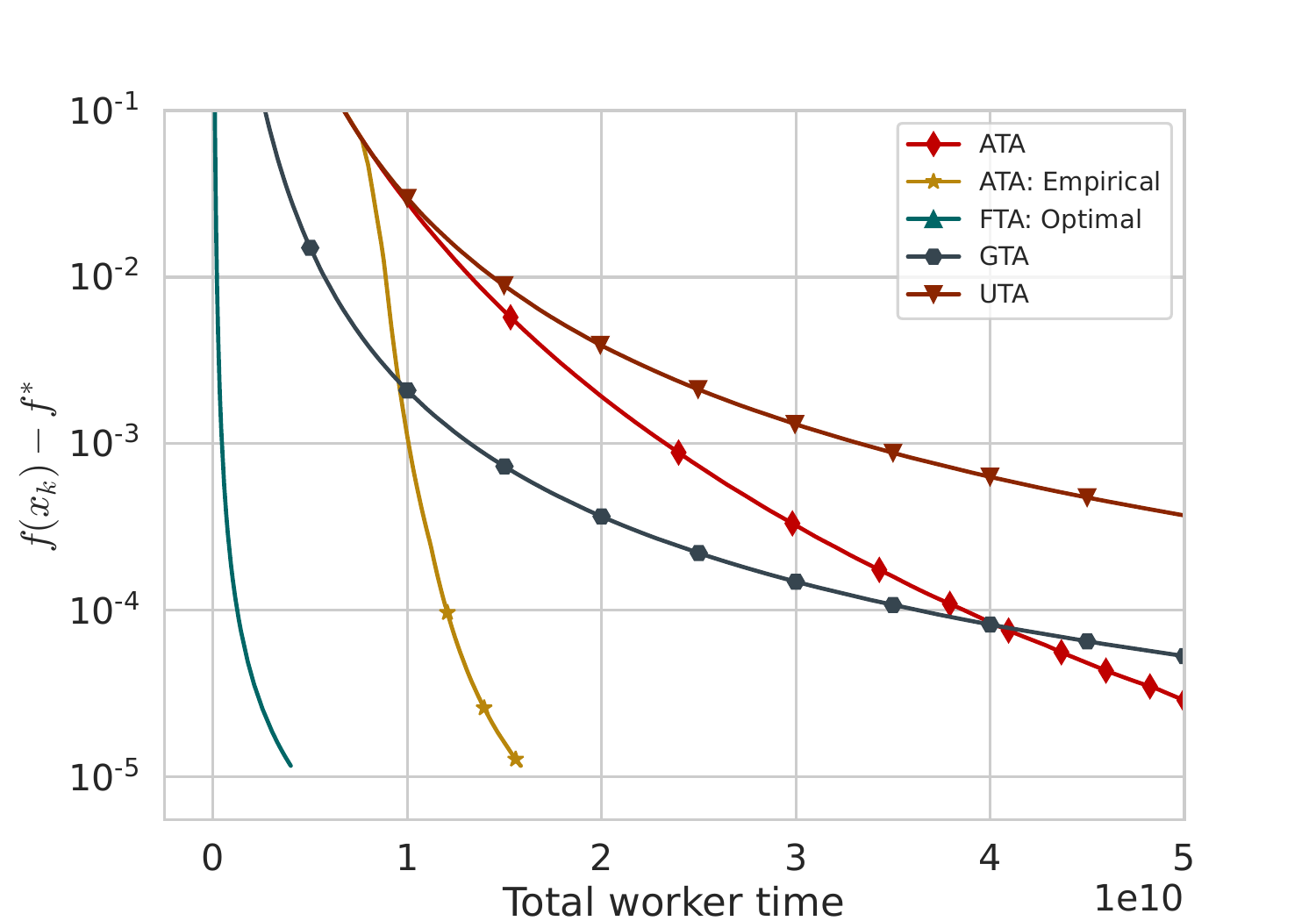} &
        \includegraphics[width=0.234\textwidth]{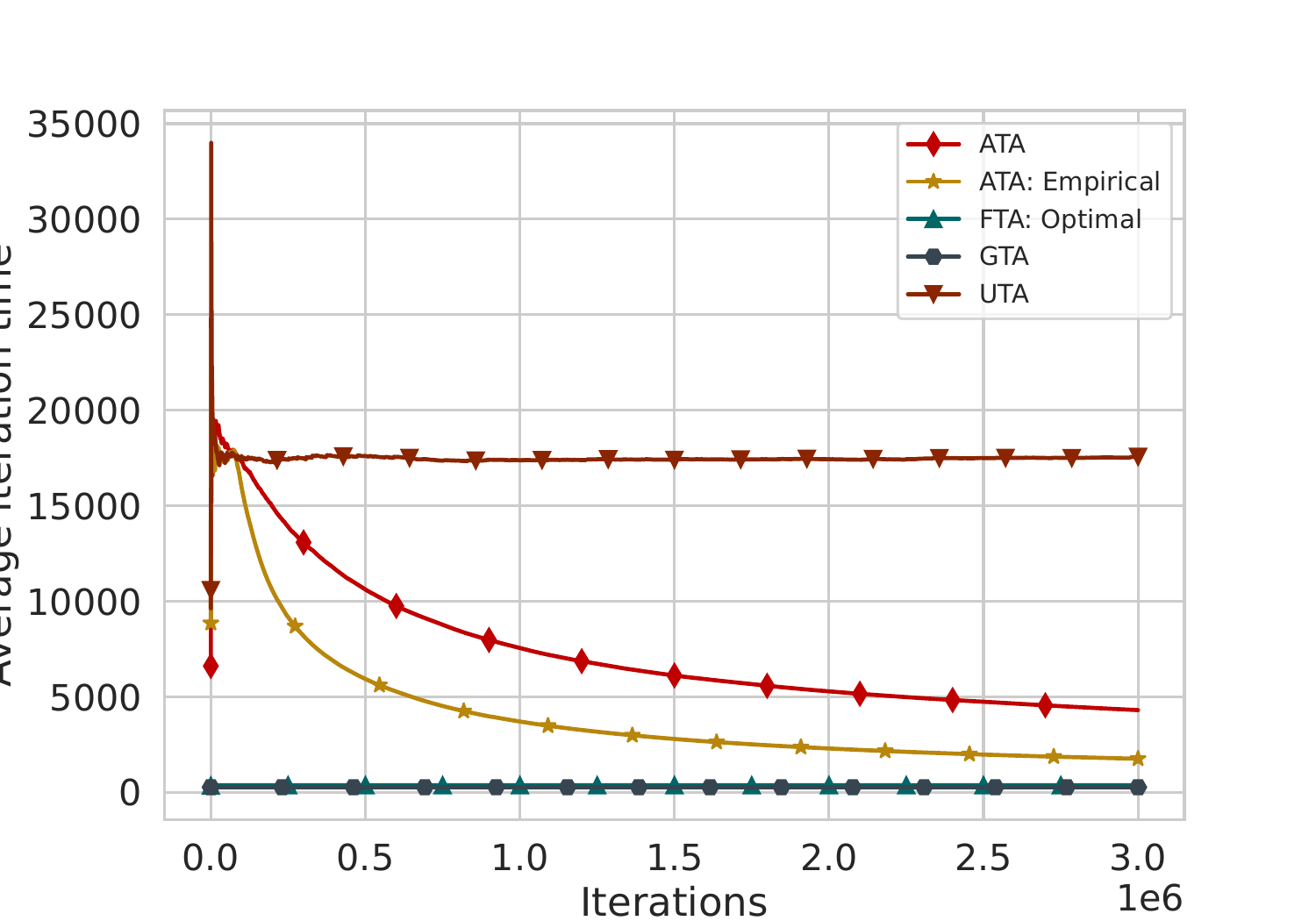} &
        \includegraphics[width=0.234\textwidth]{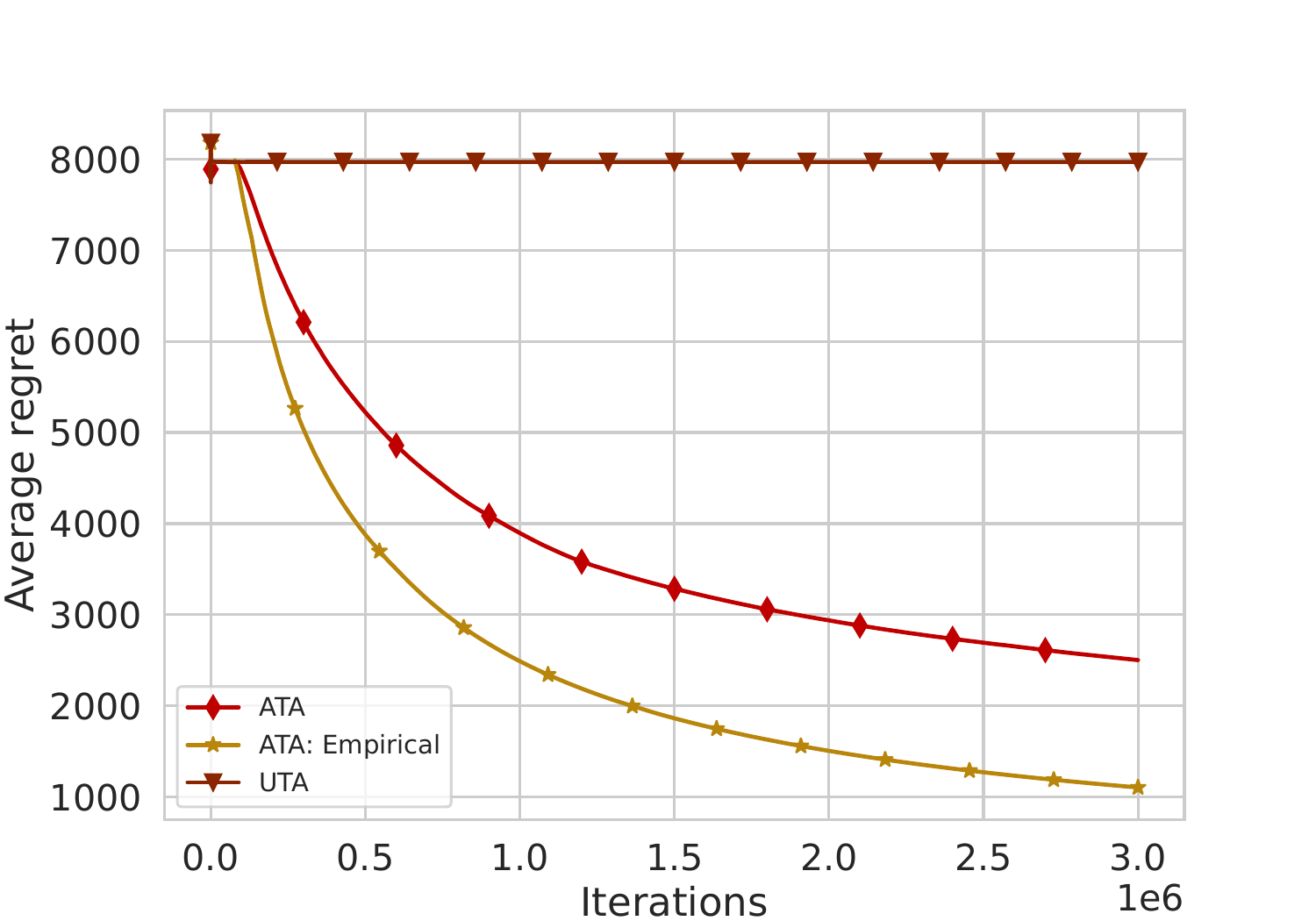}
    \end{tabular}
    \caption{
        Each row corresponds to an increasing number of workers, with $n = 15, 45, 150$ from top to bottom.
        We consider five distributions—Exponential, Uniform, Half-Normal, Lognormal, and Gamma—grouping them to have the same mean and then varying the mean across different groups.
        The results demonstrate that the algorithms remain robust across different distributions.
        The columns represent the same as in \Cref{fig:sqrt}.
    }
    \label{fig:hetero}
\end{figure*}

\subsection{Regret}
\label{sec:regret}

In this section, we verify Theorems~\ref{thm:main} and \ref{thm:main2} on regret through simulations. We set $n = 20$ and $B = 5$, with the computation time for worker $i$ following the distribution
$$
\nu_i = \mathrm{Exp}(2i), \quad \text{for all} \quad i \in [n]~.
$$
We ran the simulation five times, and the plots include standard deviation bars, although they are not visible. The results are presented in \Cref{fig:regret}.

\begin{figure*}[h]
\centering
\includegraphics[width=0.5\textwidth]{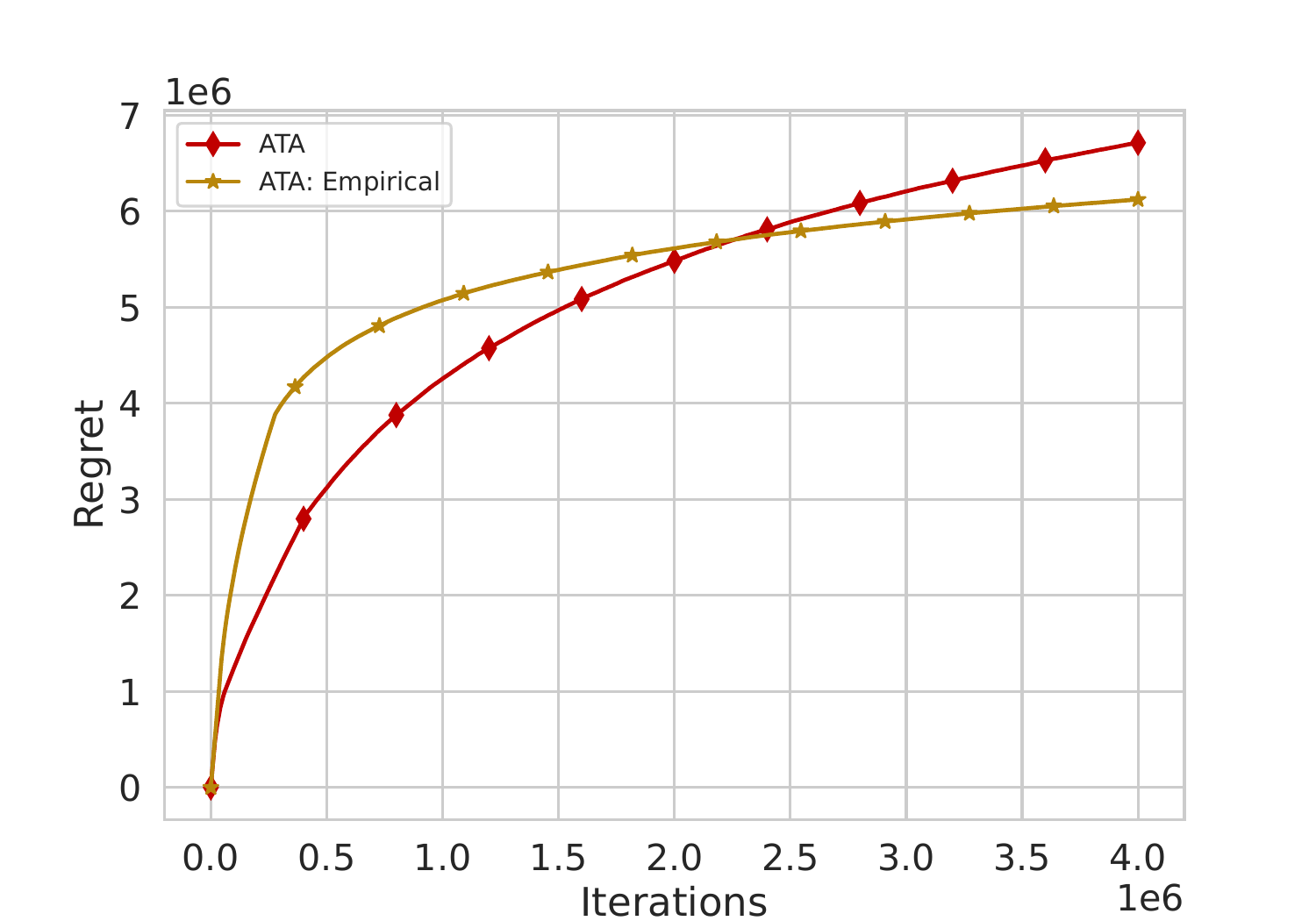}
\caption{Regret growth over iterations.}
\label{fig:regret}
\end{figure*}

As expected, the regret grows logarithmically.

\subsection{Real Dataset}
\label{sec:real_dataset}

In this section, we present an experiment where we train a convolutional neural network (CNN) on the CIFAR-100 dataset \cite{krizhevsky2009learning}.
The network consists of three convolutional layers and two fully connected layers, with a total of 160k parameters.

We use the Adam optimizer \cite{kingma2014adam} with a constant step size of $8 \cdot 10^{-5}$.
The computation time of the workers follows the same setup as in \Cref{fig:linear}.
The results are shown in \Cref{fig:real}.

\begin{figure*}[thb]
    \centering
    \begin{tabular}{cc}
        \includegraphics[width=0.33\textwidth]{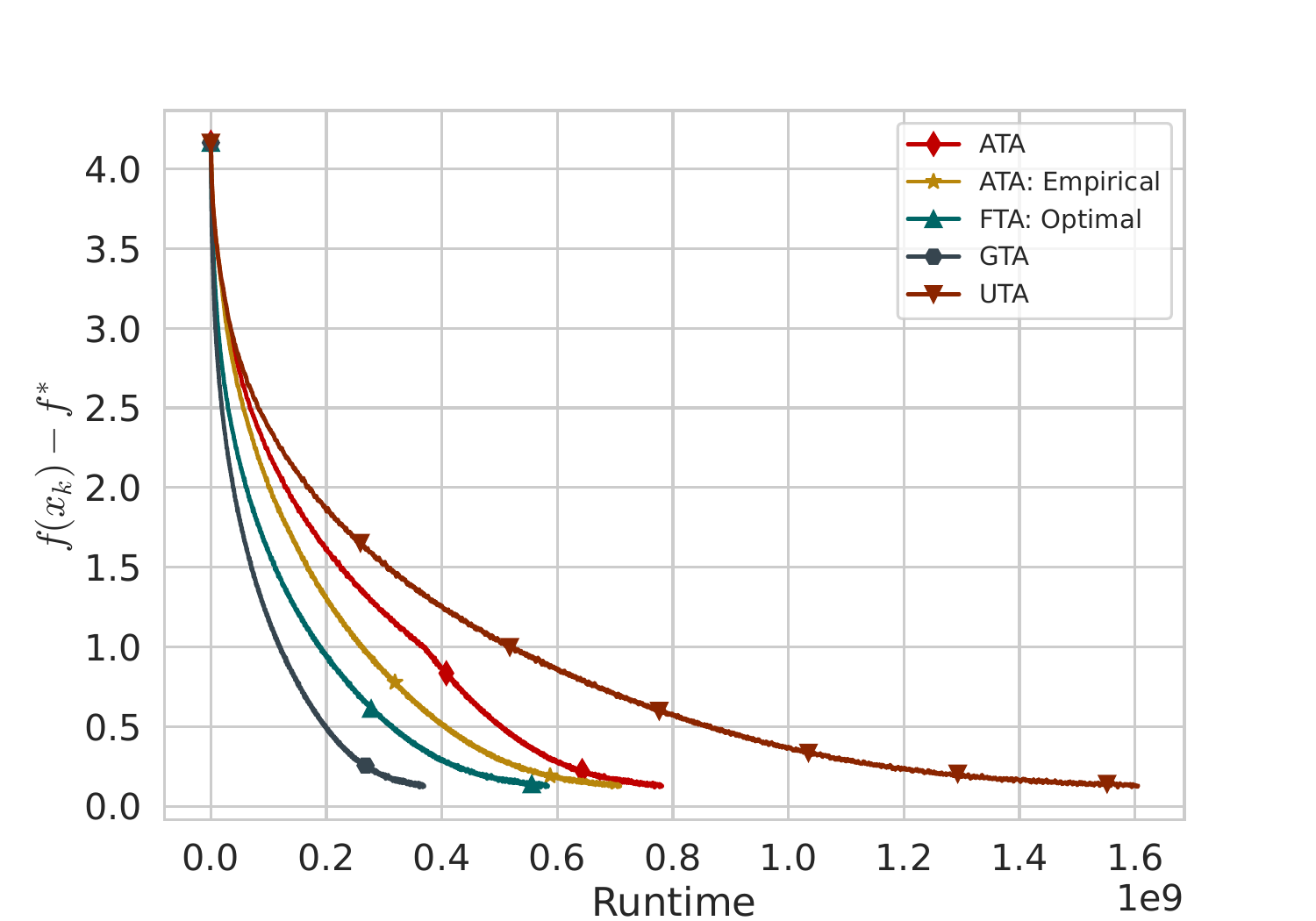} &
        \includegraphics[width=0.33\textwidth]{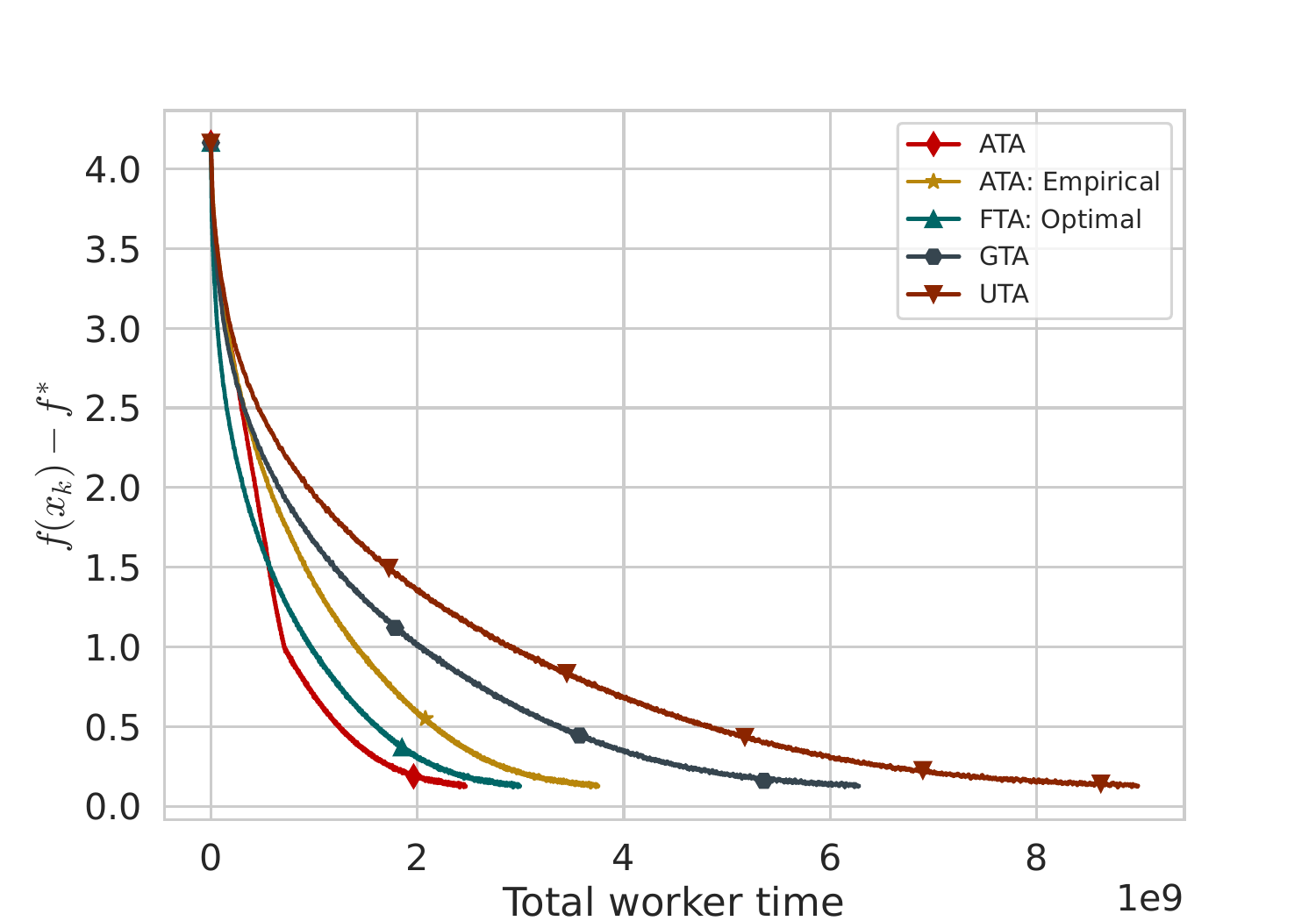} \\
        \includegraphics[width=0.33\textwidth]{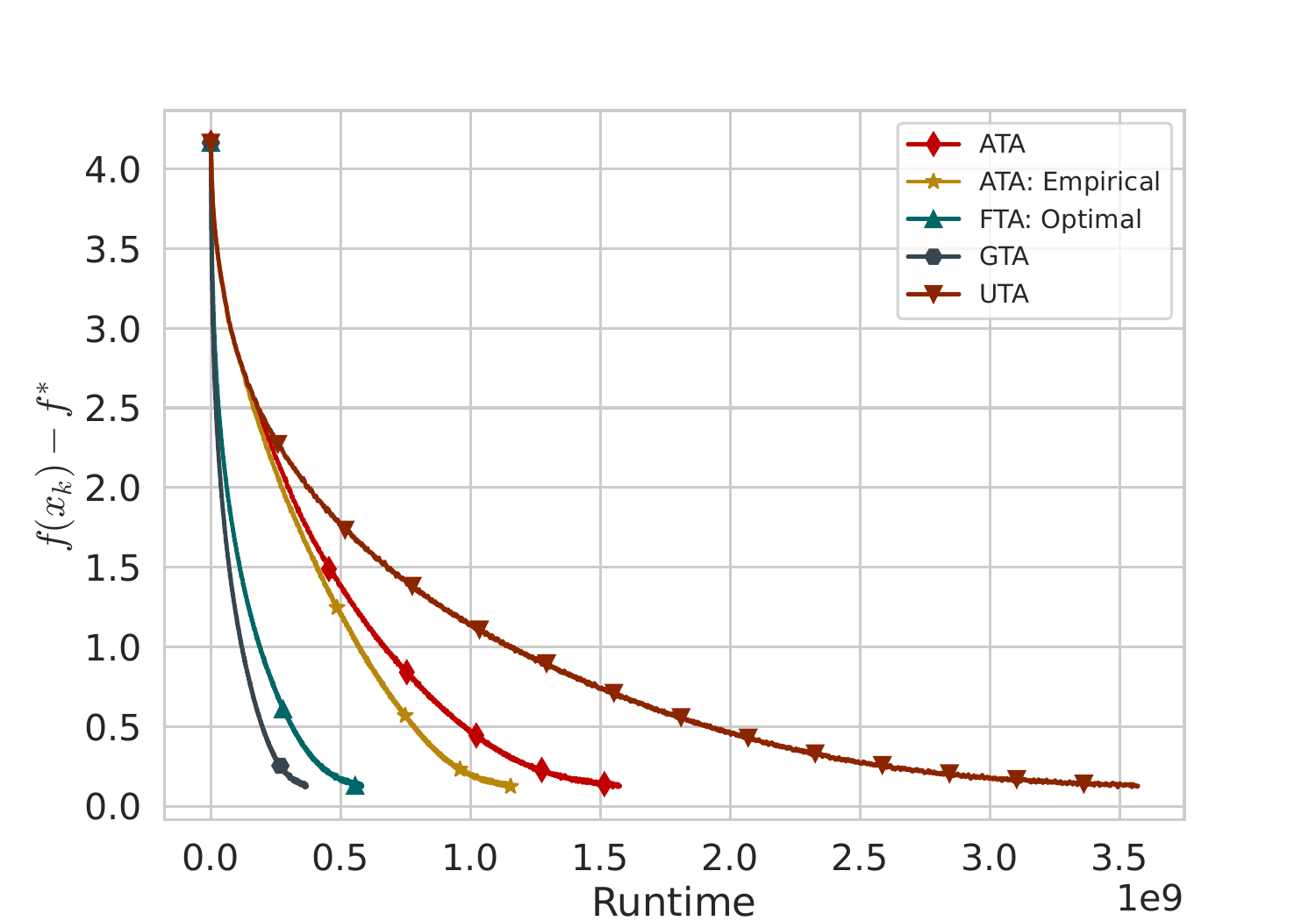} &
        \includegraphics[width=0.33\textwidth]{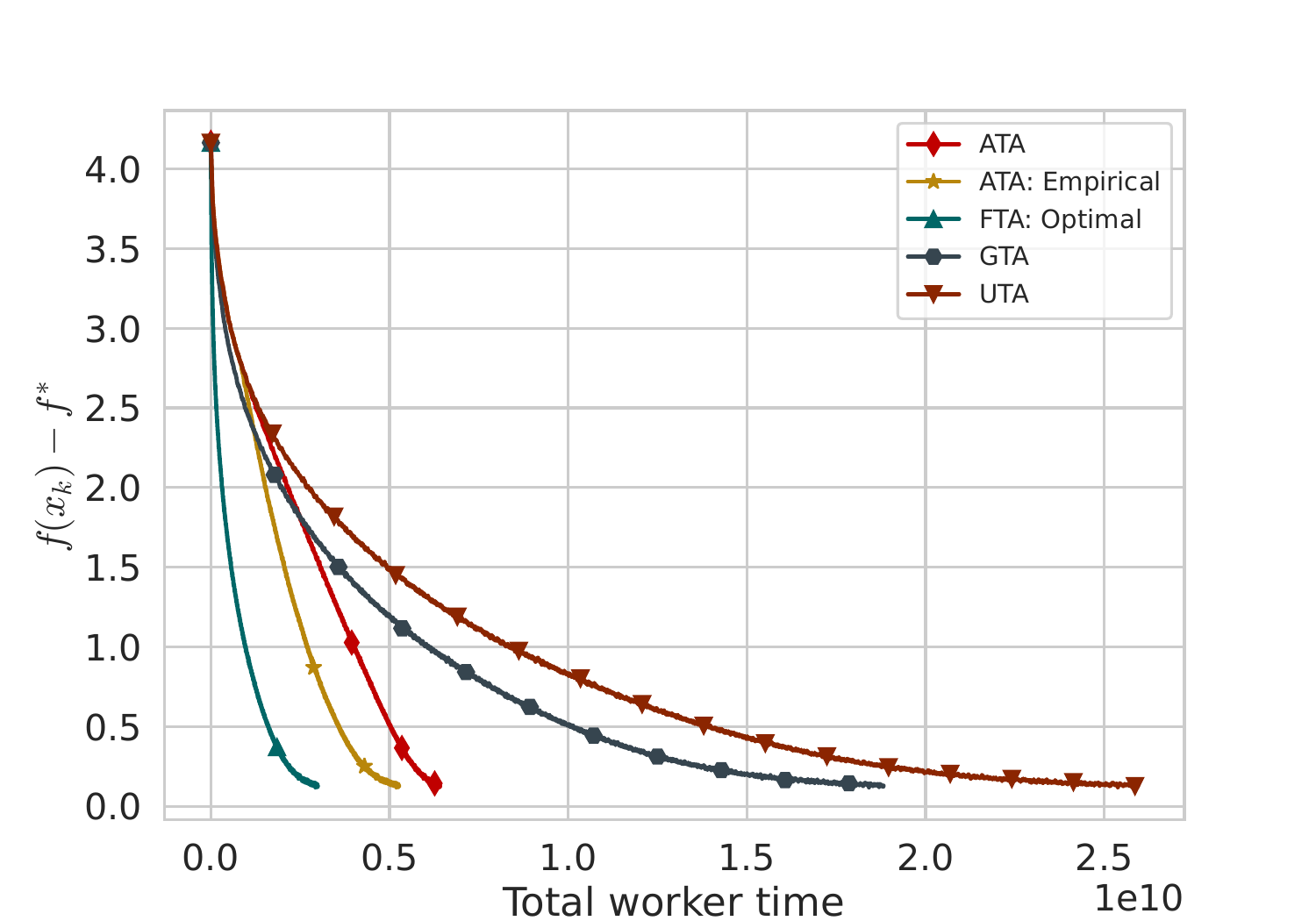} \\
        \includegraphics[width=0.33\textwidth]{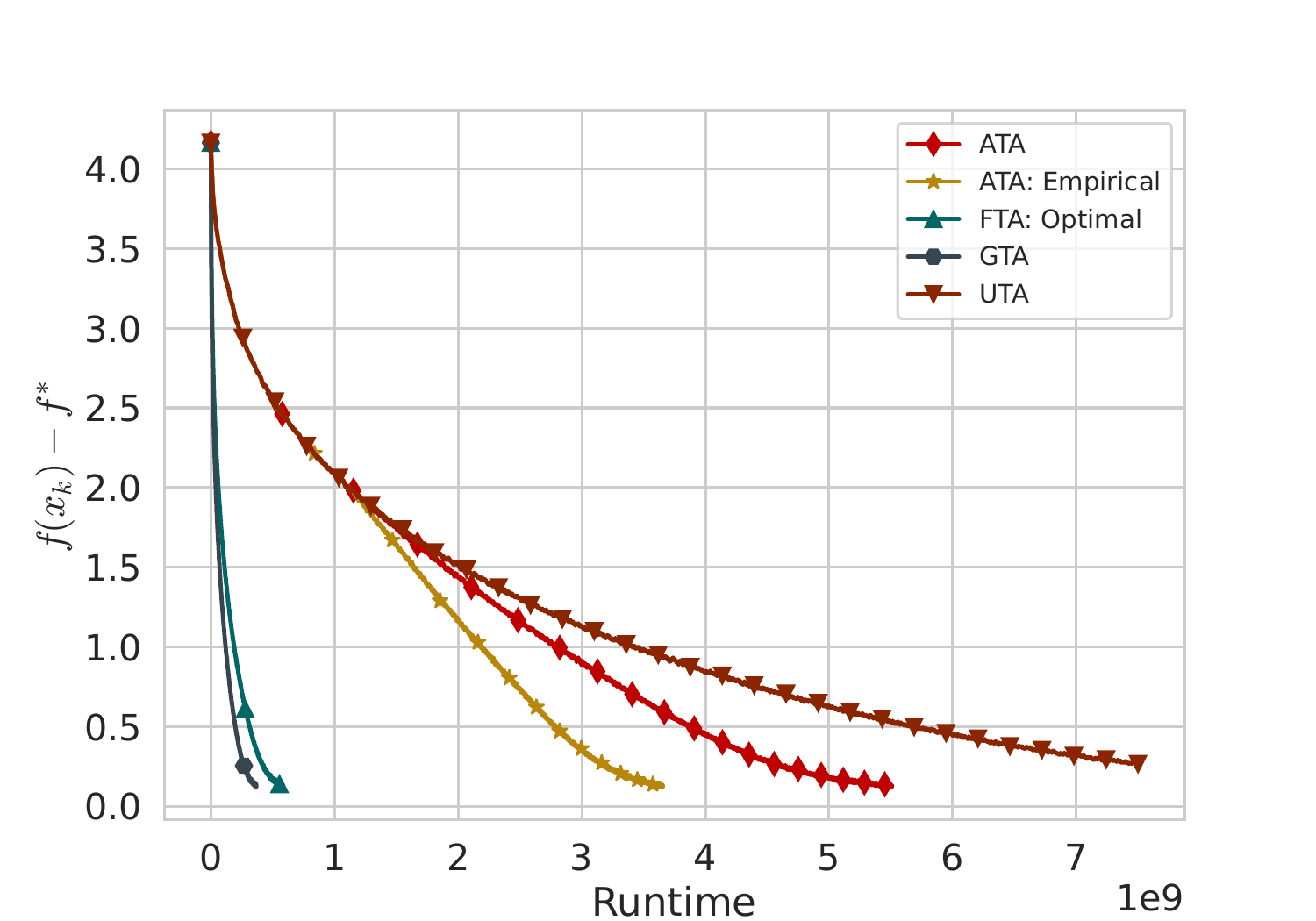} &
        \includegraphics[width=0.33\textwidth]{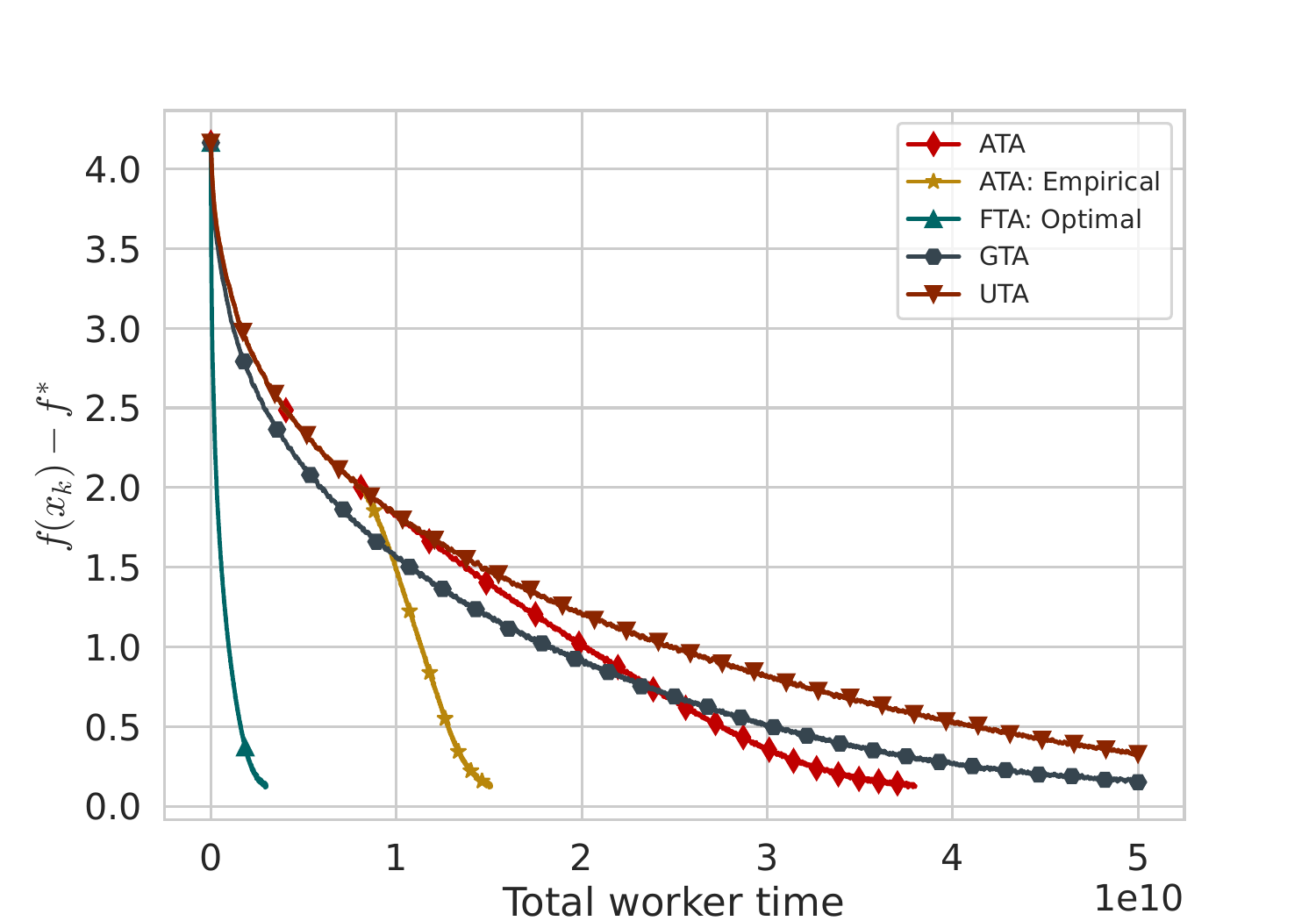}
    \end{tabular}
    \caption{
        We use the CIFAR-100 dataset \cite{krizhevsky2009learning}.
        The model is a CNN with three convolutional layers and two fully connected layers, totaling 160k parameters.
        The Adam optimizer \cite{kingma2014adam} is used with a constant step size of $8 \cdot 10^{-5}$.
        The computation time of the workers follows the same setup as in \Cref{fig:linear}, where the mean time increases linearly.
        The batch size remains the same at $B=23$.
        Each row corresponds to a different number of workers, with $n = 17, 51, 153$ from top to bottom.
    }
    \label{fig:real}
\end{figure*}

\subsection{Impact of Prior Knowledge on Time Distributions}
\label{sec:prior_knowledge}

In real-world systems where multiple machine learning models are trained, estimates of computation times from previous runs may be available. 
With this prior knowledge, \algname{ATA} and \algname{ATA-Empirical} can be much faster, as they spend less time on exploration and quickly approach the performance of \algname{OFTA}.

To illustrate this, we vary the number of prior runs, $P$.
Since our algorithms operate independently of the underlying optimization process, we first focus solely on the bandit component, updating the confidence scores of machines over several iterations.
We then apply the loss curves to different segments of the bandit phase and compare the results as $P$ increases.
A larger $P$ yields more accurate estimates.

The optimization setup remains the same as in \Cref{fig:real}, with $B=23$ and $n=51$.
The results are presented in \Cref{fig:prior}.

\begin{figure*}[thb]
    \centering
    \begin{tabular}{cc}
        \includegraphics[width=0.33\textwidth]{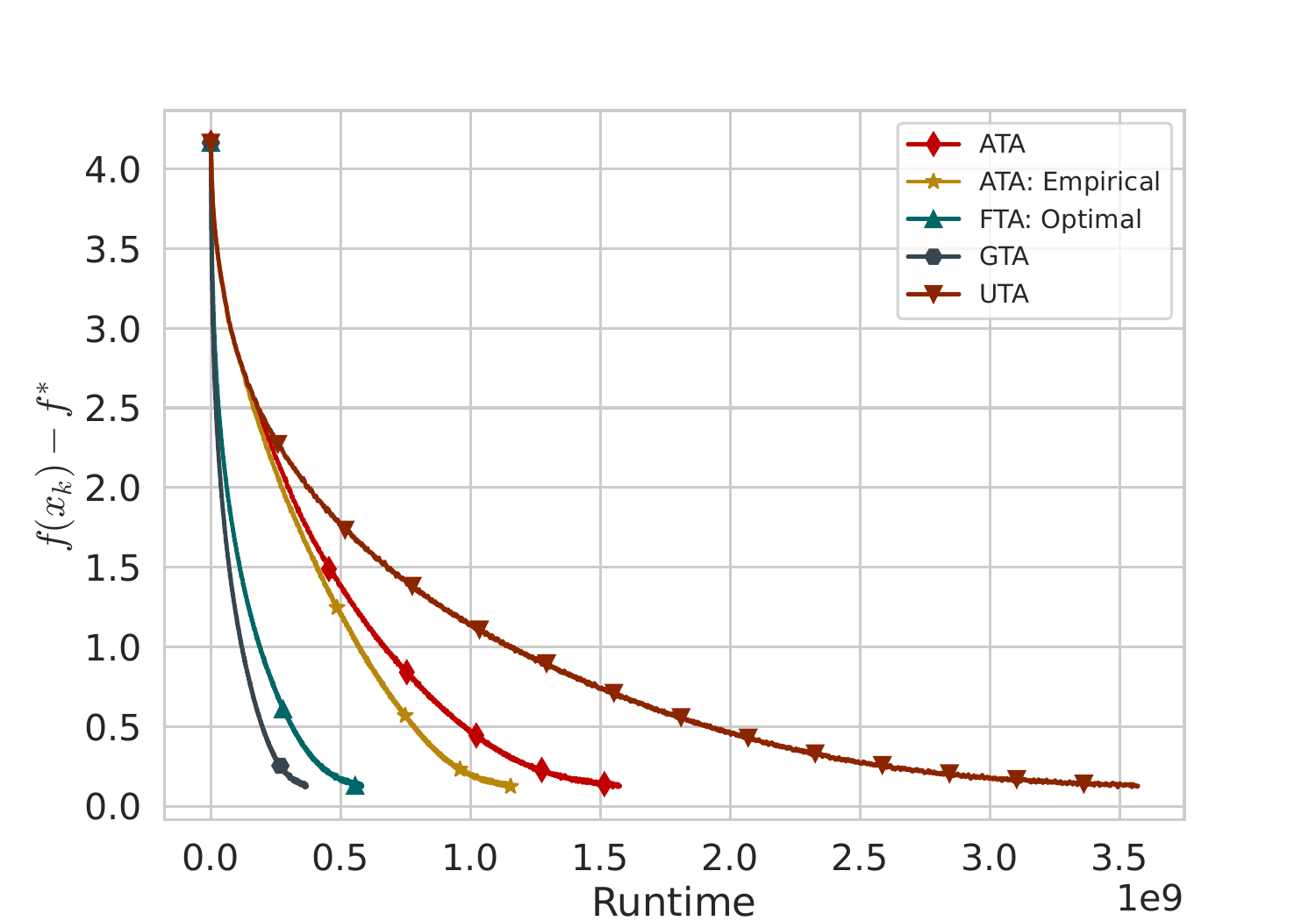} &
        \includegraphics[width=0.33\textwidth]{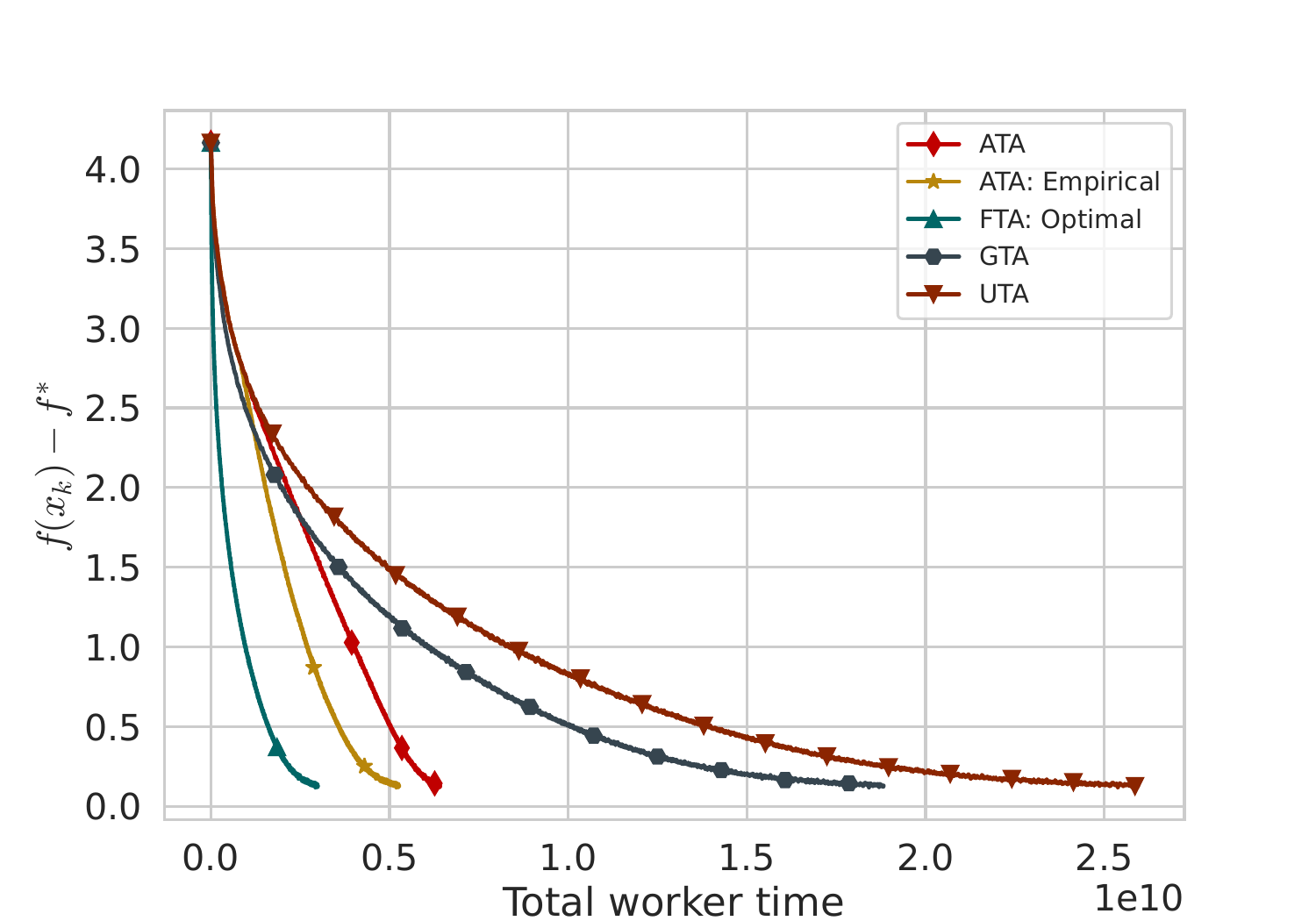} \\
        \includegraphics[width=0.33\textwidth]{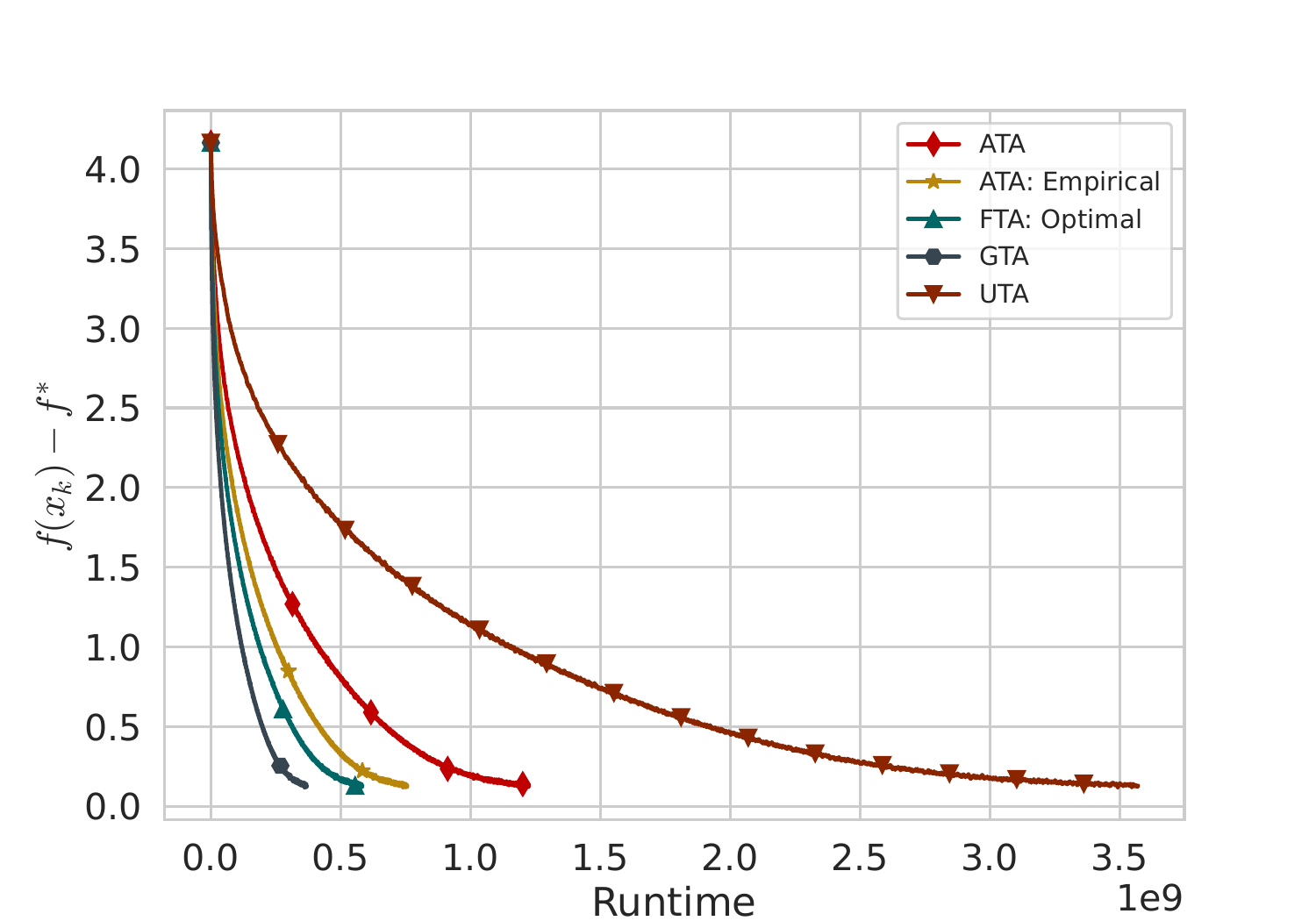} &
        \includegraphics[width=0.33\textwidth]{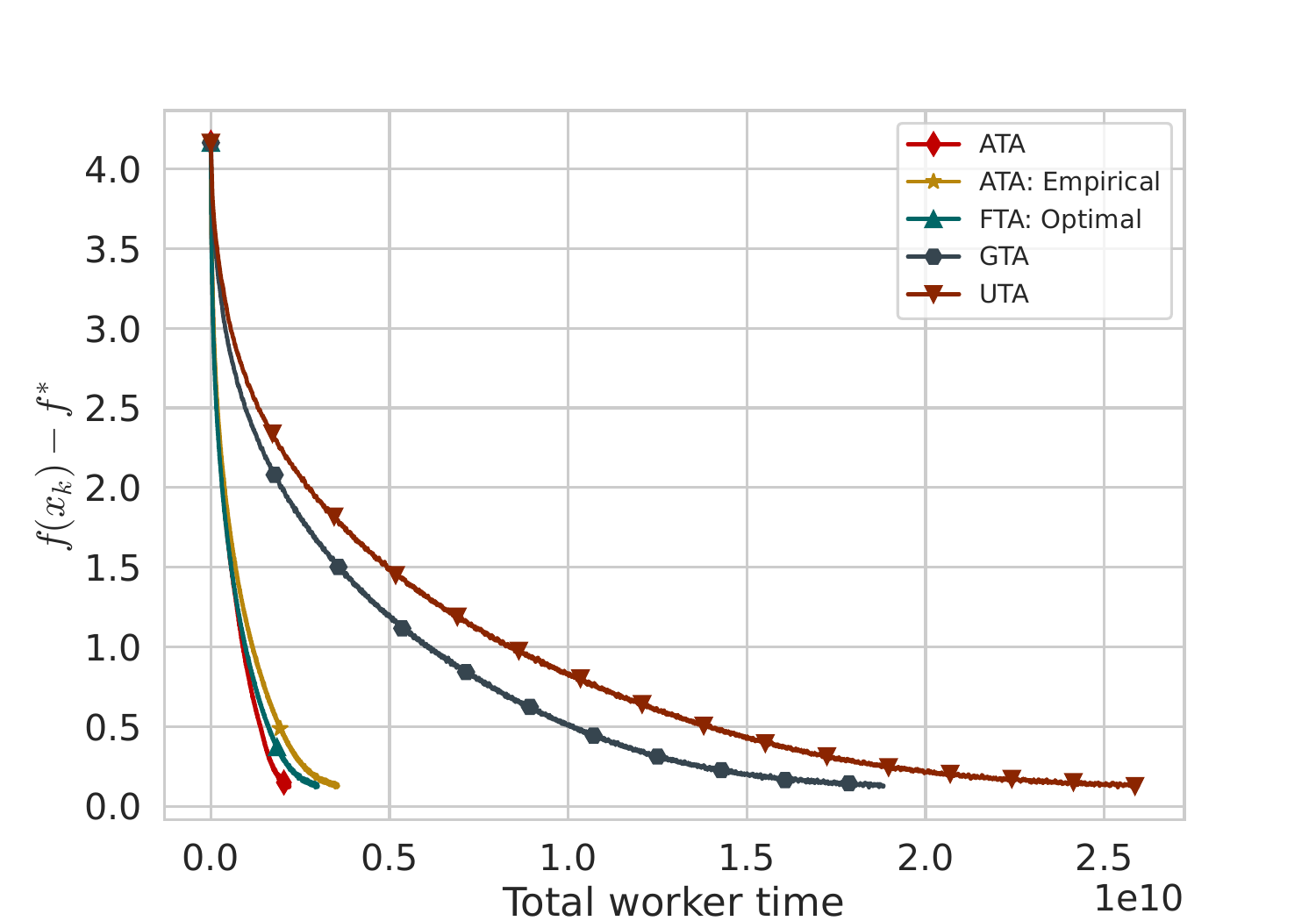} \\
        \includegraphics[width=0.33\textwidth]{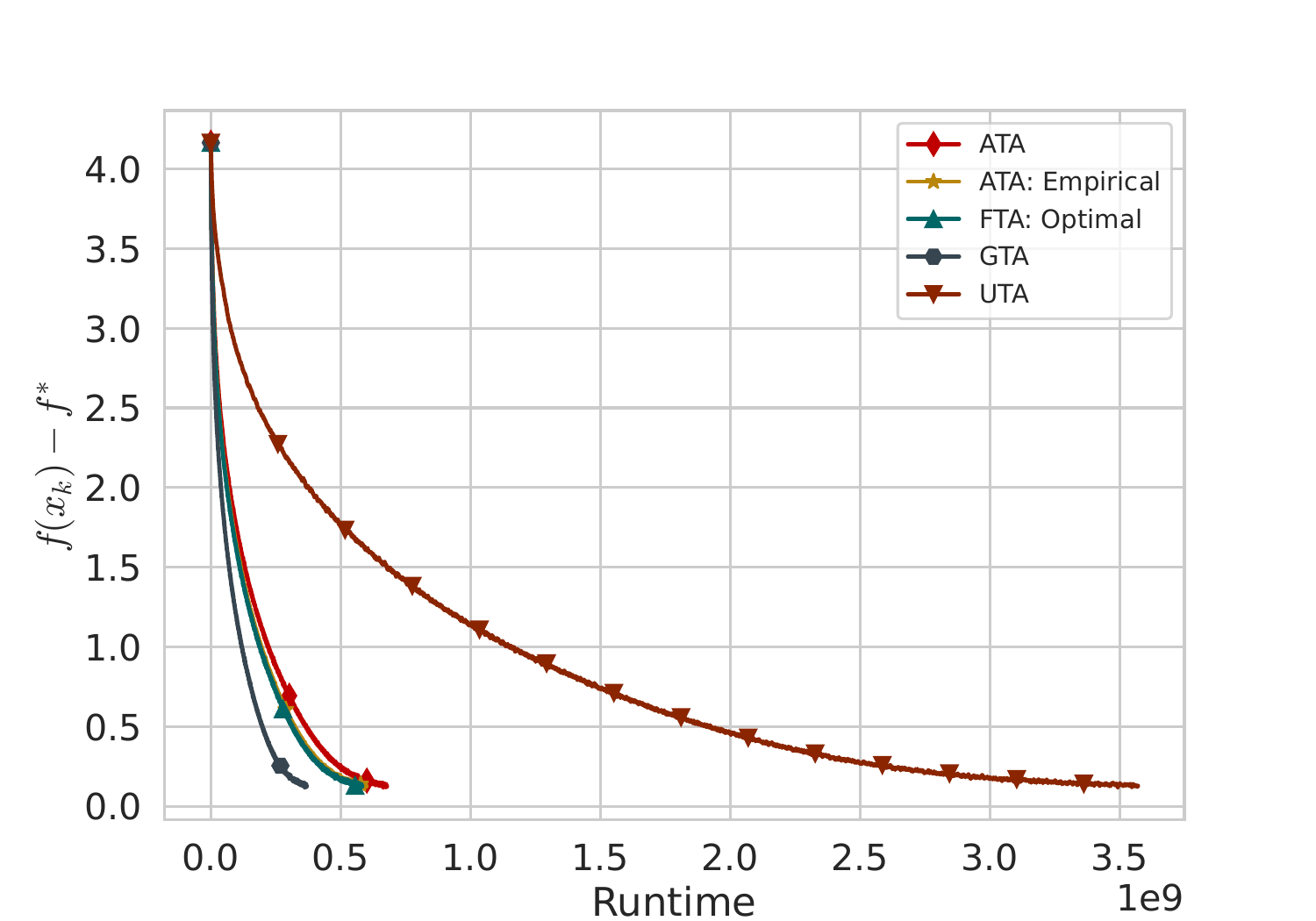} &
        \includegraphics[width=0.33\textwidth]{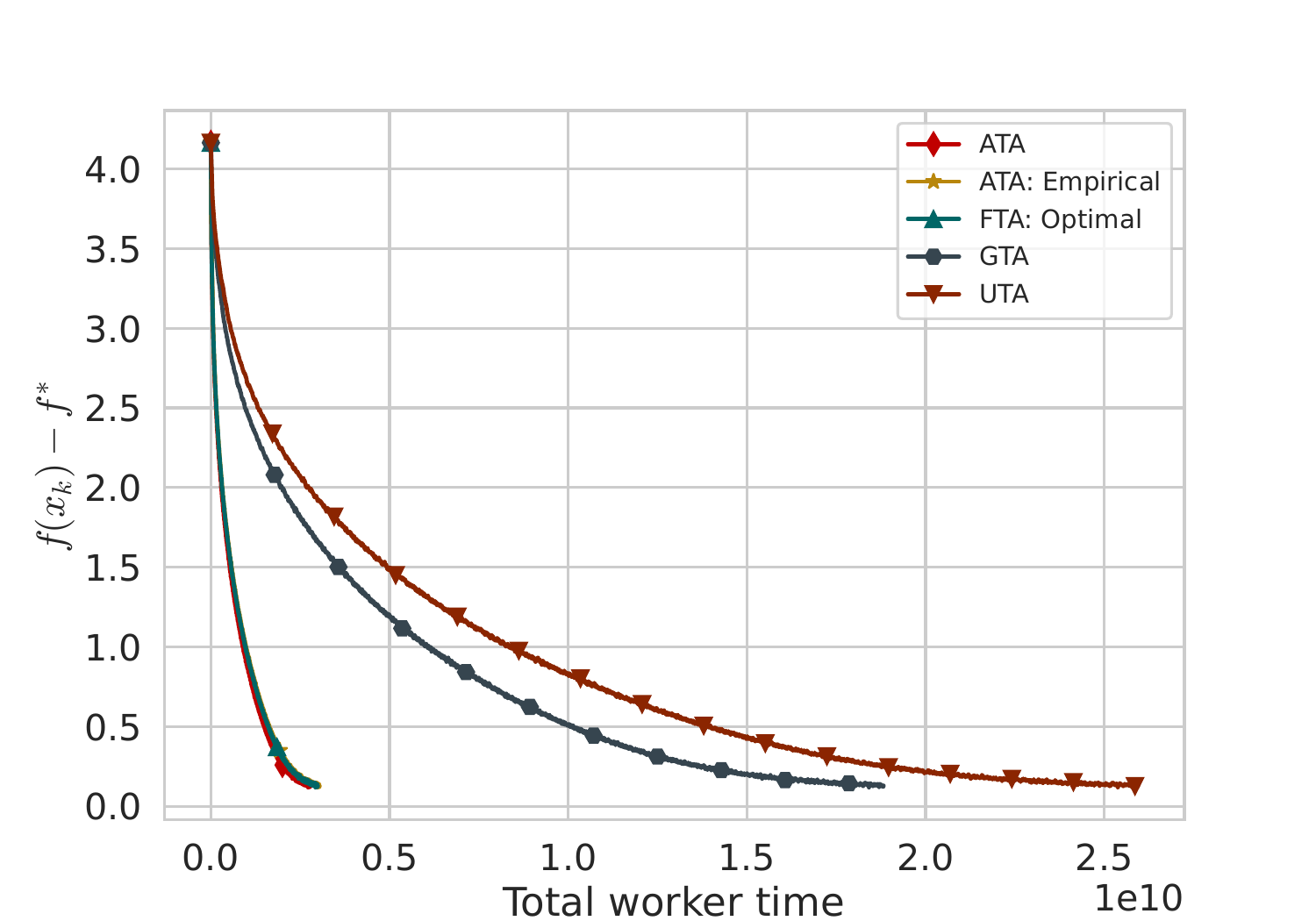}
    \end{tabular}
    \caption{
        We use the same optimization setup as in \Cref{fig:real}, with $B=23$ and $n=51$.
        The number of prior iterations, $P$, varies across rows, starting from the top with $P = 0, 5 \cdot 10^5, 5 \cdot 10^6$.
        As the number of prior iterations increases, we observe that the training of \algname{ATA} and \algname{ATA-Empirical} accelerates, bringing their performance closer to the optimal performance of \algname{OFTA}.
    }
    \label{fig:prior}
\end{figure*}

\section{Concrete Optimization Methods}
\label{section:other_methods}

In this section, we provide concrete examples of optimization algorithms using the \algname{ATA} and \algname{GTA} allocation strategies.

For optimization problems, we focus on \algname{SGD} and \algname{Asynchronous SGD}.
Other methods, such as stochastic proximal point methods and higher-order methods, can be developed in a similar fashion.

\subsection{Stochastic Gradient Descent}

For \algname{SGD}, it is important to distinguish homogeneous and heterogeneous cases.
Let us start from the homogeneous case.

\subsubsection{Homogeneous regime}

Consider the problem of finding an approximate stationary point of the optimization problem
\begin{equation}
    \label{eq:homo_problem}
    \min_{\bm{x} \in \R^d} \ \left\{f(\bm{x}) \eqdef \ExpSub{\bm{\xi} \sim {\cal D}}{f(\bm{x};\bm{\xi})}\right\}.
\end{equation}
We assume that each worker is able to compute stochastic gradient $f(\bm{x};\bm{\xi})$ satisfying $\mathbb{E}_{\bm{\xi} \sim {\cal D}}\left[ \|f(\bm{x};\bm{\xi}) - \nabla f(\bm{x})\|^2\right] \leq \sigma^2$ for all $\bm{x}\in \R^d$.

In this case, \algname{SGD} with allocation budget $B$ becomes \algname{Minibatch SGD} with batch size $B$. The next step is determining how the batch is collected. For \algname{ATA}, we refer to this method as \algname{SGD-ATA}, as described in \Cref{alg:sgd-ata}.

\begin{algorithm}[H]
	\caption{\algname{SGD-ATA} (Homogeneous)}
    \label{alg:sgd-ata}
	\begin{algorithmic}[1]
		\STATE \textbf{Optimization inputs}: initial point $\bm{x}_0 \in \R^d$, stepsize $\gamma > 0$
        \STATE \textbf{Allocation inputs}: allocation budget  $B$
        \STATE \textbf{Initialize}: empirical means $\hmu_{i,1} = 0$, usage counts $K_{i,1} = 0$, and usage times $T_{i,1} = 0$, for all $i \in [n]$
		\FOR{$k = 1,\ldots, K$}
        \STATE Compute LCBs $(s_{i,k})$ for all $i \in [n]$
        \STATE Find allocation:
        $
            \bm{a}_k \in  \argmin_{\ba \in \mathcal{A}} \ell(\ba, \bm{s}_k)~.
        $
		\STATE Allocate $a_{i,k}$ tasks to each worker $i \in [n]$
        \STATE Update $\bm{x}$:
        $$
            \bm{x}_{k+1} = \bm{x}_k - \frac{\gamma}{B} \sum_{i=1}^n \sum_{j=1}^{a_{i,k}} \nabla f\(\bm{x}_k;\bm{\xi}_i^j\)
        $$
        \FOR{$i$ such that $a_{i,k} \neq 0$}
        \STATE $K_{i,k+1} = K_{i,k} + a_{i,k}$
        \STATE $T_{i,k+1} = T_{i,k} + \sum_{j=1}^{a_{i,k}} X_{i,k}^{(j)}$
        \STATE $\hmu_{i,k+1} = \frac{T_{i,k+1}}{K_{i,k+1}}$
		\ENDFOR
		\ENDFOR
	\end{algorithmic}
\end{algorithm}

In this case, each task consists in calculating the gradient using the device's local data, which is assumed to have the same distribution as the data on all other devices. Because of this, it does not matter which device performs the task. The method then averages these gradients to obtain an unbiased gradient estimator and performs a gradient descent step.

Now, let us give the version of \algname{Minibatch SGD} using greedy allocation \Cref{alg:sgd-gta}.

\begin{algorithm}[H]
	\caption{\algname{SGD-GTA} (Homogeneous)}
    \label{alg:sgd-gta}
	\begin{algorithmic}[1]
		\STATE \textbf{Input}: initial point $\bm{x}_0 \in \R^d$, stepsize $\gamma > 0$, allocation budget $B$
		\FOR{$k = 1,\ldots, K$}
        \STATE $b=0$
        \STATE Query single gradient from each worker $i \in [n]$ 
        \WHILE{$b<B$}
        \STATE Gradient $\nabla f(\bm{x}_k; \bm{\xi}_{k_b})$ arrives from worker $i_{k_b}$
        \STATE $\bm{g}_k = \bm{g}_k + \nabla f(\bm{x}_k; \bm{\xi}_{k_b})$; $\; b= b+1$
        \STATE Query gradient at $\bm{x}_k$ from worker $i_{k_b}$ \\ 
        \ENDWHILE
        \STATE Update the point: $\bm{x}_{k+1} = \bm{x}_k - \gamma \frac{\bm{g}^k}{B}$
		\ENDFOR
	\end{algorithmic}
\end{algorithm}

\Cref{alg:sgd-gta} is exactly \algname{Rennala SGD} method proposed by \citet{tyurin2024optimal}, which has optimal time complexity when the objective function is non-convex and smooth.

If the computation times are deterministic, then \algname{GTA} makes the same allocation in each iteration. In that case, \algname{SGD-ATA} will converge to this fixed allocation. If the times are random, the allocation found by \algname{GTA} may vary in each iteration. In this case, \algname{SGD-ATA} will approach the best allocation for the expected times.

\subsubsection{Heterogeneous regime}
Now let us consider the following heterogeneous problem
$$
    \min_{x \in \mathbb{R}^d} \ \left\{ f(\bm{x}) := \frac{1}{n} \sum_{i=1}^{n} \ExpSub{\bm{\xi}_i \sim \mathcal{D}_i}{f_i(\bm{x}; \bm{\xi}_i)} \right\}~.
$$
Here each worker $i$ has its own data distribution $\cD_i$.

We start with the greedy allocation.
The algorithm is presented in \Cref{alg:sgd-gta-hetero}.

\begin{algorithm}[H]
	\caption{\algname{SGD-GTA} (Heterogeneous)}
    \label{alg:sgd-gta-hetero}
	\begin{algorithmic}[1]
		\STATE \textbf{Input}: initial point $\bm{x}_0 \in \R^d$, stepsize $\gamma > 0$, parameter $S$
		\FOR{$k = 1,\ldots, K$}
        \STATE $s_i=0$ and $\bm{g}_{i,k} = \bm{0}$ for all $i \in [n]$
        \STATE Query single gradient from each worker $i \in [n]$ 
        \WHILE{$\(\frac{1}{n} \sum_{i=1}^n \frac{1}{s_i}\)^{-1}<\frac{S}{n}$}
        \STATE Gradient $\nabla f_{j}(\bm{x}_{k}; \bm{\xi}_{k})$ arrives from worker $j$
        \STATE $\bm{g}_{j,k} = \bm{g}_{j,k} + \nabla f_{j}(\bm{x}_{k}; \bm{\xi}_{k})$; $\; s_j = s_j+1$
        \STATE Query gradient at $\bm{x}_{k}$ from worker $j$ \\ 
        \ENDWHILE
        \STATE Update the point: $\bm{x}_{k+1} = \bm{x}_k - \gamma \frac{1}{n} \sum_{i=1}^n \frac{1}{s_i} \bm{g}_{i,k}$
		\ENDFOR
	\end{algorithmic}
\end{algorithm}

\Cref{alg:sgd-ata-hetero} presents the \algname{Malenia SGD} algorithm, proposed by \citet{tyurin2024optimal}, which is also optimal for non-convex smooth functions.

In each iteration, \Cref{alg:sgd-gta-hetero} receives at least one gradient from each worker.
Building on this idea, we design a method incorporating \algname{ATA}, given in \Cref{alg:sgd-ata-hetero}.

\begin{algorithm}[H]
	\caption{\algname{SGD-ATA} (Heterogeneous)}
    \label{alg:sgd-ata-hetero}
	\begin{algorithmic}[1]
		\STATE \textbf{Optimization inputs}: initial point $\bm{x}_0 \in \R^d$, stepsize $\gamma > 0$
        \STATE \textbf{Allocation inputs}: allocation budget  $B$
        \STATE \textbf{Initialize}: empirical means $\hmu_{i,1} = 0$, usage counts $K_{i,1} = 0$, and usage times $T_{i,1} = 0$, for all $i \in [n]$
		\FOR{$k = 1,\ldots, K$}
        \STATE Compute LCBs $(s_{i,k})$ for all $i \in [n]$
        \STATE Find allocation:
        $
        \bm{a}_k = \algname{RAS} (\bm{s}_k; B)
        $
		\STATE Allocate $a_{i,k} + 1$ tasks to each worker $i \in [n]$
        \STATE Update $\bm{x}$:
        $$
            \bm{x}_{k+1} = \bm{x}_k - \frac{\gamma}{n} \sum_{i=1}^n \frac{1}{a_{i,k}+1}\sum_{j=1}^{a_{i,k}+1} \nabla f_i\(\bm{x}_k;\bm{\xi}_i^j\)
        $$
        \STATE For all $i\in[n]$, update:
        \begin{align*}
            K_{i,k+1} &= K_{i,k} + a_{i,k} \\
            T_{i,k+1} &= T_{i,k} + \sum_{j=1}^{a_{i,k}} X_{i,k}^{(j)} \\
            \hmu_{i,k+1} &= \frac{T_{i,k+1}}{K_{i,k+1}} \\
        \end{align*}
		\ENDFOR
	\end{algorithmic}
\end{algorithm}

\subsection{Asynchronous SGD}

Here, we focus on the homogeneous problem given in \Cref{eq:homo_problem}. The greedy variant, \algname{Ringmaster ASGD}, was proposed by \citet{maranjyan2025ringmaster} and, like \algname{Rennala SGD}, achieves the best runtime.

We now present its version with \algname{ATA}, given in \Cref{alg:asgd-ata}.

\begin{figure*}[h]
    \begin{minipage}[t]{0.48\textwidth}

\begin{algorithm}[H]
	\caption{\algname{ASGD-ATA}}
    \label{alg:asgd-ata}
	\begin{algorithmic}[1]
		\STATE \textbf{Optimization inputs}: initial point $\bm{x}_0 \in \R^d$, stepsize $\gamma > 0$
        \STATE \textbf{Allocation inputs}: allocation budget  $B$
        \STATE \textbf{Initialize}: empirical means $\hmu_{i,1} = 0$, usage counts $K_{i,1} = 0$, and usage times $T_{i,1} = 0$, for all $i \in [n]$
		\FOR{$k = 1,\ldots, K$}
        \STATE Compute LCBs $(s_{i,k})$ for all $i \in [n]$
        \STATE Find allocation:
        $
        \bm{a}_k = \algname{RAS} (\bm{s}_k; B)
        $
        \STATE Update $\bm{x}_k$ using \Cref{alg:asgd} with allocation $\bm{a}_k$
        \STATE For all $i$ such that $a_{i,k} \neq 0$, update:
        \begin{align*}
            K_{i,k+1} &= K_{i,k} + a_{i,k} \\
            T_{i,k+1} &= T_{i,k} + \sum_{j=1}^{a_{i,k}} X_{i,k}^{(j)} \\
            \hmu_{i,k+1} &= \frac{T_{i,k+1}}{K_{i,k+1}} \\
        \end{align*}
        \vspace{-1cm}
		\ENDFOR
	\end{algorithmic}
\end{algorithm}

\end{minipage}
\hfill
\begin{minipage}[t]{0.48\textwidth}

    \begin{algorithm}[H]
        \caption{\algname{ASGD}}
        \label{alg:asgd}
        \begin{algorithmic}[1]
            \STATE \textbf{Input:} Initial point $\bm{x}_0 \in \mathbb{R}^d$, stepsize $\gamma > 0$, allocation vector $\bm{a}$ with $\|\bm{a}\|_1 = B$
            \STATE Workers with $a_i > 0$ start computing stochastic gradients at $\bm{x}_0$
            \FOR{$s = 0, 1, \ldots, B-1$}
                \STATE Receive gradient $\nabla f(\bm{x}_{s+\delta_s}; \bm{\xi}_{s+\delta_s}^{i})$ from worker $i$
                \STATE Update: $\bm{x}_{s+1} = \bm{x}_{s} - \gamma \nabla f(\bm{x}_{s+\delta_s}; \bm{\xi}_{s+\delta_s}^{i})$
                \IF{$a_i > 0$}
                    \STATE Worker $i$ begins computing $\nabla f(\bm{x}_{s+1}; \bm{\xi}_{s+1}^{i})$
                    \STATE Decrease remaining allocation for worker $i$ by one: $a_i = a_i - 1$
                \ENDIF
            \ENDFOR
            \STATE \textbf{return:} $\bm{x}_{B}$
        \end{algorithmic}
        \vspace{0.2cm}
        The sequence $\{\delta_s\}$ represents delays, where $\delta_s \geq 0$ is the difference between the iteration when worker $i$ started computing the gradient and iteration $s$, when it was applied.
    \end{algorithm}

\end{minipage}
\end{figure*}

Here, the task remains gradient computation, but each worker's subsequent tasks use different points for computing the gradient. These points depend on the actual computation times and the asynchronous nature of the method, hence the name \algname{Asynchronous SGD}.

\section{Recursive Allocation Selection Algorithm}
\label{sec:RAS}

In this section, we introduce an efficient method for finding the best allocation.
Given LCBs $\bm{s}_k$ and allocation budget $B$, each iteration of \algname{ATA} (\Cref{alg:ata}) determines the allocation by solving
$$
    \bm{a}_k \in \argmin_{\ba \in \mathcal{A}} \ \ell(\bm{a}, \bm{s}_k),
$$
where 
$$
    \ell(\bm{a},\bm{\mu}) \eqdef \max_{i \in [n]} \  a_{i} \mu_i  = \infnorm{\bm{a}\odot \bm{\mu}},
$$
with $\odot$ denoting the element-wise product.
When clear from context, we write $\ell(\bm{a})$ instead of $\ell(\bm{a}, \bm{\mu})$.

In the early iterations, when some $s_i$ values are $0$, \algname{ATA} allocates uniformly across these arms until all $s_i$ values become positive.
After that, the allocation is determined using the recursive routine in \Cref{alg:RAS}.

\begin{algorithm}
    \caption{Recursive Allocation Selection (\algname{RAS})}
    \label{alg:RAS}
    \begin{algorithmic}[1]
        \STATE \textbf{Input:} Scores $s_1, \dots, s_n$, allocation budget $B$
        \STATE Assume without loss of generality that $s_1 \leq s_2 \leq \dots \leq s_n$ (i.e., sort the scores)
        \IF{$B = 1$}
            \STATE \textbf{return:} $(1, 0, \dots, 0)$
        \ENDIF
        \STATE Find the previous best allocation:
        $$
            \bm{a} = (a_1, \dots, a_n) = \algname{RAS}\(s_1, \dots, s_n; B-1\)
        $$
        \STATE Determine the first zero allocation:
        \begin{equation}
            \label{eq:r}
            r = 
            \begin{cases}
                \min\{i \mid a_i = 0\}, & \text{if }\ a_n = 0 \\
                n, & \text{otherwise}
            \end{cases}
        \end{equation}
        \STATE Find the best next query allocation set: \label{alg_line:min}
        $$
        M = \argmin_{i \in [r]} \  \infnorm{(\bm{a} + \bm{e}_i) \odot \bm{s}},
        $$
        where $\bm{e}_i$ is the unit vector in direction $i$.
        \STATE Select $j \in M$ such that the cardinality of 
        $$
        \argmax_{i \in [r]} \ (a_i + e_{j,i}) s_i
        $$ 
        is minimized
        \STATE \textbf{return:} $\bm{a} + \bm{e}_j$
    \end{algorithmic}
\end{algorithm}

\begin{remark}
    The iteration complexity of \algname{RAS} is $\mathcal{O}(n \ln(\min\{B, n\}) + \min\{B, n\}^2)$. In fact, the first $n \ln(\min\{B, n\})$ term arises from identifying the smallest $B$ scores. For the second term, note that in \eqref{eq:r}, we have $r \leq \min\{B, n\}$.
\end{remark}

\subsection{Optimality}
We now prove that \algname{RAS} finds the optimal allocation, as stated in the following lemma.
\begin{lemma}
    \label{thm:RAS_optimality}
    For positive scores $0<s_1 \le s_2 \le \ldots \le s_n$, \algname{RAS} (\Cref{alg:RAS}) finds an optimal allocation $\bh \in \cA$, satisfying
    $$
    \bh \in \argmin_{\ba \in \cA} \  \infnorm{\ba \odot \bs} ~.
    $$
\end{lemma}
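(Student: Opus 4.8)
The plan is to argue by induction on the budget $B$ that $\algname{RAS}(\bs; B)$ returns a minimizer of $\ba \mapsto \infnorm{\ba \odot \bs}$ over $\cA$. The base case $B=1$ is immediate: the returned vector $\be_1$ has objective $s_1 = \min_i s_i$, which is optimal since placing the single unit on arm $i$ costs $s_i \ge s_1$. For the inductive step, assuming $\ba = \algname{RAS}(\bs; B-1)$ is optimal for budget $B-1$, I must show that the vector $\ba + \be_j$ produced in the final step is optimal for budget $B$.

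The central tool will be a reformulation of the objective in terms of the multiset of multiples $\mathcal{M} := \{\, m\, s_i : i \in [n],\ m \ge 1 \,\}$, sorted as $v_1 \le v_2 \le \cdots$. To each allocation $\ba \in \cA$ I associate the set $\Theta(\ba) := \{\, m\, s_i : i \in [n],\ 1 \le m \le a_i \,\}$ of the multiples it ``covers''; then $|\Theta(\ba)| = \norm{\ba}_1$ and $\infnorm{\ba \odot \bs} = \max \Theta(\ba)$. Each $\Theta(\ba)$ is down-closed within every arm (if $m s_i$ is covered then so is $m' s_i$ for $m' < m$), and conversely any such down-closed family arises as some $\Theta(\ba)$. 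Since the $B$ smallest elements of $\mathcal{M}$ are automatically down-closed (a smaller multiple of $s_i$ always precedes a larger one), this yields the clean characterization $\min_{\ba \in \cA} \infnorm{\ba \odot \bs} = v_B$, the $B$-th smallest element of $\mathcal{M}$.

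Next I would establish the prefix-support invariant: $\algname{RAS}(\bs; B)$ always returns an allocation whose support is $\{1, \dots, m\}$ for some $m$, which follows by induction together with the definition of $r$ in \eqref{eq:r}. Given this invariant, the restriction of the minimization defining $M$ to $i \in [r]$ is without loss of generality: for $i > r$ the candidate increment costs $(a_i + 1)s_i = s_i \ge s_r$, which is dominated by the increment at arm $r$ because $\bs$ is sorted, so $\min_{i \in [r]} \infnorm{(\ba + \be_i)\odot \bs} = \min_{i \in [n]} \infnorm{(\ba + \be_i)\odot \bs}$. Equivalently, the smallest multiple not yet covered by $\Theta(\ba)$ is always attained by some arm in $[r]$.

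The main obstacle will be the key exchange step: showing that extending an optimal $(B-1)$-allocation $\ba$ by the single cheapest available increment produces an optimal $B$-allocation, i.e.\ that $\min_{i} \infnorm{(\ba + \be_i)\odot \bs} = v_B$. In the multiset language the added increment equals the smallest element of $\mathcal{M} \setminus \Theta(\ba)$, so it suffices to prove that this smallest uncovered multiple is exactly $v_B$. Because $\ba$ is optimal, $\Theta(\ba)$ is a down-closed family of $B-1$ multiples with $\max \Theta(\ba) = v_{B-1} \le v_B$; the delicate point, which I would handle by a rank-counting argument, is that any such family must already contain every multiple strictly smaller than $v_B$, forcing the cheapest uncovered multiple to equal $v_B$. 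Ties (several multiples equal to $v_B$) are the only real subtlety and are resolved by the same bookkeeping on ranks. Combining this with the previous paragraph gives $\infnorm{(\ba+\be_j)\odot\bs} = v_B = \min_{\ba' \in \cA}\infnorm{\ba'\odot\bs}$, closing the induction. Finally, I would note that the secondary tie-break selecting $j \in M$ does not change the attained value $v_B$ and only pins down a canonical minimizer, so it is irrelevant to the optimality claim itself.
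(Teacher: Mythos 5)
Your overall architecture---characterizing the optimal value as the $B$-th smallest element $v_B$ of the multiset of multiples $\mathcal{M}=\{m s_i : i \in [n],\, m\ge 1\}$ and showing the greedy step always attains it---is sound and genuinely different from the paper's proof, which instead runs an exchange/contradiction argument (comparing $\bar{\bh}+\be_r$ against a purported better allocation $\bar{\ba}+\be_q$ and case-splitting on whether the newly added unit attains the maximum). However, the specific claim you isolate as the crux of the inductive step is false. You assert that any down-closed family of $B-1$ multiples with maximum $v_{B-1}$ must contain every multiple strictly smaller than $v_B$, so that the cheapest uncovered multiple equals $v_B$ exactly. Counterexample: $\bs=(1,1,2)$, $B=4$. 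The sorted multiset $\mathcal{M}$ begins $1,1,2,2,2,3,\dots$, so $v_3=v_4=2$. The allocation $\ba=(2,0,1)$ is optimal for budget $3$ (objective $2=v_3$) and down-closed, yet it misses the multiple $1\cdot s_2=1<v_4$, and its cheapest uncovered multiple is $1\neq v_4=2$. Note that this failure occurs precisely in the tie case $v_{B-1}=v_B$ that you dismiss as bookkeeping. Since your induction hypothesis is only \emph{optimality} of the $(B-1)$-allocation (not that it is the specific \algname{RAS} output covering the $B-1$ smallest-ranked elements of $\mathcal{M}$), the rank-counting argument you sketch cannot go through as stated.

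The gap is repairable, and with less work than you anticipate: you do not need the cheapest uncovered multiple to equal $v_B$, only to be at most $v_B$. Since $\Theta(\ba)$ has $B-1<B$ elements when counted as a multiset of pairs $(i,m)$ (this pair-indexed counting also silently fixes your claim $|\Theta(\ba)|=\norm{\ba}_1$ under tied scores), at least one of the $B$ smallest-ranked pairs $(i,m)$ is uncovered, and down-closedness gives $(a_i+1)s_i\le m s_i\le v_B$. The new objective of the greedy step is $\max\{\max\Theta(\ba),\,\min_i (a_i+1)s_i\}\le\max\{v_{B-1},v_B\}=v_B$, and it is at least $v_B$ by your characterization of the optimum over budget-$B$ allocations; equality follows, closing the induction. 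In the counterexample above this is exactly what happens: the greedy step adds the multiple $1$, yet the objective stays at $\max\{2,1\}=2=v_4$, so optimality is preserved even though your claimed equality fails. With this one-line repair your argument is complete, and it arguably buys something the paper's exchange argument does not: an explicit identification of the optimal value as the $B$-th smallest multiple $v_B$.
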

\begin{proof}
    We prove the claim by induction on the allocation budget $B$.
    
    \textbf{Base Case ($B = 1$):}  
    When $B = 1$, \algname{RAS} (\Cref{alg:RAS}) allocates the task to worker with the smallest score (line 9).
    Thus, the base case holds.

    \textbf{Inductive Step:}  
    Assume that \algname{RAS} finds an optimal allocation for budget $B - 1$, denoted by
    $$
        \bar{\bh} = \algname{RAS}(s_1, \ldots, s_n; B-1)~.
    $$
    We need to prove that the solution returned for budget $B$, denoted by $\bh = \bar{\bh} + \be_r$, is also optimal.

    Assume, for contradiction, that there exists $\ba \in \cA$ such that $\ba \neq \bh$ and $\ell(\ba) < \ell(\bh)$. 
    Write $\ba = \bar{\ba} + \be_q$ for some $q \in [n]$. Observe that $\|\bar{\ba}\|_1=B-1$ because $\ba \in \mathcal{A}$.

    We consider two cases based on the value of $\ell\(\bar{\bh} + \be_r\)$:

    \begin{itemize}
        \item $\ell\(\bar{\bh} + \be_r\) = h_k s_k$ for some $k \neq r$.  
        In this case, adding one unit to index $r$ does not change the maximum value, i.e., $\ell\(\bar{\bh}\) = \ell\(\bar{\bh} + \be_r\)$. 
        By the inductive hypothesis, $\bar{\bh}$ minimizes $\ell(\bx)$ for budget $B - 1$. 
        Therefore, we have
        $$
        \ell(\ba) \geq \ell\(\bar{\ba}\) \geq \ell\(\bar{\bh}\) = \ell\(\bar{\bh} + \be_r\) =\ell(\bh),
        $$
        which contradicts the assumption that $\ell(\ba) < \ell(\bh)$.

        \item $\ell\(\bar{\bh} + \be_r\) = \(\bar{h}_r + 1\)s_r$.  
        By the algorithm's logic, $\(\bar{h}_r + 1\)s_r \leq \(\bar{h}_i + 1\)s_i$ for all $i \neq r$.
        Since $\ell(\bar{\bh}+\be_r)\leq \ell(\bar{\bh}+\be_q)$ and we assumed $\ell(\bar{\ba}+\be_q)=\ell(\ba)<\ell(\bh)=\ell(\bar{\bh}+\be_r)$, then $\bar{\ba} \neq \bar{\bh}$ otherwise $\ell(\bar{\ba}+\be_q)<\ell(\bar{\ba}+\be_r)$.
        Given that $\|\bar{\bh}\|_1=\|\bar{\ba}\|_1$, this implies that there exists some $u \in [n]$ such that $0\le\bar{a}_u \leq \bar{h}_u - 1$ and another index $v \in [n]$ where $\bar{a}_v \geq \bar{h}_v + 1$.

        In addition, note that $r$ is chosen such that $\ell\(\bar{\bh} + \be_r\)$ is minimum. Using the fact that $\ell\(\bar{\bh} + \be_r\) = \(\bar{h}_r + 1\)s_r$, we have that for any index $q$, we also necessarily have $\ell\(\bar{\bh} + \be_q\) = \(\bar{h}_q + 1\)s_q$.
        Using this, we deduce
        $$
        \ell(\bh)
        =\ell\(\bar{\bh} + \be_r\)
        \leq \ell\(\bar{\bh} + \be_v\)
        = \(\bar{h}_v + 1\)s_v
        \leq \max_i \  \bar{a}_i s_i
        = \ell\(\bar{\ba}\) \leq \ell(\ba),
        $$
        where in the second inequality we used the fact that $\bar{a}_v \geq \bar{h}_v + 1$ and in the last inequality we used the fact that the loss is not decreasing for we add one element to the vector.
        This chain of inequalities again contradicts the assumption that $\ell(\ba) < \ell(\bh)$.
    \end{itemize}

    Since both cases lead to contradictions, we conclude that no $\ba \in \cA$ exists with $\ell(\ba) < \ell(\bh)$. 
    Thus, \algname{RAS} produces an optimal allocation for budget $B$.
\end{proof}

\subsection{Minimal Cardinality}

Among all possible allocations \algname{RAS} choose one that always minimizes the cardinality of the set:
$$
    \argmax_{i \in [n]} \  a_i s_i~.
$$

The reason for this choice is just technical as it allows the \Cref{lem:1} to be true. 

\begin{lemma}
    \label{thm:minimal_cardinality}
    The output of $\algname{RAS}$ ensures the smallest cardinality of the set:
    $$
        \argmax_{i \in [n]} \ a_i s_i
    $$
    among all the optimal allocations $\ba$.
\end{lemma}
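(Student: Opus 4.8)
The plan is to prove the claim by induction on the budget $B$, mirroring the structure of the proof of \Cref{thm:RAS_optimality}. Throughout, for an allocation $\ba$ write $A(\ba) := \argmax_{i\in[n]} a_i s_i$ for its set of \emph{binding} coordinates and $c(\ba) := |A(\ba)|$. The base case $B=1$ is immediate: since all scores are positive, every single-unit allocation has exactly one binding coordinate, so $c(\cdot)\equiv 1$ is trivially minimal.

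For the inductive step, let $\bar{\bh} = \algname{RAS}(\bs;B-1)$ and $\bh = \bar{\bh}+\be_j$ be the outputs for budgets $B-1$ and $B$. By \Cref{thm:RAS_optimality} both are optimal, and by the inductive hypothesis $\bar{\bh}$ has the minimal binding cardinality among optimal budget-$(B-1)$ allocations. The argument splits on whether augmenting raises the loss, and since adding a unit can never decrease $\ell$, we have $\ell(\bh)\ge \ell(\bar{\bh})$. When $\ell(\bh)=\ell(\bar{\bh})$, the set $M$ consists of all directions keeping the loss at its previous value; RAS's secondary selection (line 9) then chooses $j$ making $A(\bh)$ as small as possible, so $A(\bh)=A(\bar{\bh})$ whenever some direction in $M$ lands strictly below the current maximum, and $c(\bh)=c(\bar{\bh})+1$ only when every feasible direction necessarily saturates. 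When $\ell(\bh)>\ell(\bar{\bh})$, the new loss level equals $(\bar h_r+1)s_r$, and I would show that the coordinates forced to this new level are exactly the ones RAS counts, again via the line-9 tie-break.

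The crux is to certify that this per-step greedy minimization yields the global minimum $c^*_B := \min\{c(\ba): \ba\in\cA \text{ optimal}\}$. The key structural fact I would establish is a counting identity: writing $L := \ell(\bh)$ for the optimal loss level, every optimal budget-$B$ allocation satisfies $a_i s_i \le L$ with equality exactly on its binding set, and a coordinate can be binding only if $L/s_i\in\N$. Placing every coordinate at the largest value that keeps it non-binding yields a fixed slack capacity $U$, and optimality of $L$ forces $B>U$; hence $\sum_i a_i = B$ compels exactly $c(\ba)=B-U$ binding coordinates in any \emph{minimal}-cardinality optimal allocation. This pins down $c^*_B$ independently of the particular allocation, after which it suffices to verify that RAS never saturates a coordinate unless the budget constraint compels it.

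The main obstacle I anticipate is the exchange argument connecting an arbitrary minimal-cardinality optimal budget-$B$ allocation to the recursively built $\bh$: the optimal budget-$(B-1)$ ``parent'' obtained by deleting one unit from such an allocation need not coincide with RAS's parent $\bar{\bh}$. To bridge this gap I would argue that deleting a suitably chosen unit from any minimal-cardinality optimal budget-$B$ allocation produces an optimal budget-$(B-1)$ allocation whose binding cardinality is at least $c(\bar{\bh})$ (by the inductive hypothesis), and that re-inserting a unit through RAS's two-stage rule raises the binding cardinality only by the amount the counting identity forces. Reconciling the possibly different parents — showing the greedy tie-break is never worse than any alternative augmentation — is the delicate point, and is precisely where the minimal-cardinality selection in line 9, rather than an arbitrary optimal choice, becomes essential.
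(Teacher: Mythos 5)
Your route --- induction plus a counting identity pinning down the minimal cardinality, then a verification that \algname{RAS} attains it --- is genuinely different from the paper's proof and could be completed, but as written there is a real gap precisely at what you yourself call ``the delicate point'': the attainment step is never proved, and everything after the counting identity is a sketch of intent rather than an argument. Two specific problems. First, the counting identity as stated is only half-proved: the budget argument gives the lower bound $c(\ba)\ge B-U$ for every optimal $\ba$, but to conclude that the minimum \emph{equals} $B-U$ you must exhibit an optimal allocation attaining it (e.g.\ place $u_i+1$ units on any $B-U$ coordinates with $L/s_i\in\N$, which exist in sufficient number because some optimal allocation exists, and $u_i$ units elsewhere); this achievability half is exactly what you need later when you invoke the inductive hypothesis, so it cannot be glossed. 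Second, the bridge you propose rests on the unproved claim that deleting a suitable unit from a minimal-cardinality optimal budget-$B$ allocation yields an \emph{optimal} budget-$(B-1)$ allocation --- this is true but requires its own exchange argument, since a single deletion is only guaranteed to give loss $\le L_B$, not loss $=L_{B-1}$ --- and on the assertion that the line-9 tie-break is what makes the induction close. That last assertion is a misdiagnosis: the tie-break plays no role in this lemma, and leaning on it signals that the closing argument has not actually been found.

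Indeed, with the counting identity in hand the attainment step is short and needs no reconciliation of parents. If $\ell(\bh)>\ell(\bar{\bh})$, then every coordinate other than the incremented one sits strictly below the new maximum, so $c(\bh)=1$, trivially minimal. If $\ell(\bh)=\ell(\bar{\bh})=L$, then both outputs are optimal for their budgets at the \emph{same} level $L$ (by \Cref{thm:RAS_optimality}), so the slack capacity $U$ is common to both; the binding set of $\bh$ is that of $\bar{\bh}$ plus at most the incremented coordinate, whence
$$
c(\bh)\;\le\; c(\bar{\bh})+1\;=\;\bigl((B-1)-U\bigr)+1\;=\;B-U,
$$
using the inductive hypothesis together with the (achievability-completed) identity at budget $B-1$; since every optimal budget-$B$ allocation has at least $B-U$ binding coordinates, $\bh$ is minimal. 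This closes the induction for \emph{any} loss-minimizing augmentation, with or without the tie-break. The paper closes the induction differently and also without ever facing your ``different parents'' problem: given a competitor $\ba=\bar{\ba}+\be_q$ with smaller cardinality, it shows the competitor's \emph{own} parent $\bar{\ba}$ is optimal for budget $B-1$ via the sandwich $\ell(\bar{\bh})\le\ell(\bar{\ba})\le\ell(\ba)=\ell(\bh)=\ell(\bar{\bh})$, applies the inductive hypothesis to $\bar{\ba}$, and in the hard sub-case performs the exchange $\bar{\ba}+\be_q-\be_k$ to contradict the minimality of $c(\bar{\bh})$. In short, your counting identity is a sound and attractive idea, but the proposal stops short of the step that actually proves the statement, and the bridge you sketch for that step is both incomplete and harder than necessary.
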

\begin{proof}
    This proof uses similar reasoning as the one before.

    Let $\bh = \algname{RAS}(\bs;B)$, and denote the cardinality of the set $\argmax_{i \in [n]} \ a_i s_i $ for allocation $\ba$ by
    $$
    C_B(\ba) = \left| \argmax_{i \in [n]} \ a_i s_i  \right| \geq 1~.
    $$  
    We prove the claim by induction on $B$.

    \textbf{Base Case ($B=1$):}  
    For $B=1$, there is a single coordinate allocation, thus $C_1(\bh) = 1$, which is the smallest possible cardinality.

    \textbf{Inductive Step:}  
    Assume that $\algname{RAS}$ finds an optimal allocation for budget $B-1$ with the smallest cardinality, denote its output by
    $$
    \bar{\bh} = \algname{RAS}(s_1, \ldots, s_n; B-1)~.
    $$  
    We need to prove that $\bh = \bar{\bh} + \be_r$ minimizes $C_B(\ba)$ among all optimal allocations for budget $B$.

    Assume, for contradiction, that there exists $\ba \in \cA$ such that $\ba \neq \bh$, $\ell(\ba) = \ell(\bh)$, and $C_B(\ba) < C_B(\bh)$. 
    Write $\ba = \bar{\ba} + \be_q$ for some $q \in [n]$.
    We consider three cases:

    \begin{itemize}
        \item 
        $C_B(\bh) = 1$.  
        Since the minimum cardinality is exactly 1, we must have $C_B(\ba) \ge 1 = C_B(\bh)$, that contradicts our assumption.

        \item 
        $C_B(\bh) = C_{B-1}\(\bar{\bh}\)>1$.
        This occurs when $\ell(\bh) = \ell\(\bar{\bh}\) \ne \(\bar{h}_r + 1\)s_r$. 
        By the optimality of $\bh$, we have $\ell\(\bar{\bh}\) \le \ell\(\bar{\ba}\) \le \ell(\ba) = \ell(\bh)=\ell(\bar{\bh})$, which implies $\ell\(\bar{\ba}\) = \ell(\ba)$.
        Therefore, $C_{B-1}\(\bar{\ba}\) \le C_B(\ba)$. 
        Since the induction hypothesis holds for $B-1$, we have $C_{B-1}\(\bar{\bh}\) \le C_{B-1}\(\bar{\ba}\)$. 
        Thus,
        $$
        C_B(\bh) = C_{B-1}\(\bar{\bh}\) \le C_{B-1}\(\bar{\ba}\) \le C_B(\ba),
        $$
        which leads to a contradiction.

        \item 
        $C_B(\bh) = C_{B-1}\(\bar{\bh}\) + 1$.  
        This occurs when $\ell(\bh) = \ell\(\bar{\bh}\) = \(\bar{h}_r + 1\)s_r$. 
        Proceeding as in the previous case, we have $\ell\(\bar{\ba}\) = \ell(\ba)$, and hence $C_{B-1}\(\bar{\ba}\) \le C_B(\ba)$.
        Since the induction hypothesis holds for $B-1$, we know $C_{B-1}\(\bar{\bh}\) \le C_{B-1}\(\bar{\ba}\)$.

        We now have additional cases:
        \begin{itemize}
        \item If $C_{B-1}\(\bar{\ba}\) = C_{B-1}\(\bar{\bh}\) + 1$, then
        $$
        C_B(\bh) = C_{B-1}\(\bar{\bh}\) + 1 = C_{B-1}\(\bar{\ba}\) \le C_B(\ba),
        $$
        which leads to a contradiction.

        \item Now assume $C_{B-1}\(\bar{\ba}\) = C_{B-1}\(\bar{\bh}\)$.
        We will show that in this case, $C_B(\ba) = C_{B-1}\(\bar{\ba}\) + 1$. 
        By contradiction, suppose $C_B(\ba) = C_{B-1}\(\bar{\ba}\)$, which implies $(\bar{a}_q + 1)s_q < \ell(\ba)$. Let $k$ be an index such that $\bar{a}_k s_k = \ell(\ba)$.
        Construct a new allocation $\ba' = \bar{\ba} + \be_q - \be_k$. 
        Then,
        $$
        C_{B-1}\(\ba'\) = C_{B-1}\(\bar{\ba}\) - 1 < C_{B-1}\(\bar{\bh}\),
        $$
        which contradicts the induction hypothesis. 
        Thus, $C_B(\ba) = C_{B-1}\(\bar{\ba}\) + 1$.
        Using this, we have
        $$
        C_B(\bh) = C_{B-1}\(\bar{\bh}\) + 1 = C_{B-1}\(\bar{\ba}\) + 1 = C_B(\ba),
        $$
        which again contradicts $C_B(\ba) < C_B(\bh)$.
        \end{itemize}
    \end{itemize}

    This concludes the proof.
\end{proof}

\section{Proofs of \Cref{thm:main}, \Cref{thm:main2}, and \Cref{cor:main}}
\label{sec:proof_1}

We start by recalling the notation. For $i\in [n]$ and $k \in [K]$, $(X^{(u)}_{i,k})_{u \in [B]}$ denote $B$ independent samples at round $k$ from distribution $\nu_i$. When using an allocation vector $\bm{a}_k \in \mathcal{A}$, the total computation time of worker $i$ at round $k$ is $\sum_{u=1}^{a_{i,k}} X^{(u)}_{i,k}$, when $a_{i,k} >0$. $\bm{\mu} = (\mu_1, \dots, \mu_K)$ is the vector of means. For each $k \in [K]$, when using the allocation vector $\bm{a}_k$, we recall the definition of the proxy loss $\ell: \mathcal{A}\times \mathbb{R}_{\ge0}^n \to \mathbb{R}_{\ge0}$ by
$$
\ell(\bm{a}_k, \bm{\lambda}) = \max_{i\in [n]} \ a_{i,k} \lambda_i,
$$
where $\bm{\lambda} = (\lambda_1, \dots, \lambda_n)$ is a vector of non-negative components. When $\bm{\lambda} = \bm{\mu}$, we drop the dependence on the second input of $\ell$.
For each $\bm{\lambda}$, let $\bar{\bm{a}}_{\bm{\lambda}}\in \mathcal{A}$, be the action minimizing this loss
$$
\bar{\bm{a}}_{\bm{\lambda}} \in \argmin_{\bm{a} \in \mathcal{A}} \ \ell(\bm{a}, \bm{\lambda})~.
$$
We drop the dependency on $\bm{\mu}$ from $\bar{\bm{a}}_{\bm{\mu}}$ to ease notation. The actual (random) computation time at round $k$ is denoted by $C: \mathcal{A} \to \mathbb{R}_+$:
\begin{equation}\label{eq:def_C}
	C(\bm{a}_k) := \max_{i\in [n]} \ \sum_{u=1}^{a_{i,k}} X_{i,k}^{(u)}~.
\end{equation}
Let $\bm{a}^*$ be the action minimizing the expected time
$$
\bm{a}^* \in \argmin_{\bm{a} \in \mathcal{A}} \ \E{C(\bm{a})}~.
$$
The expected regret after $K$ rounds is defined as follows
$$
\mathcal{R}_K := \sum_{t=1}^{K} \E{\ell(\bm{a}_{k})-\ell(\bar{\bm{a}})}~.
$$

\noindent For the remainder of this analysis we consider $\bar{\bm{a}} \in \argmin_{a \in \mathcal{A}} \ \ell(\bm{a})$ found using the \algname{RAS} procedure.
For each $i\in [n]$, recall that $k_i$ is the smallest integer such that
\begin{equation}\label{eq:def_n}
	(\bar{a}_i+k_i)\mu_i > \ell(\bar{\bm{a}})~.
\end{equation}

Below we present a technical lemma used in the proofs of Theorems~\ref{thm:main} and~\ref{thm:main2}.
\begin{lemma}
	\label{lem:1}
	Let $\bm{x}=(x_1, \dots, x_n) \in \mathbb{R}_{\ge 0}^n$. Let $\bm{a}$ be the output of $\algname{RAS}(\bm{x}; B)$. For each $i, j \in [n]$, we have
	$$ 
	a_{ j} x_j \le \left(a_{ i}+1 \right) x_i~.
	$$
\end{lemma}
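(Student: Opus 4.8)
The plan is to first reduce the two-index inequality to a one-sided bound. Since $a_j x_j \le \max_{k} a_k x_k = \ell(\bm{a},\bm{x})$ holds for every $j$ by definition of the maximum, it suffices to prove that $\ell(\bm{a},\bm{x}) \le (a_i+1)x_i$ for every $i \in [n]$; chaining the two inequalities then yields $a_j x_j \le (a_i+1)x_i$ as required. So I would fix an arbitrary $i$ and target the single bound $\ell(\bm{a},\bm{x}) \le (a_i+1)x_i$, arguing by contradiction: suppose instead that $(a_i+1)x_i < \ell(\bm{a},\bm{x})$. This forces $\ell(\bm{a},\bm{x}) > 0$, so any maximizer $m \in \argmax_{k} a_k x_k$ satisfies $a_m x_m = \ell(\bm{a},\bm{x}) > 0$, whence $x_m > 0$ and $a_m \ge 1$; moreover $i \ne m$, since $i = m$ would give $(a_m+1)x_m < a_m x_m$, forcing $x_m < 0$.

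The core of the argument is an exchange move. I would consider the neighboring allocation $\bm{a}' := \bm{a} - \be_m + \be_i$, which lies in $\mathcal{A}$ because $a_m \ge 1$ and $\norm{\bm{a}'}_1 = B$. Only coordinates $i$ and $m$ change, and both decrease strictly below the current maximum: $a'_i x_i = (a_i+1)x_i < \ell(\bm{a},\bm{x})$ by assumption, and $a'_m x_m = (a_m-1)x_m < a_m x_m = \ell(\bm{a},\bm{x})$. Every other coordinate is unchanged, hence $\le \ell(\bm{a},\bm{x})$, so $\ell(\bm{a}',\bm{x}) \le \ell(\bm{a},\bm{x})$. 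Invoking the optimality of the \algname{RAS} output (\Cref{thm:RAS_optimality}) gives the reverse inequality $\ell(\bm{a}',\bm{x}) \ge \ell(\bm{a},\bm{x})$, so the two values coincide and $\bm{a}'$ is itself optimal. Now comparing maximizer sets: since the loads at $i$ and $m$ are both strictly below $\ell(\bm{a},\bm{x})$ after the move while all other loads are unchanged, $\argmax_k a'_k x_k = \left(\argmax_k a_k x_k\right) \setminus \{m\}$. If this set is empty then $\ell(\bm{a}',\bm{x}) < \ell(\bm{a},\bm{x})$, contradicting optimality; otherwise its cardinality is strictly smaller than that of $\argmax_k a_k x_k$, contradicting the minimal-cardinality property of \algname{RAS} (\Cref{thm:minimal_cardinality}). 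Either way we reach a contradiction, establishing $\ell(\bm{a},\bm{x}) \le (a_i+1)x_i$.

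The step I expect to be the main obstacle is precisely the case of \emph{multiple} maximizers: a naive exchange argument that relies only on the value of $\ell$ fails to produce a strict improvement when several coordinates simultaneously attain the maximum, so optimality alone is insufficient. This is exactly why \algname{RAS} is engineered to break ties by minimizing $\abs{\argmax_k a_k x_k}$, and the key observation is that the exchange move I construct strictly reduces this cardinality, converting the stalled argument into a genuine contradiction. I would also dispatch the degenerate case $x_i = 0$ separately: there $\ell(\bm{a},\bm{x}) = 0$ whenever the optimal allocation can place its mass on a zero-score worker, making the claimed bound trivial, so the argument above may safely assume the positive-score regime in which \algname{RAS} actually operates.
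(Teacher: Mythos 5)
Your proof is correct and takes essentially the same route as the paper's: the identical exchange move $\bm{a}' = \bm{a} - \be_m + \be_i$ between a maximizing coordinate and the under-loaded one, with the unique-maximizer case killed by the optimality of \algname{RAS} (\Cref{thm:RAS_optimality}) and the multiple-maximizer case killed by its minimal-cardinality tie-breaking property (\Cref{thm:minimal_cardinality}). The paper also dispatches the zero-score case separately, just as you do, so your argument matches the published proof in both structure and ingredients.
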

\begin{proof}
	Fix $\bm{x} \in \mathbb{R}_+^n$, and let $\bm{a} = \algname{RAS}(\bm{x};B)$. The result is straightforward when $\min\limits_{i\in [n]}{x_i} = 0$.
	
	\noindent Suppose that $x_i >0$ for all $i \in [n]$.
	Let $s\ge 1$ denote the cardinality
	$$
	s:= \abs{\argmax_{i \in [n]} \  a_{i} x_i }~.
	$$
	Fix $i,j \in [n]$, let $k \in \argmax_{i \in [n]} \  a_{i} x_i $. We need to show that
	$$
	a_{k} x_k \le (a_{i}+1)x_i~.
	$$
	We use a proof by contradiction.
	Suppose that we have $ a_{k} x_k > (a_{i}+1)x_i$ consider the allocation vector $\bm{a}'\in \mathcal{A}$ given by $a'_k = a_{k}-1$, $a'_i = a_i+1$ and $a'_u = \bar{a}_u$ when $u \notin \{i,k\}$. Let $R := \max_{u \neq i,k} \{a_u x_u \}$. We have
	\begin{align*}
		\ell(\bm{a}',\bm{x})
		= \max_{u \in [n]} \ a'_u x_u
		= \max \{(a_i+1) x_i, (a_k-1) x_k, R \}~.
	\end{align*}
	We consider two cases:
	\begin{itemize}
		\item Suppose that $s=1$ (i.e., the only element in $[n]$ such that $a_ux_u=\ell(\bm{a}, \bm{x})$ is $k$), then we have necessarily $R< a_k x_k $. Moreover, by the contradiction hypothesis, $(a_i+1)x_i < a_k x_k$.
		Therefore,
		\begin{align*}
			\ell(\bm{a}', \bm{x}) = \max\{ (a_i+1)x_i, (a_k-1)x_k, R\}
			< 	a_k x_k = \ell(\bm{a}, \bm{x}),
		\end{align*}
		which contradicts the definition of $\bm{a}$.
		\item Suppose that $s\ge 2$, since by hypothesis $ a_k x_k > (a_i+1)x_i$, we clearly have $a_ix_i < \ell(\bm{a}, \bm{x})$ therefore among the set $[n]\setminus \{k,i\}$ there are exactly $s-1$ elements such that $a_u x_u = \ell(\bm{a}, \bm{x})$. In particular, this gives
		\begin{align*}
			\ell(\bm{a}',\bm{x})
			= \max_{u \in [n]} \ \{(a_i+1) x_i, (a_k-1) x_k, R \}
			= R = \ell(\bm{a}, \bm{x})~.
		\end{align*}
		Therefore, $\bm{a}' \in \argmin_{\bm{a} \in \mathcal{A}} \ \ell(\bm{a}, \bm{x})$ and the number of elements such that $a'_i x_i = \ell(\bm{a}', \bm{x})=\ell(\bm{a}, \bm{x})$ is at most $s-1$, which contradicts the fact that $s$ is minimal given the \algname{RAS} choice and Lemma~\ref{thm:minimal_cardinality}.
	\end{itemize}
	As a conclusion we have $a_k x_k\le (a_i+1)x_i$.
\end{proof}
\begin{remark}
	Recall that Lemma~\ref{lem:1} guarantees that $k_i$ defined in \eqref{eq:def_n} satisfy: $k_i \in \{1, 2\}$ for each $i \in [n]$.
\end{remark}

\subsection{Proof of \Cref{thm:main}}
\label{proof:thm:main}

Below we restate the theorem.

\begin{restate-theorem}{\ref{thm:main}}
	Suppose that Assumption~\ref{a:sube} holds. Let $\bar{\bm{a}} \in \argmin_{\bm{a} \in \mathcal{A}} \ell(\bm{a})$, in case of multiple optimal actions, we consider the one output by \algname{RAS} when fed with $\bm{\mu}$.
	Then, the expected regret of \algname{ATA} with inputs $(B, \alpha)$  satisfies
	$$
	\mathcal{R}_K
	\le 2n\max_{i \in [n]} \{B\mu_i -\ell(\bar{\bm{a}})\}+c \cdot\sum_{i=1}^{n} \frac{\alpha^2(\bar{a}_i+k_i)(B \mu_i - \ell(\bar{\bm{a}})) }{\left((\bar{a}_i+k_i)\mu_i - \ell(\bar{\bm{a}})\right)^2}\cdot \ln K,
	$$
	where $\alpha = \max_{i \in [n]} \norm{X_i-\mu_i}_{\psi_1}$, and $c$ is a constant.
\end{restate-theorem}
\begin{proof}
	Let $K_{i,k}$ be the number of rounds where arm $i$ was queried prior to round $k$ (we take $K_{i,1}=0$). Recall that we chose the following confidence bound: if $K_{i,k} \ge 1$, then
	$$
	\text{conf}(i,k) = 2\alpha\sqrt{\frac{ \ln(2k^2)}{K_{i,k}}}+2\alpha\frac{ \ln(2k^2)}{K_{i,k}},
	$$
	and $\text{conf}(i,k) = \infty$ otherwise. Recall that $\hat{\mu}_{i,k}$ denotes the empirical mean of samples from $\nu_i$ observed prior to $k$ if $K_{i,k}\ge 0$ and $\hat{\mu}_{i,k}=0$ if $K_{i,k}=0$. Let $s_{i,k}$ denote the lower confidence bound used in the algorithm:
	$$
	s_{i,k} = \left(\hat{\mu}_{i,k} -\text{conf}(i,k)\right)_{+}~.
	$$
	
	\noindent We introduce the events $\mathcal{E}_{i,k}$ for $i \in [n]$ and $k \in [K]$ defined by
	$$
	\mathcal{E}_{i,k} := \left\lbrace \abs{\hat{\mu}_{i,k}-\mu_i} > \text{conf}(i,k)\right\rbrace.
	$$
	Let 
	$$
	\mathcal{E}_k = \cup_{i \in [n]} \mathcal{E}_{i,k}.
	$$
	Let us prove that for each $k \in [K]$ and $i \in [n]$: $\mathbb{P}\left(\mathcal{E}_{i,k}\right) \le \frac{1}{k^2}$, which gives using a union bound $\mathbb{P}(\mathcal{E}_k) \le \frac{n}{k^2}$. 
	Let $i \in [n]$, using \Cref{prop:concentration} and taking $\delta = 1/k^2$, we have
	\begin{align*}
		\mathbb{P}(\mathcal{E}_{i,k})
		= \mathbb{P}\{\abs{\hat{\mu}_{i,k}-\mu} > \text{conf}(i,k)\}
		\le \frac{1}{k^2}~.
	\end{align*}

	\noindent We call a ``bad round", a round $k$ where we have $\ell(\bm{a}_{k}) > \ell(\bar{\bm{a}})$. Let us upper bound the number of bad rounds. 
	
	\noindent Observe that in a bad round there is necessarily an arm $i \in [K]$ such that $a_{i,k} \mu_i > \ell(\bar{\bm{a}})$. For each $i\in [n]$, let $N_i(k)$ denote the number of rounds $q\in \{1,\dots, k\}$ where $a_{i,q} \mu_i > \ell(\bar{\bm{a}})$ and $i \in \argmax_{j \in [n]} \ a_{j,q} \mu_j$ (this corresponds to a bad round triggered by worker $i$)
	$$
	N_i(k) := \abs{\left\lbrace q \in \{1, \dots, k\}: a_{i,q}\mu_i > \ell(\bar{\bm{a}}) \text{ and } a_{i,q}\mu_i = \ell(\bm{a}_q) \right\rbrace}~.
	$$
	We show that in the case of $\ell(\bm{a}_k) > \ell(\bar{\bm{a}})$, the following event will hold: there exists $i \in [n]$ such that 
	$$
	E_{i,k} := \mathcal{E}_k \text{ or }\left\lbrace N_i(k-1) \le  \frac{24\alpha^2 (\bar{a}_i+k_i) \ln(2K^2)}{\left((\bar{a}_i+k_i)\mu_i - \ell(\bar{\bm{a}})\right)^2}  \right\rbrace~.
	$$
	To prove this we use a contradiction argument. Suppose that for each $i \in [n]$, $\neg E_{i,k}$ holds and that $\ell(\ba_k) > \ell(\bar{\ba})$. This means that $k$ is a bad round, let $i$ be an arm that triggered this bad round (i.e., $i \in \argmax_{j \in [n]} \ a_{j,k} \mu_j$). Event $\neg E_{i,k}$ gives in particular
	\begin{equation}\label{eq:ni}
		N_i(k-1) > \frac{24\alpha^2 (\bar{a}_i+k_i) \ln(2K^2)}{\left((\bar{a}_i+k_i)\mu_i - \ell(\bar{\bm{a}})\right)^2}~.
	\end{equation}
	Observe that in each round where $N_i(\cdot)$ is incremented, the number of samples received from the distribution $\nu_i$ increases by at least $\bar{a}_{i}+k_i$. 
	Therefore, we have \eqref{eq:ni} implies
	\begin{align*}
		K_{i,k}
		> \frac{24\alpha^2(\bar{a}_i+k_i)^2 \ln(2K^2)}{\left((\bar{a}_i+k_i)\mu_i - \ell(\bar{\bm{a}})\right)^2}
		= \frac{24\alpha^2 \ln(2K^2)}{\left(\mu_i - \frac{\ell(\bar{\bm{a}})}{\bar{a}_i+k_i}\right)^2}~.
	\end{align*}

	\noindent Then we have, using the expressions of $\text{conf}(\cdot)$ and the bound above
	\begin{align}
		2\text{conf}(i,k) &=  4\alpha\sqrt{\frac{ \ln(2k^2)}{K_{i,k}}}+4\alpha\frac{ \ln(2k^2)}{K_{i,k}}\nonumber\\
		&\le \mu_i - \frac{\ell(\bar{\ba})}{\bar{a}_i+k_i}~.\label{eq:conf}
	\end{align}
	The contradiction hypothesis gives that $a_{i,k} \mu_i > \ell(\bar{\bm{a}})$, then we have, using the definition of $k_i$ that $a_{i,k} \ge \bar{a}_i + k_i$. Therefore, \eqref{eq:conf} gives
	\begin{equation}\label{eq:conf2}
		2\text{conf}(i,k) < \mu_i - \frac{\ell(\bar{\bm{a}})}{a_{i,k}}~.
	\end{equation}
	Observe that in each round $\norm{\bm{a}_k}_0 = B$, therefore if we have $a_{i,k} \ge \bar{a}_i+k_i > \bar{a}_i$ for some $i$, we necessarily have that there exists $j \in [n]\setminus \{i\}$ such that $a_{j,k} \le \bar{a}_j-1$. Using the fact that $\ell(\bar{\bm{a}}) \ge \bar{a}_j \mu_j$ with \eqref{eq:conf2}, we get
	\begin{equation}\label{eq:e1}
		a_{i,k}(\mu_i-2\text{conf}(i,k)) > \bar{a}_j \mu_j~.
	\end{equation}
	Since both $\neg \mathcal{E}_{i,k}$ and $\neg \mathcal{E}_{j,k}$ hold (because $\neg E_{i,k}$ implies $\neg \mathcal{E}_k$), we have that
	\begin{align}
		\mu_i - 2\text{conf}(i,k) &\le \hat{\mu}_{i, k} - \text{conf}(i,k)
		\le s_{i,k},\label{eq:mu2}
	\end{align} 
	and $\mu_j \ge \hat{\mu}_{j,k} - \text{conf}(j,k)$.
	Recall that $\mu_j \ge 0$, therefore
	\begin{align}
		\mu_j 
		\ge \left(\hat{\mu}_{j,k} - \text{conf}(j,k)\right)_{+}
		= s_{j,k}~.\label{eq:mu3}
	\end{align}
	Using the bounds \eqref{eq:mu2} and \eqref{eq:mu3} in \eqref{eq:e1}, we have
	$$
	a_{i,k} s_{i,k} > \bar{a}_j s_{j,k} \ge (a_{j,k}+1) s_{j,k},
	$$
	where we used the definition of $j$ in the second inequality.
	This contradicts the statement of Lemma~\ref{lem:1}, which concludes the contradiction argument. Therefore, the event that $k$ is a bad round implies that $E_{i,k}$ holds for at least one $i\in [n]$.
	We say that a bad round was triggered by arm $i$, a round where $N_i(\cdot)$ was incremented. 
	Observe that if $k \in [K]$ is not a bad round then $\E{\ell(\bm{a}_k)}-\ell(\bar{\bm{a}})=0$, otherwise if $k$ is a bad round triggered by $i \in [n]$ then $\E{\ell(\bm{a}_{k})}-\ell(\bar{\bm{a}}) \le B\mu_i-\ell(\bar{\bm{a}})$.
	To ease notation we introduce for $i\in [n]$
	$$
	H_i := \frac{24\alpha^2 (\bar{a}_i+k_i) \ln(2K^2)}{\left((\bar{a}_i+k_i)\mu_i - \ell(\bar{\bm{a}})\right)^2}~.
	$$
	The expected regret satisfies
	\begin{align*}
		\mathcal{R}_K &= \sum_{i=1}^{K} \mathbb{E}\left[\ell(\bm{a}_k)-\ell(\bar{\bm{a}})\right]\\
		&\le \sum_{i=1}^{n} (B\mu_i-\ell(\bar{\bm{a}}))\mathbb{E}[N_i(K)]\\ 
		&= \sum_{i=1}^{n}\sum_{k=1}^{K} (B\mu_i-\ell(\bar{\bm{a}}))\mathbb{E}\left[\mathds{1}(k \text{ is a bad round triggered by }i)\right]\\
		&\le \max_{i\in [n]}\{(B\mu_i-\ell(\bar{\bm{a}}))\}\cdot\sum_{t=1}^{K} \mathbb{P}(\mathcal{E}_k)+ \sum_{i=1}^{n}(B\mu_i-\ell(\bar{\bm{a}}))\sum_{k=1}^{K} \mathbb{E}\left[\mathds{1}(k \text{ is a bad round triggered by }i) \mid \neg \mathcal{E}_k\right]\\
		&\le \max_{i\in [n]}\{(B\mu_i-\ell(\bar{\bm{a}}))\}\cdot\sum_{t=1}^{K} \mathbb{P}(\mathcal{E}_k)+ \sum_{i=1}^{n}(B\mu_i-\ell(\bar{\bm{a}}))\sum_{k=1}^{K} \mathbb{E}\left[\mathds{1}(N_i(k)=1+N_i(k-1) \text{ and } N_i \le H_i ) \mid \neg \mathcal{E}_k\right]\\
		&\le \max_{i\in [n]}\{(B\mu_i-\ell(\bar{\bm{a}}))\}\cdot\sum_{k=1}^{K} \mathbb{P}(\mathcal{E}_k)+ \sum_{i=1}^{n} (B\mu_i-\ell(\bar{\bm{a}}))H_i\\
		&\le 2n \max_{i\in [n]}\{(B\mu_i-\ell(\bar{\bm{a}}))\}+  \sum_{i=1}^{n} \frac{24\alpha^2 (\bar{a}_i+k_i)(B\mu_i-\ell(\bar{\bm{a}})) \ln(2K^2)}{\left((\bar{a}_i+k_i)\mu_i - \ell(\bar{\bm{a}})\right)^2}~. \qedhere
	\end{align*}
\end{proof}

\subsection{Proof of \Cref{thm:main2}}
\label{proof:thm:main2}

\begin{restate-theorem}{\ref{thm:main2}}
	Suppose that Assumption~\ref{a:sube} holds. Let $\bar{\bm{a}} \in \argmin_{\bm{a} \in \mathcal{A}} \ell(\bm{a})$, in case of multiple optimal actions, we consider the one output by \algname{RAS} when fed with $\bm{\mu}$.
Then, the expected regret of \algname{ATA-Empirical} with the empirical confidence bounds using the inputs $(B, \eta)$  satisfies
\begin{align*}
	\mathcal{R}_K &\le 2n\max_{i \in [n]} \{B\mu_i -\ell(\bar{\bm{a}})\}
	 +c \cdot\sum_{i=1}^{n} \frac{\eta^2 \mu_i^2(\bar{a}_i+k_i)(B \mu_i - \ell(\bar{\bm{a}})) }{\left((\bar{a}_i+k_i)\mu_i - \ell(\bar{\bm{a}})\right)^2}\cdot \ln K,
\end{align*}
where $\eta = \max_{i \in [n]} \alpha_i / \mu_i$, and $c$ is a constant.
\end{restate-theorem}
\begin{proof}
	We build on the techniques used in the proof of \Cref{thm:main}. Recall the expression of $\eta$:
	$$
	\eta = \max_{i \in [n]}~ \frac{\alpha_i}{\mu_i}~.
	$$
	Define the quantities $C_{i,k}$ by
	$$
	C_{i,k} = 2 \sqrt{\frac{\ln(2k^2)}{K_{i,k}}}+2 \frac{\ln(2k^2)}{K_{i,k}}~.
	$$
	Recall that the lower confidence bounds used here are defined as
	$$
	\hat{s}_{i,k} = \hat{\mu}_{i,k} \left(1-\eta C_{i,k}\right)_{+}~.
	$$
	We additionally define the following quantities
	\begin{equation*}
		\hat{u}_{i,k} := \hat{\mu}_{i,k} \left(1+\frac{4}{3}\eta C_{i,k}\right)~.
	\end{equation*}
	\noindent We introduce the events $\mathcal{E}_{i,k}$ for $i \in [n]$ and $k \in [K]$ defined by
	$$
		\mathcal{E}_{i,k} := \left\lbrace \mu_i < \hat{s}_{i,k} \right\rbrace \quad \text{ or } \quad \left\lbrace \eta C_{i,k} \le \frac{1}{4} \quad \text{and} \quad \mu_i > \hat{u}_{i,k} \right\rbrace ~.
	$$
	Let 
	$$
		\mathcal{E}_k = \cup_{i \in [n]} \mathcal{E}_{i,k}~.
	$$
	We have, using Lemma~\ref{lem:conc2}, for each $k \in [K]$ and $i \in [n]$: $\mathbb{P}\left(\mathcal{E}_{i,k}\right) \le \frac{1}{k^2}$, which gives using a union bound $\mathbb{P}(\mathcal{E}_k) \le \frac{n}{k^2}$. 
	
	\noindent Following similar steps as in the proof of Theorem~\ref{thm:main}, we call a ``bad round", a round $k$ where we have $\ell(\bm{a}_{k}) > \ell(\bar{\bm{a}})$. Let us upper bound the number of bad rounds. 
	
	\noindent Observe that in a bad round there is necessarily an arm $i \in [K]$ such that $a_{i,k} \mu_i > \ell(\bar{\bm{a}})$. For each $i\in [n]$, let $N_i(k)$ denote the number of rounds $q\in \{1,\dots, k\}$ where $a_{i,q} \mu_i > \ell(\bar{\bm{a}})$ and $i \in \argmax_{j \in [n]} \{ a_{j,q} \mu_j\}$ (this corresponds to a bad round triggered by worker $q$):
	$$
	N_i(k) := \abs{\left\lbrace q \in \{1, \dots, k\}: a_{i,q}\mu_i > \ell(\bar{\bm{a}}) \text{ and } a_{i,q}\mu_i = \ell(\bm{a}_q) \right\rbrace}~.
	$$
	We show that in the case of $\ell(\bm{a}_k) > \ell(\bar{\bm{a}})$, the following event will hold: there exists $i \in [n]$ such that 
	$$
	E_{i,k} := \mathcal{E}_k \text{ or }\left\lbrace N_i(k-1) \le  \frac{185\eta^2\mu_i^2 (\bar{a}_i+k_i) \ln(2K^2)}{\left((\bar{a}_i+k_i)\mu_i - \ell(\bar{\bm{a}})\right)^2}  \right\rbrace~.
	$$
	To prove this, suppose for a contradiction argument that we have for some $i \in [n]$: $\neg E_{i,k}$ and that $k$ is a bad round triggered by arm $i$ (i.e., $\ell(\ba_k) > \ell(\bar{\ba})$ and $i \in \argmax_{j \in [n]} {a_{j,k}\mu_j}$).
	
	\noindent This gives in particular
	\begin{equation}\label{eq:ni2}
		N_i(k-1) >  \frac{185\eta^2 \mu_i^2 (\bar{a}_i+k_i) \ln(2K^2)}{\left((\bar{a}_i+k_i)\mu_i - \ell(\bar{\bm{a}})\right)^2}~.
	\end{equation}
	Observe that in each round where $N_i(\cdot)$ is incremented, the number of samples received from the distribution $\nu_i$ increases by at least $\bar{a}_{i}+k_i$. 
	Therefore, we have \eqref{eq:ni2} implies
	\begin{align*}
		K_{i,k}
		>  \frac{185\eta^2 \mu_i^2 (\bar{a}_i+k_i)^2 \ln(2K^2)}{\left((\bar{a}_i+k_i)\mu_i - \ell(\bar{\bm{a}})\right)^2}
		=  \frac{185\eta^2 \mu_i^2  \ln(2K^2)}{\left(\mu_i - \frac{\ell(\bar{\bm{a}})}{\bar{a}_i+k_i}\right)^2}~.
	\end{align*}
	Therefore, we have, using the expression of $C_{i,k}$ and the bound above
	\begin{align}
		C_{i,k} &= 2 \sqrt{\frac{\ln(2k^2)}{K_{i,k}}}+2 \frac{\ln(2k^2)}{K_{i,k}}\nonumber\\
		&\le \frac{3}{19\eta \mu_i} \left(\mu_i - \frac{\ell(\bar{\ba})}{\bar{a}_i+k_i}\right)~.\label{eq:Ci}
	\end{align}
	The last bound implies in particular that $\eta C_{i,k} \le \frac{3}{19}$, hence $(1-\eta C_{i,k})_{+} = 1-\eta C_{i,k}$.
	
	\noindent We have 
	\begin{align}
		\hat{\mu}_{i,k} -\hat{s}_{i,k} &= \hat{\mu}_{i,k} \left(1-\left(1-\eta C_{i,k}\right)_{+}\right)\nonumber\\
		&\le \eta C_{i,k}\hat{\mu}_{i,k}\nonumber\\
		&\le \eta C_{i,k}\frac{\mu_i}{1-\eta C_{i,k}}\nonumber\\
		&\le \frac{1}{5} \left(\mu_i - \frac{\ell(\bar{\ba})}{\bar{a}_i+k_i}\right)~.\label{eq:b10}
	\end{align}
	where we used the event $\neg \mathcal{E}_{i,k}$ in the penultimate inequality (in particular $\hat{s}_{i,k} = \hat{\mu}_{i,k}(1-\eta C_{i,k})_{+}\le \mu_i$), and the bound \eqref{eq:Ci} in the last inequality.
	
	Given the hypothesis that $\ell(\ba_k)>\ell(\bar{\ba})$ and, $\ell(\ba_k) = a_{i,k} \mu_i$, we necessarily have $a_{i,k} \ge \bar{a}_i+k_i$. Therefore, bound \eqref{eq:b10}
	$$
	5\hat{\mu}_{i,k} -5~\hat{s}_{i,k} < \mu_i - \frac{\ell(\bar{\ba})}{a_{i,k}}~.
	$$
	Observe that in each round $\norm{\bm{a}_k}_0 = B$, therefore if we have $a_{i,k} \ge \bar{a}_i+k_i > \bar{a}_i$ for some $i$, we necessarily have that there exists $j \in [n]\setminus \{i\}$ such that $a_{j,k} \le \bar{a}_j-1$.
	Therefore, using the fact that $\ell(\bar{\bm{a}}) \ge \bar{a}_j \mu_j$, we obtain
	\begin{equation}\label{eq:e12}
		a_{i,k}(\mu_i+ 5\hat{s}_{i,k}-5\hat{\mu}_{i,k}) > \ell(\bar{\ba}) \ge \bar{a}_j \mu_j~.
	\end{equation}
	Since both $\neg \mathcal{E}_{i,k}$ and $\neg \mathcal{E}_{j,k}$ hold (because $\neg E_{i,k}$ implies $\neg \mathcal{E}_k$), we have that
	\begin{align*}
		\mu_i +5\hat{s}_{i,k}-5\hat{\mu}_{i,k} &= \hat{s}_{i,k} + \mu_i - \hat{\mu}_{i,k} + 4 \left(\hat{s}_{i,k} - \hat{\mu}_{i,k} \right)\\
		&= \hat{s}_{i,k} + \mu_i - \hat{\mu}_{i,k} + 4\hat{\mu}_{i,k} \left((1-\eta C_{i,k})_{+}-1 \right)\\
		&\le \hat{s}_{i,k} + \mu_i - \hat{\mu}_{i,k} - 4\hat{\mu}_{i,k} \eta C_{i,k}\\
		&\le \hat{s}_{i,k} + \mu_i - \hat{u}_{i,k},
	\end{align*} 
	To conclude, observe that given $\eta C_{i,k} \le \frac{3}{19}$, event $\neg \mathcal{E}_{i,k}$ implies that $\mu_i \le \hat{u}_{i,k}$, therefore 
	$$
		\mu_i +5\hat{s}_{i,k}-5\hat{\mu}_{i,k} \le \hat{s}_{i,k}~.
	$$
	Since $\neg \mathcal{E}_{j,k}$ holds, we also have
	\begin{equation*}
		\mu_j \ge \hat{s}_{j,k}~.
	\end{equation*}
	Using the two last bounds in \eqref{eq:e12}, we have
	$$
	a_{i,k} \hat{s}_{i,k} > \bar{a}_j \hat{s}_{j,k} \ge (a_{j,k}+1) \hat{s}_{j,k}~,
	$$
	where we used the definition of $j$, as an arm satisfying $\bar{a}_j \ge 1+a_{j,k}$, in the second inequality.
	This contradicts the statement of Lemma~\ref{lem:1}, which concludes the contradiction argument. Therefore, the event that $k$ is a bad round implies that $E_{i,k}$ holds for at least one $i\in [n]$.
	We say that a bad round was triggered by arm $i$, a round where $N_i(\cdot)$ was incremented. 
	Observe that if $k \in [K]$ is not a bad round then $\E{\ell(\bm{a}_k)}-\ell(\bar{\bm{a}})=0$, otherwise if $k$ is a bad round triggered by $i \in [n]$ then $\E{\ell(\bm{a}_{k})}-\ell(\bar{\bm{a}}) \le B\mu_i-\ell(\bar{\bm{a}})$.
	To ease notation we introduce for $i\in [n]$
	$$
	H_i := \frac{185 \eta^2\mu_i^2 (\bar{a}_i+k_i) \ln(2K^2)}{\left((\bar{a}_i+k_i)\mu_i - \ell(\bar{\bm{a}})\right)^2}~.
	$$
	The expected regret satisfies
	\begin{align*}
		\mathcal{R}_K &= \sum_{i=1}^{K} \mathbb{E}\left[\ell(\bm{a}_k)-\ell(\bar{\bm{a}})\right]\\
		&\le \sum_{i=1}^{n} (B\mu_i-\ell(\bar{\bm{a}}))\mathbb{E}[N_i(K)]\\ 
		&= \sum_{i=1}^{n}\sum_{k=1}^{K} (B\mu_i-\ell(\bar{\bm{a}}))\mathbb{E}\left[\mathds{1}(k \text{ is a bad round triggered by }i)\right]\\
		&\le \max_{i\in [n]}\{(B\mu_i-\ell(\bar{\bm{a}}))\}\cdot\sum_{t=1}^{K} \mathbb{P}(\mathcal{E}_k)+ \sum_{i=1}^{n}(B\mu_i-\ell(\bar{\bm{a}}))\sum_{k=1}^{K} \mathbb{E}\left[\mathds{1}(k \text{ is a bad round triggered by }i) \mid \neg \mathcal{E}_k\right]\\
		&\le \max_{i\in [n]}\{(B\mu_i-\ell(\bar{\bm{a}}))\}\cdot\sum_{t=1}^{K} \mathbb{P}(\mathcal{E}_k)+ \sum_{i=1}^{n}(B\mu_i-\ell(\bar{\bm{a}}))\sum_{k=1}^{K} \mathbb{E}\left[\mathds{1}(N_i(k)=1+N_i(k-1) \text{ and } N_i \le H_i ) \mid \neg \mathcal{E}_k\right]\\
		&\le \max_{i\in [n]}\{(B\mu_i-\ell(\bar{\bm{a}}))\}\cdot\sum_{k=1}^{K} \mathbb{P}(\mathcal{E}_k)+ \sum_{i=1}^{n} (B\mu_i-\ell(\bar{\bm{a}}))H_i\\
		&\le 2n \max_{i\in [n]}\{(B\mu_i-\ell(\bar{\bm{a}}))\}+  \sum_{i=1}^{n} \frac{185 \eta^2\mu_i^2 (\bar{a}_i+k_i)(B \mu_i - \ell(\bar{\ba})) \ln(2K^2)}{\left((\bar{a}_i+k_i)\mu_i - \ell(\bar{\bm{a}})\right)^2}~. \qedhere
	\end{align*}
\end{proof}

\subsection{Proof of \Cref{cor:main}}
\label{sec:proof_2}

Let us first restate the theorem.
\begin{restate-theorem}{\ref{cor:main}}
	Suppose Assumption~\ref{a:sube} holds and let $\eta := \max_{i \in [n]} \alpha_i / \mu_i$, where $\alpha_i = \norm{X_i-\mu_i}_{\psi_1}$.
	Then, the total expected computation time after $K$ rounds, using the allocation prescribed by \algname{ATA} with inputs $(B, \alpha)$ satisfies
	$$
	\mathcal{C}_K \le \left(1+4\eta \ln(B)\right)\mathcal{C}_K^* + \mathcal{O}(\ln K)~.
	$$
\end{restate-theorem}
\begin{proof}
	Let $\mathbb{E}_k$ be the expectation with respect to the variables observed up to and including $k$ and $\mathcal{F}_k$ the corresponding filtration. Using the tower rule, we have
	$$
		\sum_{k=1}^{K}\mathbb{E}\left[C(\bm{a}_k)\right] = \mathbb{E}\left[ \sum_{k=1}^{K} \mathbb{E}_{k-1}[C(\bm{a}_k)]\right].
	$$
	Consider round $k \in [K]$, let us upper bound $\mathbb{E}_{k-1}[C(\bm{a}_t)]$ using $\mathbb{E}_{k-1}[\ell(\bm{a}_k)]$. Recall that $\bm{a}_k \in \mathcal{F}_{k-1}$, let $Y_i = \sum_{u=1}^{a_{i,k}} X^{(u)}_{i,k}$, since $Y_i$ is the sum of $a_{i,k}$ i.i.d samples we have that $\mathbb{E}_{k-1}[Y_i] = a_{i,k} \mu_i$ and $\norm{Y_i - a_{i,k} \mu_i}_{\psi_1} \le a_{i,k} \norm{X_i-\mu_i}_{\psi_1}$.
	Thus, using \Cref{lem:exp_max}, we get
	\begin{align*}
		\mathbb{E}_{k-1}\left[C(\bm{a}_k) \right] &= \mathbb{E}_{k-1}\left[ \max_{i \in \text{supp}(\bm{a}_k)}\left\lbrace \sum_{u=1}^{a_{i,k}} X^{(u)}_{i,k}  \right\rbrace\right]\\
		&\le \max_{i \in \text{supp}(\bm{a}_k)}\left\lbrace a_{i,k} \mu_i\right\rbrace + 4\max_{i \in \text{supp}(\bm{a}_k)} \{ a_{i,k} \alpha_i\} \cdot \ln(B)\\
		&\le \max_{i \in \text{supp}(\bm{a}_k)}\left\lbrace a_{i,k} \mu_i\right\rbrace + 4\max_{i \in \text{supp}(\bm{a}_k)} \{ a_{i,k}\, \eta\mu_i\} \cdot \ln(B)\\
		&= \left(1+4\eta \ln(B)\right)\max\left\lbrace a_{i,k} \mu_i\right\rbrace.
	\end{align*}
	Moreover, using Jensen's inequality, we have
	\begin{align*}
		\max_{i \in [n]} \{a^*_i \mu_i\}
		\le \mathbb{E}\left[\max_{i \in [n]} \left\lbrace \sum_{u=1}^{a_{k,i}} X^{(u)}_{i,k} \right\rbrace \right]
		= \mathbb{E}[C(\bm{a}^*)]~.
	\end{align*}
	
	Using the last two bounds with the result of \Cref{thm:main}, we get the result.
	
\end{proof}

\section{Technical Lemmas}
\label{sec:technical}

The lemma below gives a concentration bound on sub-exponential variables. Note that this result can be inferred from Proposition 7 in \citet{maurer2021concentration}, although applying the last result directly requires assuming the variables are positive, this is not needed in their proof in the one dimensional case. For completeness, we present the full proof below.

\begin{lemma}\label{prop:concentration}
	Let $Y_1, \dots, Y_n$ be i.i.d random variables with $\E{Y_1} = 0$ and $\alpha = \norm{Y_1}_{\psi_1} < +\infty$. Then for any $\delta \in (0,1)$, we have with probability at least $1-\delta$
	$$
		\abs{\frac{1}{n}\sum_{i=1}^n Y_i} \le 2\alpha \left(\sqrt{\frac{\ln(2/\delta)}{n}}+ \frac{\ln(2/\delta)}{2n}\right)~.
	$$
\end{lemma}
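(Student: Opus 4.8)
The plan is to follow the classical route for Bernstein-type inequalities, while tracking constants carefully since the target has the specific factors $2\alpha$ and $\tfrac12$: pass from the Orlicz-norm hypothesis to the moments of $Y_1$, convert these into a bound on the moment generating function (MGF) valid on a bounded range of the dual variable, apply the Chernoff method to $S_n := \sum_{i=1}^n Y_i$, and symmetrize to obtain the two-sided statement. First I would extract moment information from $\norm{Y_1}_{\psi_1} \le \alpha$, i.e.\ $\E{\exp(\abs{Y_1}/\alpha)} \le 2$. Expanding the exponential gives $\sum_{p\ge 0} \E{\abs{Y_1}^p}/(p!\,\alpha^p) \le 2$, and since the $p=0$ term equals $1$, the \emph{aggregate} bound $\sum_{p\ge 2} \E{\abs{Y_1}^p}/(p!\,\alpha^p) \le 1$ follows. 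This aggregate form, rather than the cruder term-by-term estimate $\E{\abs{Y_1}^p}\le 2\,p!\,\alpha^p$, is exactly what produces the sharp leading constant.

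Next, for $\abs{t}\le 1/\alpha$ I would bound the MGF of the centered variable: writing $\E{\exp(tY_1)} = 1 + \sum_{p\ge 2} t^p\,\E{Y_1^p}/p!$ (the linear term vanishes), bounding $\abs{\E{Y_1^p}} \le \E{\abs{Y_1}^p}$, and using $(\abs{t}\alpha)^p \le (\abs{t}\alpha)^2$ for $p\ge 2$ on the range $\abs{t}\alpha\le 1$, the aggregate bound collapses the tail of the series to give $\E{\exp(tY_1)} \le 1 + (t\alpha)^2 \le \exp((t\alpha)^2)$. By independence, $\E{\exp(tS_n)} \le \exp(n\alpha^2 t^2)$ for all $\abs{t}\le 1/\alpha$.

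Then I would run Chernoff: for $0 < t \le 1/\alpha$, $\Prob{S_n \ge s} \le \exp(-ts + n\alpha^2 t^2)$. Optimizing the constrained objective splits into two branches --- the interior optimizer $t = s/(2n\alpha^2)$ when $s \le 2n\alpha$, giving $\exp(-s^2/(4n\alpha^2))$, and the boundary $t = 1/\alpha$ when $s > 2n\alpha$, giving $\exp(-s/\alpha + n) \le \exp(-s/(2\alpha))$. Setting $s = 2\alpha\sqrt{nL} + \alpha L$ with $L := \ln(2/\delta)$, I would verify that the exponent is at most $-L$ in both branches: in the interior branch $s\ge 2\alpha\sqrt{nL}$ already forces $s^2/(4n\alpha^2)\ge L$, while in the boundary branch the condition $s > 2n\alpha$ itself forces $L \ge n/4$, whence $2\sqrt{nL}\ge n$ and $s/\alpha - n \ge L$. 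Thus $\Prob{S_n \ge s} \le \delta/2$, and dividing by $n$ turns $s/n$ into exactly $2\alpha\bigl(\sqrt{L/n} + L/(2n)\bigr)$. Applying the identical argument to $-Y_i$ (which has the same Orlicz norm and zero mean) and taking a union bound over the two tails yields the stated two-sided inequality.

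The main obstacle is entirely quantitative rather than conceptual: the off-the-shelf moment bound $\E{\abs{Y_1}^p}\le 2\,p!\,\alpha^p$ inflates the variance proxy by a factor of two and would deliver a leading constant of $4\alpha$, so the sharpened aggregate moment step and the resulting clean MGF bound $\exp((t\alpha)^2)$ are indispensable. Equally delicate is the final inversion: I must confirm that the single choice $s = 2\alpha\sqrt{nL} + \alpha L$ simultaneously dominates both branches of the constrained Chernoff optimization, which hinges on the observation that being in the linear regime $s > 2n\alpha$ already implies $L \ge n/4$, thereby reproducing the advertised constants without any additional case assumptions.
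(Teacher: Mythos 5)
Your proof is correct, and it follows a genuinely more self-contained route than the paper's. The paper (see \Cref{sec:technical}) expands $\E{\exp(\abs{Y_1}/\alpha)} \le 2$ in exactly the same way you do, but only to extract the individual moment bounds $\E{\abs{Y_1}^k} \le k!\,\alpha^k$ (\Cref{lem:mgf}); it then verifies the hypotheses of a cited Bernstein inequality (\Cref{prop:bernstein}, Theorem 2.10 of Boucheron et al.) with variance proxy $v = 2n\alpha^2$ and scale $c=\alpha$, which directly returns $\Prob{\sum_{i=1}^n Y_i \ge 2\alpha\sqrt{nt}+\alpha t} \le e^{-t}$, and concludes with $t = \ln(2/\delta)$. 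You instead keep the aggregate bound $\sum_{p\ge 2} \E{\abs{Y_1}^p}/(p!\,\alpha^p) \le 1$, convert it into the restricted-range MGF bound $\E{e^{tY_1}} \le e^{t^2\alpha^2}$ for $\abs{t}\le 1/\alpha$, and carry out the constrained Chernoff optimization by hand — in effect re-proving the Bernstein inequality in this special case rather than citing it. Both routes give identical constants; yours buys a fully self-contained argument with transparent constant tracking (and your remark that the aggregate, rather than term-by-term, moment bound is needed to avoid a doubled variance proxy matches precisely what \Cref{lem:mgf} does), while the paper's buys brevity. One small caution about your write-up: in the boundary branch, the side remark $\exp(-s/\alpha + n) \le \exp(-s/(2\alpha))$ is true but cannot by itself finish the argument, since $s/(2\alpha) \ge L$ can fail when $L \gg n$ (e.g.\ $L = 100n$ gives $s/(2\alpha) = 60n < L$); fortunately your actual verification bypasses it and uses the direct chain $s/\alpha - n \ge L$, which follows from the observation that the branch condition forces $L \ge n/4$, so the proof stands as written.
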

\begin{proof}
	Let $v := 2n \alpha^2$. We have using Lemma~\ref{lem:mgf} that
	$$
		\sum_{i=1}^n \E{Y_i^2} \le 2n \alpha^2 \quad \text{and} \quad \sum_{i=1}^{n} \E{(Y_i)_{+}^q} \le \frac{q!}{2} v \alpha^{q-2}~.
	$$
	Therefore, using Bernstein concentration inequality (Proposition~\ref{prop:bernstein}) we obtain that
	$$
		\mathbb{P}\left(\abs{\sum_{i=1}^n Y_i} \ge 2\alpha \sqrt{nt}+ \alpha t\right) \le 2\exp(-t)~.
	$$
	Choosing $t=\ln(2/\delta)$, we obtain the result.
\end{proof}

\begin{proposition}[Theorem 2.10 in \citet{boucheron2005moment}]\label{prop:bernstein}
	Let $X_{1},\dots,X_{n}$ be independent real-valued random variables.
	Assume there exist positive numbers $v$ and $c$ such that
	$$
		\sum_{i=1}^{n}\mathbb{E}\bigl[X_{i}^{2}\bigr] \le v \quad\text{and}\quad \sum_{i=1}^{n}\mathbb{E}\bigl[(X_{i})_{+}^{\,q}\bigr] \le \frac{q!}{2}\,v\,c^{\,q-2}, \qquad\text{for all integers } q\ge 3,
	$$
	where $x_{+}:=\max\{x,0\}$.
	Define the centered sum
	$$
		S :=\sum_{i=1}^{n}\bigl(X_{i}-\mathbb{E}X_{i}\bigr).
	$$
	Then, for every $t>0$,
	$$
		\mathbb{P}\left( S \ge \sqrt{2vt}+ct\right) \le e^{-t}.
	$$
\end{proposition}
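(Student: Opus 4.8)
The plan is to establish this Bernstein-type tail bound via the classical Cramér--Chernoff (exponential moment) method, which is the standard route for inequalities of this form. First I would fix $\lambda \in (0, 1/c)$ and apply Markov's inequality to $e^{\lambda S}$, giving
$$
\mathbb{P}\left(S \ge s\right) \le e^{-\lambda s}\,\mathbb{E}\!\left[e^{\lambda S}\right].
$$
Since $S = \sum_{i=1}^n (X_i - \mathbb{E}[X_i])$ with the $X_i$ independent, the moment generating function factorizes, $\mathbb{E}[e^{\lambda S}] = \prod_{i=1}^n \mathbb{E}[e^{\lambda(X_i - \mathbb{E}[X_i])}]$, so the problem reduces to a per-variable bound on the log-MGF.

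The key technical step is the \emph{sub-gamma} estimate
$$
\log \mathbb{E}\!\left[e^{\lambda S}\right] \le \frac{v\lambda^2}{2(1 - c\lambda)}, \qquad 0 < \lambda < \tfrac{1}{c}.
$$
To obtain it I would set $Z_i := X_i - \mathbb{E}[X_i]$ and use $\log(1+u) \le u$ together with $\mathbb{E}[Z_i]=0$ to write $\log \mathbb{E}[e^{\lambda Z_i}] \le \mathbb{E}[e^{\lambda Z_i} - 1 - \lambda Z_i] = \sum_{q \ge 2} \frac{\lambda^q}{q!}\mathbb{E}[Z_i^q]$. I would control the quadratic term through $\mathbb{E}[Z_i^2] = \mathrm{Var}(X_i) \le \mathbb{E}[X_i^2]$, and the higher-order terms ($q \ge 3$) through the one-sided moment hypothesis $\sum_i \mathbb{E}[(X_i)_+^q] \le \tfrac{q!}{2} v\, c^{q-2}$, relying on the elementary inequality $e^u - 1 - u \le \tfrac{u^2}{2} + \sum_{q \ge 3}\tfrac{(u_+)^q}{q!}$ to pass from the exponential to the positive-part moments. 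Summing over $i$ and then summing the geometric series $\sum_{q \ge 2}(c\lambda)^{q-2} = (1-c\lambda)^{-1}$ yields the displayed bound.

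Finally, I would combine this with the Chernoff bound and optimize over $\lambda$. The quantity $\sup_{0 < \lambda < 1/c}\bigl(\lambda s - \tfrac{v\lambda^2}{2(1-c\lambda)}\bigr)$ is a standard Legendre-transform computation giving
$$
\mathbb{P}\left(S \ge s\right) \le \exp\!\left(-\frac{v}{c^2}\, h\!\left(\frac{cs}{v}\right)\right), \qquad h(u) = 1 + u - \sqrt{1 + 2u}.
$$
To recover the stated additive form I would verify that substituting $s = \sqrt{2vt} + ct$ makes the exponent exactly equal to $-t$, which is precisely the textbook sub-gamma tail bound.

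The main obstacle I anticipate is the per-variable MGF estimate: carefully transferring the bound from the centered higher moments $\mathbb{E}[Z_i^q]$ to the given one-sided quantities $\mathbb{E}[(X_i)_+^q]$ (handling the sign of $\mathbb{E}[X_i]$ and the negative part via $e^u - 1 - u \le u^2/2$ for $u \le 0$), and then the inversion step that turns the rate function $h$ into the clean form $\sqrt{2vt} + ct$. I note that in the intended application (\Cref{prop:concentration}) the variables are already centered, so the centered-versus-positive-part subtlety collapses, but the general statement requires tracking it explicitly.
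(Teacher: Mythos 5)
Your overall strategy --- Cram\'er--Chernoff with the sub-gamma estimate $\log \mathbb{E}[e^{\lambda S}] \le \frac{v\lambda^2}{2(1-c\lambda)}$ for $0<\lambda<1/c$, followed by the Legendre-transform inversion showing that $s=\sqrt{2vt}+ct$ makes the exponent exactly $-t$ --- is precisely the standard proof of this inequality. Note that the paper itself offers no proof of this proposition: it is imported verbatim from \citet{boucheron2005moment}, so the benchmark is the proof in that reference, whose skeleton you have reproduced. Your Chernoff step, the geometric-series summation, and the inversion via $h(u)=1+u-\sqrt{1+2u}$ are all correct.

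However, your per-variable MGF step contains a genuine gap. You center first and write $\log\mathbb{E}[e^{\lambda Z_i}] \le \mathbb{E}[e^{\lambda Z_i}-1-\lambda Z_i] = \sum_{q\ge2}\frac{\lambda^q}{q!}\mathbb{E}[Z_i^q]$, then plan to bound the terms of this series. Two things go wrong. First, the hypotheses control only $\mathbb{E}[X_i^2]$ and the \emph{positive-part} moments $\mathbb{E}[(X_i)_+^q]$; the negative tails of the $X_i$ are otherwise unconstrained, so the centered moments $\mathbb{E}[Z_i^q]$ for $q\ge 3$ need not exist (take $X_i$ with $\mathbb{P}(X_i=-k)\propto k^{-4}$: then $\mathbb{E}[X_i^2]<\infty$ and $(X_i)_+\equiv 0$, yet $\mathbb{E}[|X_i|^3]=\infty$), and the term-by-term interchange of expectation and sum is unjustified. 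Second, even when those moments exist, applying the elementary inequality with $u=\lambda Z_i$ produces $\mathbb{E}[(Z_i)_+^q]$, and the transfer to $\mathbb{E}[(X_i)_+^q]$ fails exactly when $\mathbb{E}[X_i]<0$, since then $(Z_i)_+=(X_i+|\mathbb{E}[X_i]|)_+\ge (X_i)_+$ pointwise; patching this via $(a+b)^q\le 2^{q-1}(a^q+b^q)$ inflates the pair $(v,c)$ and loses the stated clean bound. You flag this as ``the main obstacle'' but do not resolve it, and the resolution is not a transfer argument at all but a restructuring: apply $\log x \le x-1$ to the \emph{uncentered} MGF, so that $\log\mathbb{E}[e^{\lambda Z_i}] = \log\mathbb{E}[e^{\lambda X_i}]-\lambda\mathbb{E}[X_i] \le \mathbb{E}\big[e^{\lambda X_i}-1-\lambda X_i\big]$, and then use the pointwise inequality $e^u-1-u\le \frac{u^2}{2}+\sum_{q\ge3}\frac{(u_+)^q}{q!}$ with $u=\lambda X_i$ \emph{before} taking expectations. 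Then only $\mathbb{E}[X_i^2]$ and $\mathbb{E}[(X_i)_+^q]$ --- exactly the hypothesized quantities --- ever appear, no moment series is formed, and summing over $i$ gives $\log\mathbb{E}[e^{\lambda S}]\le \frac{\lambda^2 v}{2}\sum_{q\ge 2}(\lambda c)^{q-2}=\frac{v\lambda^2}{2(1-c\lambda)}$. With that correction the remainder of your argument goes through unchanged.
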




%

\begin{lemma}\label{lem:conc2}
	Let $X_1, \dots, X_n$ be i.i.d positive random variables with mean $\mu$ and $\norm{X_1}_{\psi_1}<+\infty$. Denote $\alpha := \norm{X_1-\mu}_{\psi_1}$. Denote $\hat{X}_n = \frac{1}{n}\sum_{i=1}^n X_i$ and let $\delta \in (0,1)$. Define $\eta$, $C_{\cdot, \cdot}$ by:
	\begin{equation*}
		\eta :=  \frac{\alpha}{\mu} \qquad \text{and} \qquad 
		C_{n,\delta} := 2  \sqrt{\frac{\ln(2/\delta)}{n}}+2 \frac{\ln(2/\delta)}{n}~.
	\end{equation*}
	Then with probability at least $1-\delta$ we have
	$$
		\mu \ge \hat{X}_n (1-\eta \cdot C_{n, \delta})_{+}~,
	$$
	where we use the notation $(a)_+ = \max\{0,a\}$. Moreover, if $\eta~C_{n,\delta} \le \frac{1}{4}$, then with probability least $1-\delta$
	$$
		\hat{X}_n (1-\eta \cdot C_{n,\delta})_{+} \le \mu \le \hat{X}_n \left(1+\frac{4}{3}\eta\cdot  C_{n,\delta}\right)~.
	$$
\end{lemma}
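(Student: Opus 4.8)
The plan is to reduce both statements to the symmetric deviation bound furnished by the Bernstein-type inequality of \Cref{prop:concentration}, and then to close with elementary scalar manipulations. First I would apply \Cref{prop:concentration} to the centered variables $Y_i := X_i - \mu$, which are i.i.d., mean-zero, and satisfy $\norm{Y_1}_{\psi_1} = \alpha$. This yields, with probability at least $1-\delta$,
$$
	\abs{\hat{X}_n - \mu} \le 2\alpha\left(\sqrt{\frac{\ln(2/\delta)}{n}} + \frac{\ln(2/\delta)}{2n}\right) \le \alpha\, C_{n,\delta} = \mu\,\eta\, C_{n,\delta},
$$
where the second inequality just bounds $\ln(2/\delta)/(2n)$ by $\ln(2/\delta)/n$ and the final equality uses $\eta = \alpha/\mu$. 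Rearranging, on this event the empirical mean is trapped as
$$
	\mu(1 - \eta C_{n,\delta}) \le \hat{X}_n \le \mu(1 + \eta C_{n,\delta}).
$$
All subsequent steps hold on this single high-probability event.

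For the first claim I would split on the sign of $1 - \eta C_{n,\delta}$. If $\eta C_{n,\delta} \ge 1$, the truncated factor $(1 - \eta C_{n,\delta})_+$ is zero, so $\mu \ge \hat{X}_n(1-\eta C_{n,\delta})_+$ holds trivially because $\mu > 0$. If $\eta C_{n,\delta} < 1$, I multiply the upper bound $\hat{X}_n \le \mu(1 + \eta C_{n,\delta})$ by the nonnegative factor $(1 - \eta C_{n,\delta})$ and use $(1+t)(1-t) = 1 - t^2 \le 1$ with $t = \eta C_{n,\delta}$, obtaining $\hat{X}_n(1 - \eta C_{n,\delta}) \le \mu(1 - (\eta C_{n,\delta})^2) \le \mu$, which is exactly the assertion.

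For the second claim the lower bound is just the first claim restated, so only the upper bound $\mu \le \hat{X}_n(1 + \tfrac{4}{3}\eta C_{n,\delta})$ remains. Writing $t := \eta C_{n,\delta} \le \tfrac14$, the left inequality of the trap gives $\mu \le \hat{X}_n/(1 - t)$, so it suffices to establish the scalar bound $\tfrac{1}{1-t} \le 1 + \tfrac{4}{3}t$ for $t \in [0, \tfrac14]$. Clearing denominators, this is equivalent to $0 \le (1 + \tfrac43 t)(1 - t) - 1 = \tfrac13 t(1 - 4t)$, which is nonnegative precisely because $t \ge 0$ and $t \le \tfrac14$; multiplying through by $\hat{X}_n \ge 0$ completes the argument.

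The computations are all routine; the only genuine design choice is checking that the constant $\tfrac{4}{3}$ together with the threshold $\tfrac14$ makes $\tfrac{1}{1-t} \le 1 + \tfrac43 t$ hold, which is exactly where the factor $1 - 4t \ge 0$ enters. I would take care to handle the truncation $(\cdot)_+$ correctly in the edge case $\eta C_{n,\delta} \ge 1$, and to note that positivity of both $\mu$ and $\hat{X}_n$ (guaranteed since the $X_i$ are positive) is what licenses the sign-preserving multiplications.
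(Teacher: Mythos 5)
Your proposal is correct and follows essentially the same route as the paper's proof: the same event obtained by applying \Cref{prop:concentration} to the centered variables $X_i-\mu$, the same case split on whether $\eta C_{n,\delta}\ge 1$, and the same elementary scalar inequalities (your use of $(1-t)(1+t)\le 1$ is just a restatement of the paper's $1-t\le \tfrac{1}{1+t}$, and you additionally verify the bound $\tfrac{1}{1-t}\le 1+\tfrac43 t$ on $[0,\tfrac14]$, which the paper asserts without proof). No gaps.
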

\begin{proof}
	Fix $n, \delta$.
	We work on the event 
	$$
		\mathcal{E}_{n,\delta} = \left\lbrace \abs{\hat{X}_n - \mu} \le \alpha \cdot  C_{n, \delta} \right\rbrace~
	$$
	that holds with probability at least $1-\delta$ if we apply Proposition~\ref{prop:concentration} to $X_i-\mu$.
	
	\noindent \textbf{Proof of $\mu \ge \hat{X}_n (1-\eta \cdot C_{n,\delta})_{+}$:}
	
	If $\eta C_{n,\delta} \ge 1$, we have $(1-\eta \cdot C_{n,\delta})_{+} = 0$ and the result follows from the fact that $X$ is non-negative which gives $\mu \ge 0$. 
	
	\noindent Suppose now that $\eta C_{n,\delta} < 1$. Recall that event $\mathcal{E}_{n,\delta}$ implies that 
	$$
	 \hat{X}_n \le \mu (1+\eta C_{n,\delta})~.
	$$
	Therefore, we have
	$$
		\frac{\hat{X}_n}{1+\eta C_{n,\delta}} \le \mu ~.
	$$
	Using $1-\eta C_{n,\delta} \le \frac{1}{1+\eta C_{n,\delta}}$ with the bound above, we obtain
	$$
		\hat{X}_n (1-\eta C_{n,\delta})_{+} = \hat{X}_n (1-\eta C_{n,\delta}) \le \frac{\hat{X}_n}{1+\eta \cdot C_{n,\delta}} \le \mu~.
	$$
	
	\noindent \textbf{Proof of $ \mu \le \left(1+\frac{4}{3} \eta \cdot C_{n,\delta}\right).$} 
	Recall that event $\mathcal{E}_{n,\delta}$ gives
	$$
		\hat{X}_n \ge \mu - \alpha C_{n,\delta} = \mu (1-\eta C_{n,\delta})~. 
	$$
	Suppose that $\eta C_{n,\delta} \le \frac{1}{4}$. We therefore have
	$$
		\mu \le \frac{\hat{X}_n}{1-\eta C_{n,\delta}}~.
	$$ 
	Next, we use the fact that for any $x \in [0, 1/4]$, we have
	$$
		\frac{1}{1-x} \le 1+\frac{4}{3}x~,
	$$
	which gives
	$$
		\mu \le \hat{X}_n \left(1+\frac{4}{3}\eta \cdot C_{n,\delta}\right)~,
	$$
	when $\eta C_{n,\delta} \le \frac{1}{4}$.
\end{proof}

\begin{lemma}\label{lem:exp_max}
	Let $X_1, \dots, X_n$ be a sequence of independent random variables with finite Orlicz norm $\norm{X_i}_{\psi_1} <+\infty$ and let $\E{X_i} = \mu_i$. Then we have
	$$
		\mathbb{E}[\max_{i \in [n]} X_i ] \le \max_{i \in [n]} \mu_i + 4\alpha \ln(n)~,
	$$
where $\alpha = \max_{i \in [n]} \norm{X_i - \mu_i}_{\psi_1}$. 
\end{lemma}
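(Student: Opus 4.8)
The plan is to reduce to centered random variables and then apply the classical exponential-moment bound for the maximum, taking care to optimize the free parameter only within the range where the sub-exponential moment generating function is controlled.

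First I would center the variables: setting $Y_i := X_i - \mu_i$, we have $\E{Y_i}=0$ and $\norm{Y_i}_{\psi_1}\le\alpha$. Since $\mu_i \le \max_j \mu_j$ for every $i$, it follows that $X_i = \mu_i + Y_i \le \max_j\mu_j + \max_\ell Y_\ell$, and taking the maximum over $i$ gives $\max_i X_i \le \max_j \mu_j + \max_i Y_i$. Taking expectations, the claim reduces to proving $\E{\max_i Y_i} \le 4\alpha\ln(n)$; the case $n=1$ is immediate, since then $\max_i Y_i = Y_1$ has mean zero and $\ln 1 = 0$.

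Next I would apply the Chernoff--Jensen trick. For any $\lambda>0$, convexity of $x\mapsto e^{\lambda x}$ yields
$$
\exp\(\lambda\,\E{\max_i Y_i}\) \le \E{e^{\lambda\max_i Y_i}} = \E{\max_i e^{\lambda Y_i}} \le \sum_{i=1}^n \E{e^{\lambda Y_i}}.
$$
To bound each term I would invoke the sub-exponential moment-generating-function estimate underlying \Cref{lem:mgf}: from the Bernstein-type moment conditions $\E{Y_i^2}\le 2\alpha^2$ and $\E{(Y_i)_+^q}\le \tfrac{q!}{2}(2\alpha^2)\alpha^{q-2}$ one obtains $\E{e^{\lambda Y_i}}\le \exp(2\alpha^2\lambda^2)$ on the range $0\le\lambda\le\frac{1}{2\alpha}$, uniformly in $i$. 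Substituting and taking logarithms gives $\lambda\,\E{\max_i Y_i} \le \ln n + 2\alpha^2\lambda^2$, i.e.
$$
\E{\max_i Y_i} \le \frac{\ln n}{\lambda} + 2\alpha^2\lambda .
$$

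The crucial---and only genuinely delicate---step is the choice of $\lambda$. Minimizing the right-hand side freely would give $\lambda\sim\sqrt{\ln n}/\alpha$ and the sub-Gaussian rate $\sqrt{\ln n}$, but this $\lambda$ leaves the admissible range once $n$ is large, so the MGF bound no longer applies. The correct move is to saturate the constraint and take $\lambda = \frac{1}{2\alpha}$, which produces the characteristic sub-exponential rate, linear in $\ln n$:
$$
\E{\max_i Y_i} \le 2\alpha\ln n + \alpha \le 4\alpha\ln n ,
$$
where the last step uses $\alpha \le \alpha\ln n/\ln 2 \le 2\alpha\ln n$ for $n\ge 2$. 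Combined with the centering reduction of the first step, this gives the claim. The loose constant $4$ is intentional: it absorbs both the numerical constant in the MGF estimate and the additive $\alpha$ term, so no sharp bookkeeping is required.
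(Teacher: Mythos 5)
Your proof is correct and follows essentially the same route as the paper: center the variables, apply the exponential soft-max (Chernoff--Jensen) trick, bound each moment generating function via the sub-exponential moment estimates, and pick $\lambda$ of order $1/\alpha$ to get the $\alpha\ln n$ rate. The only cosmetic differences are that the paper uses the bound $\E{e^{\lambda Y_i}}\le \frac{1}{1-\lambda\alpha}$ with $\lambda=\frac{1-1/n}{\alpha}$, whereas you derive the quadratic bound $\E{e^{\lambda Y_i}}\le\exp\(2\alpha^2\lambda^2\)$ on $\lambda\le\frac{1}{2\alpha}$ and saturate that constraint; both choices yield the same $4\alpha\ln n$ conclusion.
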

\begin{proof}
	If $n=1$ the bound is straightforward, suppose that $n\ge 2$.
	Let $Y_i := X_i- \mu_i$, then $\alpha = \max_{i \in [n]} \norm{Y_i}_{\psi_1}$. Let $\lambda \in (0, 1/\alpha)$, we have
	\begin{align*}
		\max_{i\in [n]} Y_i &= \frac{1}{\lambda} \ln\left(\exp\left(\lambda \max_{i\in [n]} Y_i\right)\right)\\
		&\le \frac{1}{\lambda} \ln\left(\sum_{i=1}^n \exp\left(\lambda Y_i\right)\right)~.
	\end{align*}
	Taking the expectation and using Lemma~\ref{lem:mgf}, we have
	\begin{align*}
		\E{\max_{i\in [n]} Y_i} &\le \frac{1}{\lambda} \ln\left(\sum_{i=1}^n \E{\exp\left(\lambda Y_i\right)}\right)\\
		&\le \frac{1}{\lambda} \ln\left(\frac{n}{1-\lambda \alpha}\right)~.
	\end{align*}
	We choose $\lambda = \frac{1 - 1 /n}{\alpha}$, which gives
	\begin{align}
		\E{\max_{i\in [n]} Y_i} &\le \frac{\alpha}{1-\frac{1}{n}} \ln(n^2)\nonumber\\
		&= 2\alpha \frac{n}{n-1} \ln(n)\nonumber\\
		&\le 4\alpha \ln(n)~.\label{eq:y}
	\end{align}
	Let $i^* \in \argmax_{i \in [n]} X_i$, we have 
	\begin{align*}
		\max_{i \in [n]} X_i - \max_{i\in [n]} \mu_i &= X_{i^*} - \max_{i\in [n]} \mu_i\\
		&\le X_{i^*} - \mu_{i^*} \le \max_{i \in [n]} \{X_i - \mu_i\} = \max_{i\in [n]} Y_i~.
	\end{align*}
	Combining the last bound with \eqref{eq:y} we obtain
	$$
		\E{\max_{i \in [n]} X_i } \le \max_{i \in [n]} \mu_i + 4\alpha \ln(n)~.
	$$
\end{proof}

Lemma below is based on a standard argument we give here for completeness.
\begin{lemma}\label{lem:mgf}
	Let $Y$ be a variable such that $\alpha = \norm{Y}_{\psi_1}<+\infty$. Then we have for any $\lambda \in \left(-\frac{1}{\alpha}, \frac{1}{\alpha}\right)$
	$$
		\E{\exp\left(\lambda Y\right)} \le \frac{1}{1-\abs{\lambda}\alpha}~.
	$$
	Moreover, we have for any $q\ge 3$
	$$
		\E{Y^2} \le 2\alpha^2 \quad \text{and} \quad \E{(Y)_{+}^q} \le \frac{q!}{2} \cdot (2\alpha^2) \cdot \alpha^{q-2}~.
	$$
\end{lemma}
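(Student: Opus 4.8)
The plan is to reduce all three claims to a single polynomial moment bound, namely $\E{\abs{Y}^k} \le k!\,\alpha^k$ for every integer $k \ge 1$, and then to read off each statement from it. First I would record the defining property of the Orlicz norm: since $\alpha = \norm{Y}_{\psi_1}$, the definition in \eqref{eq:def_se} together with monotone convergence (as $C \downarrow \alpha$ the integrand $\exp(\abs{Y}/C)$ increases pointwise to $\exp(\abs{Y}/\alpha)$) yields $\E{\exp(\abs{Y}/\alpha)} \le 2$.

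Next I would expand this exponential as a power series. By the monotone convergence theorem, since all summands are nonnegative,
$$
\E{\exp(\abs{Y}/\alpha)} = \sum_{k=0}^{\infty} \frac{\E{\abs{Y}^k}}{\alpha^k\, k!} \le 2~.
$$
Subtracting the $k=0$ term (equal to $1$) shows that $\sum_{k \ge 1} \E{\abs{Y}^k}/(\alpha^k k!) \le 1$; since every term is nonnegative, each term is individually at most $1$, which is precisely the claimed bound $\E{\abs{Y}^k} \le k!\,\alpha^k$.

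With this bound in hand, the three conclusions follow immediately. For the moment generating function I would use $\exp(\lambda Y) \le \exp(\abs{\lambda}\abs{Y})$ and expand:
$$
\E{\exp(\lambda Y)} \le \E{\exp(\abs{\lambda}\abs{Y})} = \sum_{k=0}^{\infty} \frac{\abs{\lambda}^k \E{\abs{Y}^k}}{k!} \le \sum_{k=0}^{\infty} (\abs{\lambda}\alpha)^k = \frac{1}{1-\abs{\lambda}\alpha}~,
$$
where the geometric series converges because $\abs{\lambda}\alpha < 1$ for $\lambda \in (-1/\alpha, 1/\alpha)$. Taking $k=2$ gives $\E{Y^2} = \E{\abs{Y}^2} \le 2!\,\alpha^2 = 2\alpha^2$. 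Finally, for $q \ge 3$ we have $(Y)_+^q \le \abs{Y}^q$, so $\E{(Y)_+^q} \le q!\,\alpha^q = \frac{q!}{2}(2\alpha^2)\alpha^{q-2}$, as required.

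There is no substantial obstacle here; the only points needing care are the justification of the two interchanges of expectation and summation (both handled by Tonelli/monotone convergence since all terms are nonnegative) and the observation that the infimum defining $\norm{Y}_{\psi_1}$ need not be attained, so the endpoint bound $\E{\exp(\abs{Y}/\alpha)} \le 2$ at $C = \alpha$ must be obtained as the limit described above rather than read off directly from \eqref{eq:def_se}.
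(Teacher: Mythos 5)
Your proof is correct and follows essentially the same route as the paper's: reduce everything to the moment bound $\E{\abs{Y}^k} \le k!\,\alpha^k$ obtained by expanding $\E{\exp(\abs{Y}/\alpha)} \le 2$, then sum the geometric series for the moment generating function and specialize $k=2$ and $k=q$ for the remaining bounds. The only difference is that you make explicit two points the paper leaves implicit --- the monotone-convergence argument showing the bound at $C=\alpha$ holds even when the infimum defining $\norm{Y}_{\psi_1}$ is not attained, and the Tonelli justification for swapping expectation and summation --- which is a welcome tightening but not a different proof.
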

\begin{proof}
	Let $Z = \abs{Y}/\alpha$. First observe that we have
	$$
		\sum_{k \ge 0} \frac{\E{Z^k}}{k!} = \E{\exp(Z)} = \E{\exp(\abs{Y}/\alpha)} \le 2~,
	$$
	so,
	$$
		\sum_{k \ge 1} \frac{\E{Z^k}}{k!} \le 1~.
	$$
	This implies $\E{\abs{Y}^k} \le k! \alpha^k$ for all $k \ge 1$.
	Using this bound, we estimate the moment generating function
	\begin{align*}
		\E{\exp(\lambda Y)} &\le \E{\exp\left(\abs{\lambda}\abs{Y}\right)}\\
		&= \sum_{k \ge 0} \frac{\abs{\lambda}^k\E{\abs{Y}^k}}{k!}\\
		&\le 1 + \sum_{k \ge 1} \abs{\lambda}^k\alpha^k\\
		&= \frac{1}{1-\abs{\lambda}\alpha}~.
	\end{align*}
	The remaining bounds follow from $\E{\abs{Y}^k} \le k! \alpha^k$.
\end{proof}


\end{document}